\let\@fnsymbol\@arabic
\begin{document}

\title{The Manifold Density Function: An Intrinsic Method for the Validation of Manifold Learning}
\author{ 
  Benjamin Holmgren 
    \thanks{Department of Computer Science, Duke University}, \and 
  Eli Quist 
    \thanks{School of Computing, Montana State University}~
    \thanks{Department of Mathematical Sciences, Montana State University} , \and 
  Jordan Schupbach \samethanks[3], \and  
  Brittany Terese Fasy 
    \samethanks[2] 
    ~\samethanks[3], \and 
  Bastian Rieck 
    \thanks{Helmholtz Munich and Technical University Munich}
}
\date{}

\maketitle

\begin{abstract}%

We introduce the \ourkf, which is an intrinsic method to validate manifold learning techniques.
Our approach adapts and extends Ripley's \ripkf,
and categorizes in an unsupervised setting the extent to which an output of a manifold learning
algorithm captures the structure of a latent manifold.
Our \ourkf generalizes to
broad classes of Riemannian manifolds.
In particular, we extend the \ourkf to general two-manifolds using
the Gauss-Bonnet theorem, and demonstrate that the \ourkf 
for hypersurfaces is well approximated using the first Laplacian eigenvalue. 
We prove desirable convergence and robustness properties.
\end{abstract}

\textbf{Keywords} ~~ manifold learning, algorithm validation, Ripley's $K$-function, hypersurfaces

\section{Introduction}\label{sec:intro}
Manifold learning is extremely well-studied in the machine learning,
computational geometry, and computational topology literatures \cite{hensel2021, huo2008, Zheng2009}.
The uses of manifold learning, while generally categorized as a means for nonlinear
dimensionality reduction \cite{belkin2003, gtm1998, hessian, roweis2000lle, tenenbaum2000isomap},
span application areas including shape recognition \cite{jin2009computing},
image recognition \cite[Ch.~4]{zheng2009statistical, carlsson2008local}, and motion planning \cite{dirafzoon2016geometric}.
Given the large number of techniques and their practical significance,
natural questions concerning their \emph{validation} are raised.
That is, when has an algorithm effectively learned manifold structure
within data?
Moreover, how does one rigorously compare the performance of different
manifold learning algorithms?
When validating dimensionality reduction algorithms, it is especially
pertinent to work in an unsupervised setting. In this paper, we thus
assume little or no knowledge of the ground truth in data (such as
knowledge of true geodesic distances).
Validation in this unsupervised setting is called \emph{intrinsic} validation,
as observations are reliant only on core properties of the presented data.

Despite its prevalence and pivotal role in data analysis,
validation techniques specific to manifold learning have been largely
unexplored.
A manifold analog to Precision-Recall is given in the experimental analysis of \cite[Section 3.5]{geodesicPrecisionRecall}.
However, this method requires knowledge of geodesics on the latent manifold and can therefore be considered extrinsic
(even though the manifold learning method proposed in \cite{geodesicPrecisionRecall} is unsupervised).
Our work is also related to extensions of Ripley's \ripkf for spaces more complicated than $\R^d$,
such as the extension
over $\mathbb{S}^d$ given in \cite{moller2016functional}. More recently, Ward et al. develop nonparametric
estimation of intensity functions point processes over Riemannian manifolds
\cite{ward2023nonparametric}. However, such methods again rely on knowledge of a ground truth
geodesic distance, which is infeasible for most contexts in unsupervised manifold learning.
An intrinsic approach is given in \cite{vonrohrscheidt2023topological} which uses persistent local homology
to detect singularities in point cloud data.
This paper takes a different perspective, giving a global assessment of the manifold properties of data
rooted in differential geometry. The advantage of the proposed approach is that it allows for
intrinsic validation while making theoretical robustness and complexity
guarantees on large classes of Riemannian manifolds.

\paragraph*{Contributions}
Intuitively, to say that discrete data exhibits ``manifold-like'' properties
could naturally be taken to mean that data locally resembles a uniform sample
in $\R^d$. This manuscript makes such a notion rigorous.
We examine local neighborhoods within a point cloud, scoring
how closely each neighborhood resembles a uniform sample in $\R^d$
without any knowledge of the ground truth manifold.
Our method is based on the natural idea that geodesic balls on
a manifold $\X$ should grow proportionately to balls in $\mathbb{R}^n$,
up to the curvature of $\X$.
In this way, our work can be thought of as
a manifold analog to the silhouette coefficient for clustering \cite{rousseeuw1987}.
Specifically, taking inspiration from Ripley's \ripkf \cite{kfcn}, we define a density estimator for
manifolds using principles from differential geometry, most notably the
Gauss-Bonnet theorem and hypersurface inequalities relating scalar
curvature to first Laplacian eigenvalue.
In particular, this paper presents the following:
\begin{enumerate}
    \item We introduce the \ourkf, $\fcntheo{\X}$, which maintains desirable convergence properties
    and is exactly computable when the scalar curvature is known.
    \item A robust, efficiently computable approximation for $\fcntheo{\X}$
        on two-manifolds with provable accuracy using the Euler characteristic
        when the latent manifold structure is unknown.
    \item A robust, efficiently computable approximation for $\fcntheo{\X}$ on
    hypersurfaces with provable accuracy using the first Laplacian eigenvalue
    when the latent manifold structure is unkown.
\end{enumerate}
The \emph{impact} of our work is two-fold, providing both an intrinsic manifold
validation method and a way to
assess the uniformity of data.

\section{Background}\label{sec:prelim}
We begin by providing the requisite definitions from geometry, topology, and statistics.

\subsection{Fundamental Definitions From Topology and Geometry}\label{sec:fundamental}

We begin with the definitions from topology and geometry.
For an additional discussion,
we direct readers to \appendref{def-diff}.
Recall that a metric space $(\X,d)$ is written as a set $\X$ paired with a distance metric
$$d:\X \times \X \to \R_{\geq 0}.$$
We write $\ball{d}{x}{r}$ to denote the open
ball of radius~$r$ centered at $x \in (\X,d)$.
For ease of notation, we often write $\X$, with the distance assumed.
We say that $\X$ is an \emph{$n$-dimensional
manifold} if,
at every point in $\X$, there exists a neighborhood homeomorphic
to the unit ball in $\R^n$.

In particular, we are interested in Riemannian manifolds in this work. Let $\X$ be a smooth manifold, and let
$d:\X \times \X \to \R_{\geq 0}$ be a distance. We say that $(\X, d)$ is a \emph{Riemannian manifold}
if $d$ is a Riemannian metric.
In this case, $\X$ has a well-defined Lebesgue measure~$\mu$ \cite[\textsection
VII.1]{amann2005analysis}, making $(\X,d,\mu)$ a metric measure space.
For example,
shortest-path \emph{geodesic} distances on $\X$, which we
denote by $d_{\X}$, is a Riemannian metric.
Core to our methods are uniform samples of a manifold:

\begin{definition}[Uniformly Sampled Manifold Representation]\label{def:regsamp}
    Let $\X$ be a manifold with geodesics defined.
    The geodesic distance function on $\X$ is $d_{\X}: \X \times \X \to \R_{\geq 0}$
    where~$d_{\X}(x,y)$ denotes the geodesic
    distance between $x$ and $y$.
    For a uniform, finite sample $S \subset \X$,
    we can encode $S$ by pairwise distances
    within a distance matrix
    $D \in M^{|S|\times|S|}(\R_{\geq 0})$, with elements $d_{\X}(x,y) ~\forall x,y \in S$.
\end{definition}
Sometimes, instead of
having $D$ itself, we might have an approximation of it (based on neighborhood
graphs, for instance), denoted
$\widetilde{D}$ throughout this manuscript.
In this paper, properties of balls in $\R^n$ provide a natural backdrop to study more interesting spaces.
It is well known that in Euclidean space,
an open ball of radius $r$ under the standard Euclidean metric (denoted
$\ball{2}{x}{r}$) has \emph{volume}
\begin{equation}\label{eq:ballvol}
    \vol(\ball{2}{x}{r}) := \frac{\pi^{\frac{n}{2}}}{\Gamma(\frac{n}{2} + 1)} r^n ,
\end{equation}
where $\Gamma$ is the gamma function.
Let $\X$ be a Riemannian manifold.
We write the Gaussian curvature at a point $x \in \X$ as $\curv(x)$, and
the scalar curvature at $x \in \X$ as $\Sc(x)$.\camera{neither Gaussian
curvature nor scalar are actually defined. do we do this in the appendix? if so,
perhaps point there.}
Note that for two-manifolds, $\Sc(x) = 2\curv(x)$.
We often compare the volume of balls in $\R^n$ with balls on curved manifolds,
whose relationship is expressed by the following ratio:

\begin{theorem}[Relation Between Volume and Curvature]\label{thm:ratio}
    Assume $(\X, d_{\X})$ is an $n$-dimensional Riemannian manifold.
    Let $x \in \X$ be a point with scalar curvature~$\Sc(x)$.
    The ratio between the volume of the
    ball~$\ball{d_{\X}}{x}{r} \subseteq \X$ and
    the volume of the Euclidean ball~$\ball{2}{0}{r} \subset \R^n$ centered at zero
    is:
    \begin{equation*}
        \frac{\vol(\ball{d_{\X}}{x}{r})}{\vol(\ball{2}{0}{r})}
        = 1 - \frac{\Sc(x)\cdot r^2}{6(n+2)} + o(r^2).
    \end{equation*}
\end{theorem}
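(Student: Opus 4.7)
The plan is to work in Riemannian normal coordinates centered at $x$, where the exponential map identifies a neighborhood of $0 \in T_x\X$ with a neighborhood of $x$ in $\X$, and then Taylor-expand the Riemannian volume form in those coordinates. Under $\exp_x$, the geodesic ball $\ball{d_{\X}}{x}{r}$ pulls back to the Euclidean ball of radius $r$ in $T_x\X \cong \R^n$ for $r$ smaller than the injectivity radius, so the only thing left to understand is how the volume form differs from the Euclidean volume form.

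The first key step is to record the standard expansion of the metric in normal coordinates: writing $y = (y^1,\dots,y^n)$, one has
\begin{equation*}
g_{ij}(y) = \delta_{ij} - \tfrac{1}{3} R_{ikjl}(x)\, y^k y^l + O(|y|^3),
\end{equation*}
where $R_{ikjl}$ is the Riemann curvature tensor at $x$ (this is proved by differentiating the Jacobi equation twice along radial geodesics, and is the classical ingredient I would quote rather than re-derive). Taking the determinant and the square root then yields
\begin{equation*}
\sqrt{\det g(y)} = 1 - \tfrac{1}{6} R_{kl}(x)\, y^k y^l + O(|y|^3),
\end{equation*}
where $R_{kl}$ is the Ricci tensor, obtained by contracting the Riemann tensor. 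This is where the combinatorial factor $\tfrac{1}{6}$ appears, and it is the step most prone to sign and index errors, so I would treat it as the main technical obstacle.

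The second step is to integrate. Since $\vol(\ball{d_{\X}}{x}{r}) = \int_{|y|<r} \sqrt{\det g(y)}\, dy$, the leading term reproduces $\vol(\ball{2}{0}{r})$. For the quadratic correction I would use that, by rotational symmetry of the Euclidean ball,
\begin{equation*}
\int_{|y|<r} y^k y^l\, dy = \frac{\delta_{kl}}{n+2}\, \vol(\ball{2}{0}{r})\, r^2,
\end{equation*}
so contracting with $R_{kl}(x)$ produces the trace $\sum_k R_{kk}(x) = \Sc(x)$, yielding
\begin{equation*}
\vol(\ball{d_{\X}}{x}{r}) = \vol(\ball{2}{0}{r})\left(1 - \frac{\Sc(x)\, r^2}{6(n+2)}\right) + o(r^{n+2}).
\end{equation*}

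Finally, I would divide both sides by $\vol(\ball{2}{0}{r})$, absorbing the remainder into $o(r^2)$, to obtain the claimed ratio. The only subtlety to mention is the domain of validity: the identification via $\exp_x$ is a diffeomorphism only for $r$ below the injectivity radius at $x$, but since the statement is an asymptotic expansion as $r\to 0$, this is automatic.
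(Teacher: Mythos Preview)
Your outline is correct and is exactly the classical argument: expand the metric in Riemannian normal coordinates, pass to the volume density $\sqrt{\det g} = 1 - \tfrac{1}{6}R_{kl}\,y^ky^l + O(|y|^3)$, and integrate using the moment identity $\int_{|y|<r} y^ky^l\,dy = \tfrac{\delta_{kl}}{n+2}\,r^2\,\vol(\ball{2}{0}{r})$ to contract the Ricci tensor to the scalar curvature. The paper itself does not supply a proof of this theorem; it simply quotes the result from standard references (Gallot--Hulin--Lafontaine, p.~168, and Chavel, p.~317), and the proof you have sketched is precisely the one found there.
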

\thmref{ratio} allows us to inspect the relationship between volume and
curvature in Riemannian manifolds, and is proven in classical textbooks,
including~\cite[p.~168]{gallot2004} and~\cite[p.~317]{chavel}.
Likewise, the seminal Gauss-Bonnet theorem describes
the relationship between the total curvature of~$\X$ and its topology.

\begin{theorem}[Gauss-Bonnet\cite{lee2019intro}]\label{thm:gb}
    Let $\X$ be a compact Riemannian two-manifold, and let $x \in \X$. Then, the total curvature
    on $\X$ is $\int_{\X} \curv dA = 2 \pi \chi(\X)$,
    where $dA$ is the area element on $\X$ and $\chi(\X)$ is the Euler characteristic.
\end{theorem}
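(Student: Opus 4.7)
The plan is to prove the Gauss-Bonnet theorem via the classical triangulation argument, reducing the global statement to a sum of local angle-excess formulas combined with Euler's combinatorial formula for a triangulated surface. First, I would fix a geodesic triangulation $\{T_i\}_{i=1}^{F}$ of $\X$ with $V$ vertices and $E$ edges, refining if necessary so that each $T_i$ is contained in a normal coordinate neighborhood and has geodesic sides; existence of such a triangulation for any smooth compact surface is a classical result (due to Cairns and Whitehead) that I would invoke rather than reprove.

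Next, I would apply the local Gauss-Bonnet formula to each geodesic triangle $T_i$ with interior angles $\alpha_i^{1}, \alpha_i^{2}, \alpha_i^{3}$:
\begin{equation*}
    \int_{T_i} \curv \, dA = \bigl(\alpha_i^1 + \alpha_i^2 + \alpha_i^3\bigr) - \pi.
\end{equation*}
The standard derivation proceeds by choosing an orthonormal frame on $T_i$, writing down the Levi-Civita connection $1$-form $\omega$, applying the structure equation $d\omega = -\curv \cdot dA$, and invoking Stokes' theorem together with the vanishing of geodesic curvature along the geodesic sides; I would cite this local result rather than reprove it. Summing over all $F$ triangles, additivity of the integral yields $\int_{\X} \curv \, dA$ on the left. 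On the right, grouping the angles by vertex and using the fact that the angles meeting at any interior vertex of a closed surface sum to $2\pi$, the total angle sum equals $2\pi V$; since each triangle has three edges and each edge is shared by exactly two triangles, $3F = 2E$. Combining,
\begin{equation*}
    \int_{\X} \curv \, dA = 2\pi V - \pi F = 2\pi V - (2\pi E - 2\pi F) = 2\pi (V - E + F) = 2\pi \chi(\X),
\end{equation*}
where the penultimate equality uses $\pi F = 2\pi E - 2\pi F$ (a restatement of $3F = 2E$) and the final equality is the combinatorial definition of $\chi(\X)$ for a triangulated surface.

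The principal non-trivial ingredient is the local Gauss-Bonnet lemma for a single geodesic triangle; the reduction of the global statement is then essentially bookkeeping once that lemma is in hand. A conceptually different route would realize $\curv \, dA$ as a differential form representing the Euler class via Chern-Weil theory and conclude by integrating against the fundamental class, but the triangulation approach is more self-contained and aligns better with the elementary geometric flavor of the surrounding exposition.
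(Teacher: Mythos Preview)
Your proof is correct and follows the standard triangulation argument for the global Gauss--Bonnet theorem. However, there is nothing in the paper to compare it against: the paper does not prove \thmref{gb} at all. The theorem is stated with a citation to \cite{lee2019intro} and invoked as a classical result, in the same way that \thmref{ratio} is quoted from \cite{gallot2004} and \cite{chavel}. The paper's contribution lies in \emph{applying} Gauss--Bonnet (in \lemref{avg} and \corref{agg-curved}) to rewrite the average scalar-curvature correction in terms of $\chi(\X)$, not in reproving the theorem itself.

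Your argument is self-contained and appropriate for the level of the surrounding exposition; the bookkeeping with $3F=2E$ is clean. The only thing worth flagging is that you invoke but do not prove the local angle-excess formula and the existence of a geodesic triangulation, so your writeup is also ultimately a reduction to cited lemmas---which is entirely reasonable, and no less than what the paper does.
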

In higher dimensions, generalizations of the Gauss-Bonnet theorem become much more intricate and
are only defined for even dimensions.
Hence, in dimensions larger than two, we use properties of
the total mean curvature of submanifolds of $\R^n$, given in \cite{chen1984}.
In particular, we relate scalar
curvature, mean curvature, and the second fundamental form using the
Gauss-Codazzi equations for hypersurfaces; see \appendref{diff-eqn} for
definitions.

\begin{theorem}[Gauss-Codazzi Equations for Hypersurfaces \cite{haizhong1996}]\label{thm:gc-eqn}
    Let $\X$ be an $n$-dimensional Riemannian manifold immersed in $\R^m$.
    Then, the scalar curvature $\mathcal{S}$
    of $\X$ at a point $p \in \X$ is expressed by $\mathcal{S} = H^2 - ||h||^2$,
    where $H$ is the mean curvature of $\X$ at $p$, and $h$ is the second fundamental
    form of $\X$ at $p$; see \defref{lengthform}.
\end{theorem}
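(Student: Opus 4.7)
The plan is to derive the scalar-curvature identity by contracting the Gauss equation for a hypersurface in flat ambient space. Because $\R^m$ has vanishing Riemann tensor, the Gauss equation at $p \in \X$ reduces (in a local orthonormal frame $e_1,\dots,e_n$ of $T_p\X$) to
\begin{equation*}
    R_{ijkl} = h_{ik}h_{jl} - h_{il}h_{jk},
\end{equation*}
so the intrinsic curvature of $\X$ is entirely encoded by its second fundamental form $h$. This is the content of the first Gauss-Codazzi equation restricted to a hypersurface; I would either cite it from a standard reference or include a one-line derivation from $\nabla_X Y = \widetilde{\nabla}_X Y - h(X,Y)\nu$, where $\nu$ is the unit normal and $\widetilde{\nabla}$ the flat ambient connection.

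Next I would trace the Gauss equation twice with the metric $g$ to extract the scalar curvature. Contracting $i$ with $k$ produces the Ricci tensor $\mathrm{Ric}_{jl} = (\mathrm{tr}\, h)\, h_{jl} - h_{im}g^{im}{}\!\cdots$; more carefully, a second contraction with $g^{jl}$ yields
\begin{equation*}
    \mathcal{S} \;=\; g^{ik}g^{jl}R_{ijkl} \;=\; (g^{ik}h_{ik})(g^{jl}h_{jl}) - g^{ik}g^{jl}h_{il}h_{jk}.
\end{equation*}
The first term is $(\mathrm{tr}\, h)^2 = H^2$ under the convention that $H := \mathrm{tr}\, h$ (the unnormalized mean curvature), and the second term is precisely the squared Frobenius norm $\|h\|^2 = h_{ij}h^{ij}$ of the second fundamental form. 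Combining gives $\mathcal{S} = H^2 - \|h\|^2$.

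The main obstacle is purely a matter of conventions rather than mathematical content: the formula $\mathcal{S} = H^2 - \|h\|^2$ only holds for a specific normalization of the mean curvature (the trace, not the trace divided by $n$) and a specific sign choice for the second fundamental form. I would therefore state the conventions up front, point the reader to \appendref{diff-eqn} for the precise definition of $h$ used here, and note that with the alternate normalization $H := \tfrac{1}{n}\mathrm{tr}\, h$ one recovers the equivalent form $\mathcal{S} = n^2 H^2 - \|h\|^2$. Since the result itself is classical, I would then defer to \cite{haizhong1996} for the full derivation and present only the contraction argument above as justification.
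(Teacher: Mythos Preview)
The paper does not supply its own proof of this theorem: it is stated as a classical result and attributed to \cite{haizhong1996}, with the appendix providing only the definitions of $h$ and $\|h\|^2$. Your proposal therefore goes beyond what the paper does, and the argument you outline---writing the Gauss equation $R_{ijkl}=h_{ik}h_{jl}-h_{il}h_{jk}$ in a flat ambient space and contracting twice with $g$---is exactly the standard derivation and is correct. Your care with conventions (taking $H:=\mathrm{tr}\,h$ unnormalized, noting the alternate $n^2H^2-\|h\|^2$ form) is appropriate and matches the form the paper uses downstream. One small point: the theorem title says ``for Hypersurfaces'' while the statement allows immersion in $\R^m$; your argument implicitly fixes codimension one by choosing a single unit normal $\nu$, which is the intended setting, so you might flag that restriction explicitly.
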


\subsection{Fundamental Definitions from (Spatial) Statistics} \label{subsec:statsBackground}

Let $(\X,d,\mu)$ be a metric measure space; that is, $\X$ is a topological space,
$S \subset \X$, 
$d \colon S \times S \to \R$ is a metric, and $\mu$ is a measure over a suitable
collection of subsets of $\X$. Then, let $X = \{X_i\}_{i=1}^m$ be $m$
samples drawn iid from the uniform distribution over $S$.
Let $A$ be a measurable subset of $S$, and, for each~$i$, let $A_i$ be the
random variable that is one if $X_i \in A$ and zero otherwise.  Since each $X_i$ is drawn iid from the
uniform distribution over $S$, we know that
in any realization of $X$, the expected number of points landing in $A$ is:
\begin{equation}\label{eq:expected-num-points}
    \E \left[ \#(X \cap A) \right] = \sum_{i=1}^m \E \left[ A_i \right] =
    \sum_{i=1}^m \P \left[ X_i \in A\right]= \frac{m \mu(A)}{\mu(S)}.
\end{equation}
Similar to the Buffon needle problem (e.g.,~\cite[pp.~71-2]{santalo}), we can
use this property to estimate geometric properties of $S$ (and of $A$) using the law
of large numbers.

\begin{example}[Estimating Areas]\label{ex:area-sampling}
    Consider $A \subseteq S \subset \R^2$, such that both $S$ and $A$ are Lebesgue-measurable.
    Then, for $m$ sufficiently large, the number of points that land in
    $A$ is
    approximately the ratio of the area of~$A$ to the area of $S$:
    \begin{equation}
        \left| A \cap X \right|
        \approx \frac{m \cdot  \area\left( A \right) }{ \area\left(\X \right)}.
    \end{equation}
    Multiplying both sides by $\vol(\X)$, we see that
    $\area(A) \approx \frac{m_A}{m}\cdot \area(\X)$, where $m_A$ denotes
    the number of sample points that land in $A$.  So, if we know $\area(S)$, we
    can use a realization of $X$ to estimate $\area(A)$. This generalizes to Reimannian manifolds.
\end{example}

We note here that this only holds if the points are sampled iid from the uniform
distribution over $S$.  If the points are sampled from some other distribution,
then this area estimation technique would not work for all measurable subspaces~$A$.
Additionally, the above construction is related to Ripley's
\ripkf, which assesses the homogeneity of point processes.
Let $S \subset \R^d$ be compact and Lebesgue-measurable, containing the subset
$S_R = \{x \in S| d_2(x, \partial S) \leq R \}$. In particular, our
formulation is related to the special case when a point process is drawn iid from a
uniform distribution, which results in the \ripkf $\K:[0,R] \to \R$ defined by
\[\K_{p}(r) = \vol(\ball{L_2}{p}{r}),\]
for a range of radii $[0,R]$ and a point $p$ sampled uniformly from $S \subset \R^d$.
The theoretical \ripkf is estimated with empirical versions; for more details and a rigorous definition
we refer readers to \appendref{spatial}.

\subsection{Manifold Learning and Validation} \label{subsec:manifoldLearningBackground}

Having given the definitions relevant to manifolds themselves, we lay out
the paradigm on manifold learning and
its validation.
Broadly speaking, manifold learning operates on the assumption that data is sampled
uniformly from a manifold $\X$, and attempts to learn $\X$ by
approximating geodesic distances~\cite{Fefferman16a, Narayanan10a}.
Because of the variety in approaches, there is no agreed-upon definition of a manifold learning algorithm.
We give a general definition of \emph{manifold learning} that will be used throughout this work,
which is informed by the core structures of manifolds themselves.
That is, we consider arbitrary manifold
learning methods as a map from set of data to a distance matrix:

\begin{definition}[Manifold Learning]
    A \emph{manifold learning} model $F\colon S \to M^{|S|\times|S|}(\R_{\geq 0})$
    is a map from a set of data $S$ (assumed to be sampled uniformly from an $n$-dimensional
    ambient manifold) to an $|S|\times|S|$ real-valued distance matrix. We can think of $F$ as a map
    approximating geodesic distances, so we typically write $F(S) = \widetilde{D}$, and we write
    $D$ as the true geodesic distance matrix for all points of $S$.
\end{definition}

\begin{figure}
    \centering
    \begin{subfigure}{0.19\textwidth}
        \centering
        \includegraphics[width=0.99\textwidth]{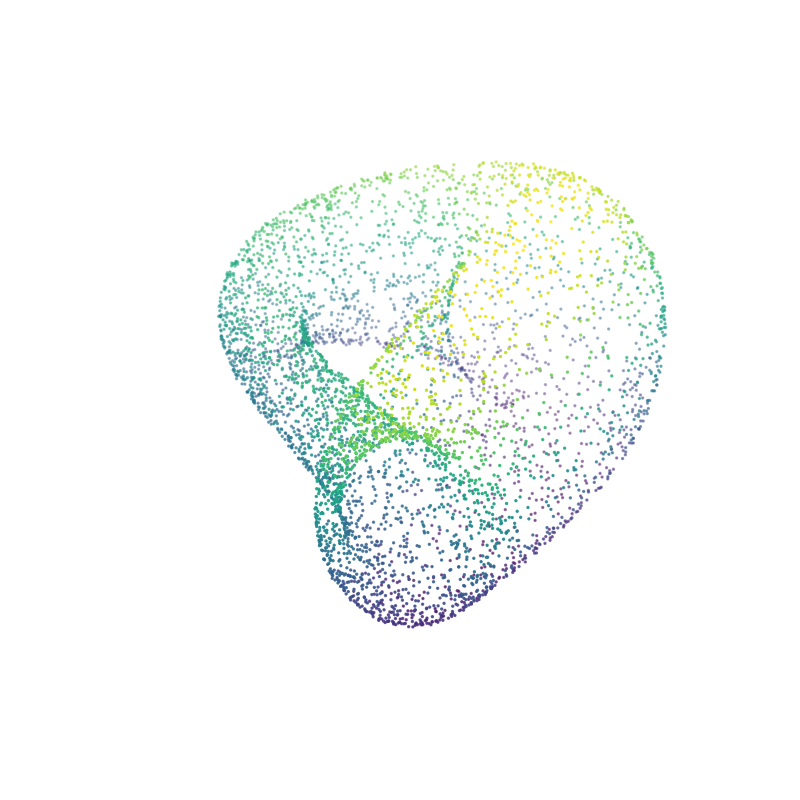} 
    \end{subfigure}
    \begin{subfigure}{0.19\textwidth}
        \centering
        \includegraphics[width=0.99\textwidth]{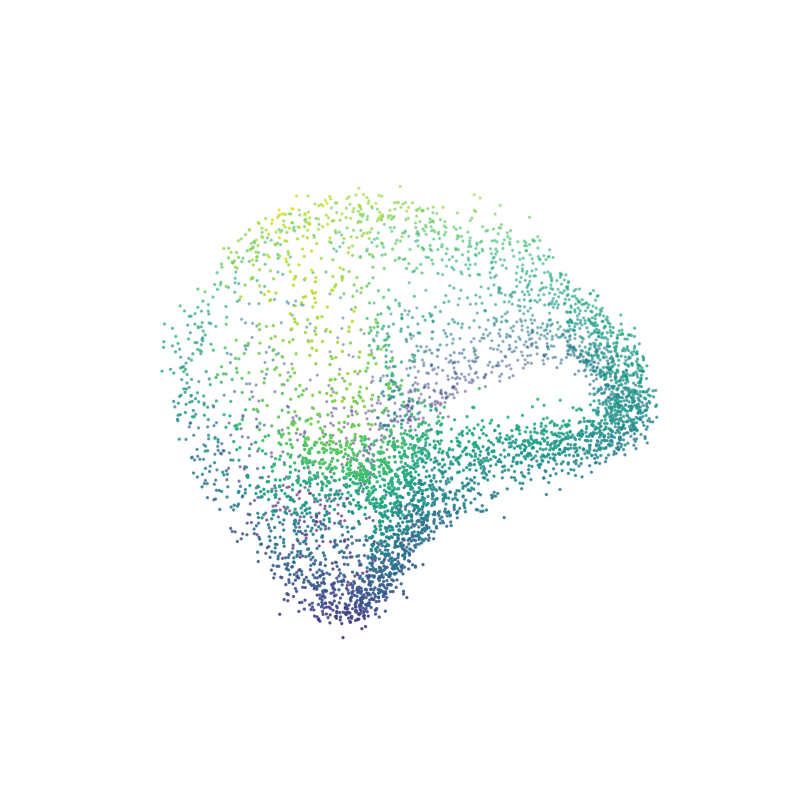} 
    \end{subfigure}
    \begin{subfigure}{0.19\textwidth}
        \centering
        \includegraphics[width=0.99\textwidth]{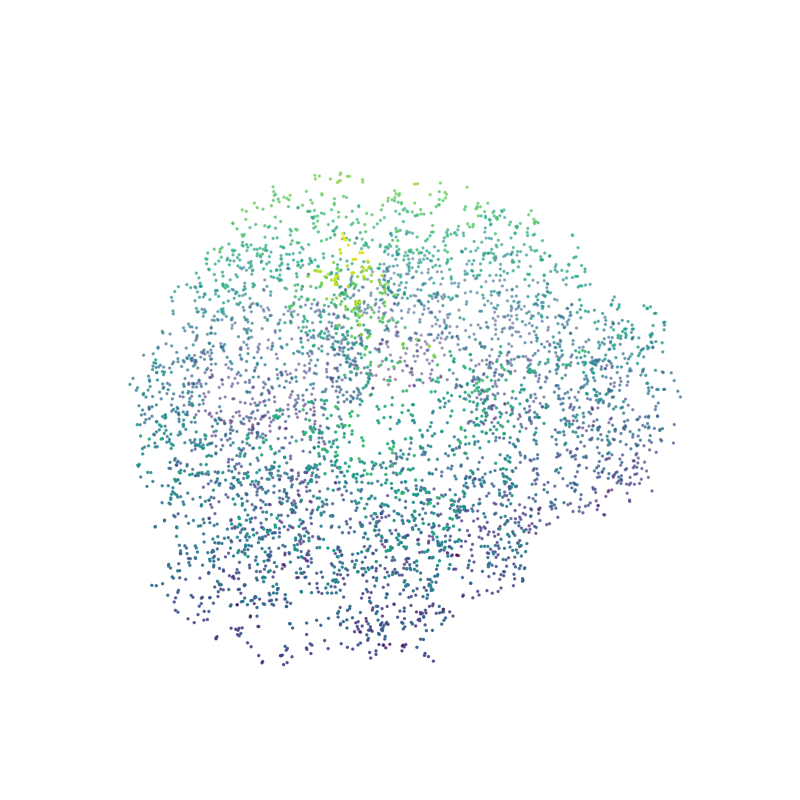} 
    \end{subfigure}
    \begin{subfigure}{0.19\textwidth}
        \centering
        \includegraphics[width=0.99\textwidth]{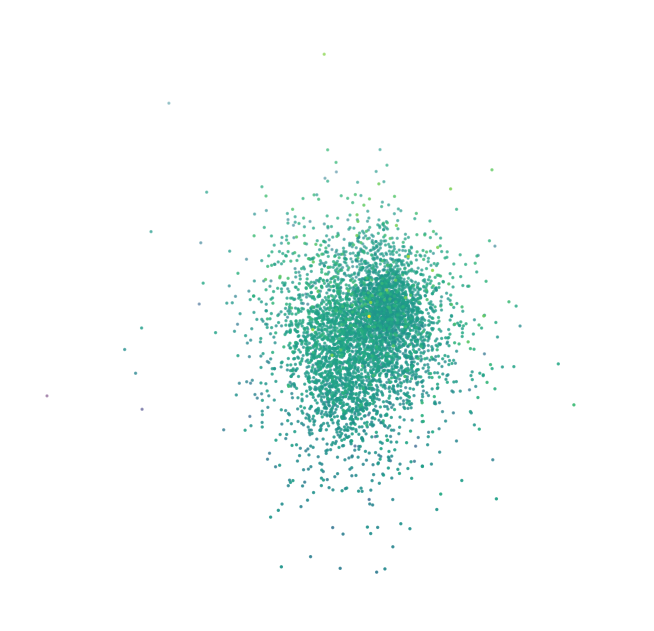} 
    \end{subfigure}
    \begin{subfigure}{0.19\textwidth}
        \centering
        \includegraphics[width=0.99\textwidth]{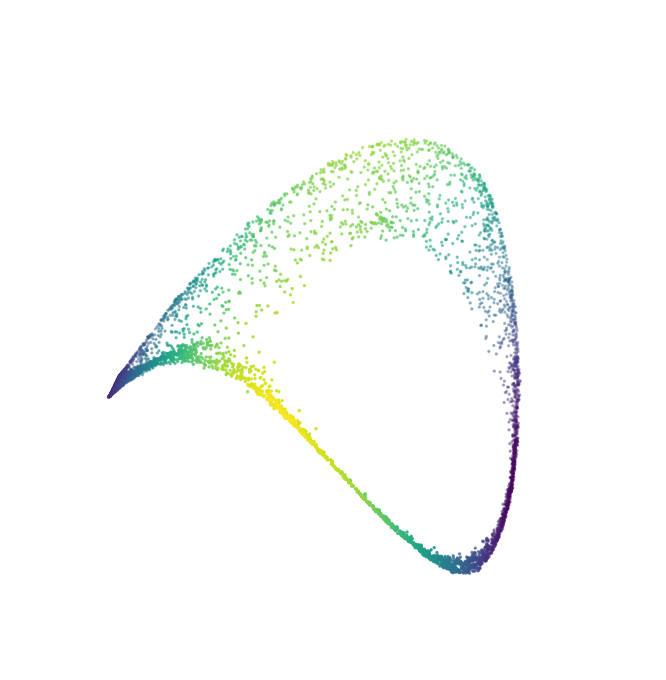} 
    \end{subfigure}
    \caption{Constructing 3D embeddings of a Klein bottle lifted in ten dimensions, with varying success.
    From left to right, an output from PCA, ISOMAP, t-SNE, LLE, and spectral embedding. From visual inspection,
    PCA appears to have performed best in this example, followed by ISOMAP.
    }
    \label{fig:klein}
\end{figure}

It should be noted that many manifold learning
techniques compute an \emph{embedding} rather than a distance matrix, wherein geodesics are implicit.
In such cases, we can compute a geodesic distance matrix
by constructing a neighbor graph in the space of
the embedding and computing graph distances in a similar manner to the ISOMAP
algorithm of \cite{tenenbaum2000isomap}.
We refer the readers to \cite{huo2008, Zheng2009,hensel2021} for descriptions of
many different manifold learning techniques and frameworks.
Generally, our paper considers manifold learning in an
\emph{unsupervised} setting, assuming no knowledge
of the ground truth geodesic distances, $D$.
In this setting, validation must be done
using properties \emph{intrinsic} to the ``learned manifold'' approximated by $\widetilde{D}$ alone.
In general, intrinsic validation approaches have been quite successful in other areas of machine learning, for example
silhouette coefficients for clustering \cite{rousseeuw1987}, which rely only on properties intrinsic to clusters
themselves.

\section{An Intrinsic Manifold Validation Method} \label{sec:flatKFunc}
Let $(\X, d_{\X})$ be a Riemannian $n$-manifold.
Let $X \subset \X$ be a uniform sample on $\X$
By \thmref{ratio}, for $x\in \X$ and $r>0$,
we can write the volume of a ball~$\ball{2}{0}{r}$ in $\R^n$ as a function of
the scalar curvature at $x$ and
the volume of the ball~$\ball{d_{\X}}{x}{r}$ in $\X$:
\begin{equation}\label{eq:euclidballest}
    \vol(\ball{2}{0}{r})
    = \frac{\vol(\ball{d_{\X}}{x}{r})}
           {(1 - \frac{\Sc(x)\cdot r^2}{6(n+2)} + o(r^2))}.
\end{equation}
Furthermore, if $\vol{\X}$ is known, we can estimate
By \eqref{expected-num-points} and the law of large numbers, we know that if
$|X|$ is sufficiently large, then
the number of points landing in any measurable region $A \subset
\X$ is approximately $\frac{m \vol(A)}{\vol(\X)}$.  Thus, if we know $\vol(\X)$, we
can estimate $\vol(A)$, and vice versa.
Putting these two observations together, we define the \ourkf:
\begin{definition}[\ourkF And Its Estimators]\label{def:curve-disagg}
    Let $(\X, d_{\X})$ be a Riemannian~$n$-manifold, and
    let $X \subset \X$ be a uniform sample on $\X$. Let $p \in \X$, and let $R >
    0$ be sufficiently small.
    Then, the \emph{\ourkf} is the function
    $\fcntheo{\X} \colon [0,R] \to \R$
    defined by
    \begin{equation}
        \fcntheo{\X}(r)
        := \frac{\vol(\ball{2}{0}{r})}{\vol(\X)}
    \end{equation}
    with local estimator $\fcnest{p} \colon [0,R] \to \R$ defined by
    \begin{equation}
        \fcnest{p}(r)
        := \left(1 - \frac{\Sc(p)\cdot r^2}{6(n+2)}
        \right)^{-1} \cdot \frac{1}{|X|}\sum_{x \in X} I(x \in B_r(p))
    \end{equation}
    and an aggregated estimator $\fcnest{} \colon [0,R] \to \R$ defined by
    \begin{equation}
        \fcnest{}(r)
        := \frac{1}{|X|^2} \sum_{p \in X}
                \left( \left(1 - \frac{ \Sc(p) \cdot r^2}{6(n+2)}  \right)^{-1} \cdot
                \sum_{x \in X} I(x \in B_r(p)) \right)
    \end{equation}
\end{definition}

If $\X$ is flat, then $\Sc(p)=0$, simplifying $\fcnest{p}$ and $\fcnest{}$.
That is, we remove the first term of $\fcnest{p}$ and $\fcnest{}$, giving
$\fcnest{p}(r) := \frac{1}{|X|}\sum_{x \in X} I(x \in B_r(p))$ and
$\fcnest{}(r) := \frac{1}{|X|^2} \sum_{p \in X} \sum_{x \in X} I(x \in B_r(p))$.
Notice that the \ourkf is indeed related to Ripley's \ripkf in the special case for uniform
samples, in which case the two vary by only the factor $1/\text{Vol}(\X)$ in the theoretical
setting and $1/|X|$ in the empirical formulation.

\subsection{The Manifold Score}
We call the $L_2$-distance between a function and its estimate, for example,
$|| \fcntheo{\X} - \fcnest{} ||_2$, the \emph{error} of the estimate.
Since the estimators drop an $o(r^2)$ term from \eqref{euclidballest},
we note that $\fcnest{p}$ and $\fcnest{}$ are biased
estimators of $\fcntheo{}$; that is, for all $r \geq 0$, we have~$\fcnest{p}(r) \geq
\fcntheo{}(r)$ and there exists cases where equality doesn't hold.

Finally, we define a score
by taking a simple normalization of the error of
an estimator of $\fcntheo{\X}$.
\begin{definition}[Manifold Score]\label{def:score}
    Let $\X$ be a compact Riemannian manifold, and $X \subset \X$ a finite sample.
    The \emph{local manifold score} for $X$ as representing the manifold $\X$
    at~$p \in X$ is
    \begin{equation}
        \score{p}(X)
        = 1 - \frac{1}{|X|}|| \fcntheo{\X} - \fcnest{p} ||_2,
    \end{equation}
    and the \emph{aggregated manifold score} is
    \begin{equation}
        \score{}(X)
        = 1 - \frac{1}{|X|}|| \fcntheo{\X} - \fcnest{} ||_2.
    \end{equation}
\end{definition}
This score indicates the extent to which a sample resemples
a uniformaly distributed set of points  on a manifold.

\camera{
\brittany{I think the following comes out of the theorems in the next sections?}
\todo{somewhere note the normalization between
zero and one, with zero being ...}
Let $\X$ be a compact Riemannian manifold, and let $X \subset \X$
be a finite uniform sample on $\X$.
\bastian{%
  I think this point needs to be stressed more often; this is the basis
  of the whole approach, we should mention this more often in fact!%
}
As mentioned in \secref{intro}, intuitively one would expect that a ``good''
output $\dm{} = F(X)$
of a manifold learning model
$F$ should capture the uniformity of $X$, and a poor output might be very nonuniform.
Thus, for \emph{flat} manifolds, comparing an empirical and theoretical \ourkf using the
standard error
provides an intrinsic validation method for manifold learning.

We assume knowledge of only this distance matrix $\dm{}$ and its
dimension, and based on these two core properties of the data alone assess
how well the output $F$ resembles a manifold.
With a slight abuse of notation, we write both $p \in X$ or
$p \in \dm{}$ to denote a point in the sample $X$.
%
Given $\dm{}$, there are two directions: we can either compute the aggregated or
the local \ourkf.
The local and aggregated \ourkfs each have their own desirable properties.
In particular, any sample $X \subset \X$ admits a family of \ourkfs, and the aggregated setting allows
a \emph{single} summary of this family, whereas the local setting is informative of its full range and variance.
For both, we compare functions using the
$L_2$-distance.\brittany{Maybe define this in the background?}
}
We note that \ourkf is well-defined and normalized, leaving verification
for the appendix. From this structure, a manifold score of one indicates a perfectly
uniform sample,
and a manifold score of one indicates a perfectly nonuniform sample, likely differentiating
between an effective and ineffective manifold-learning scheme.
\label{sec:score}
\subsection{\ourkFs For Manifolds With Curvature} \label{sec:kfuncWithCurv}
We now introduce a primary theoretical result of the paper, adapting
the \ourkf to manifolds with curvature. We begin with a brief discussion of
local \ourkfs and their shortcomings.
Related extensions for Ripley's $K$-function to more general settings have been
attempted before, and include extensions for spheres \cite{lawrence2018point, ward2021analysis}, 
and for fibers \cite{sporring2019}. We present the first extension of its kind
for general Riemannian two-manifolds and hypersurfaces, which in the aggregated setting
can be made entirely intrinsic.

At its core, our approach is designed with the specific intent
to assess the uniformity of a sample lying on arbitrary manifolds
rather than within Euclidean space alone. By its very formulation,
for an arbitrary point $p$ on a Riemannian manifold $\X$, the local \ourkf
is defined in terms of the scalar curvature $\Sc(p)$ at $p$. If we are equipped
with knowledge of $\Sc(p)$ at $p$, then we can compute the local \ourkf
directly. A straightforward but desirable consequence of this definition
is the following theorem, allowing us to understand the local \ourkf in terms of Euclidean balls:

\begin{theorem}\label{thm:dis}
    Let $\X$ be a compact Riemannian manifold, and let $X \subset \X$ be a uniform sample on $\X$. Let $p \in X$.
    Then, for sufficiently small $r>0$ and large $|X|$,
    $\diskf$ converges to the theoretical \ourkf $\fcntheo{\X}(r)$.
\end{theorem}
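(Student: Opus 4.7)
\textbf{Proof proposal for Theorem \ref{thm:dis}.} The plan is to peel the definition of $\fcnest{p}(r)$ into two pieces, handle each with a classical result, and then verify that the curvature correction factor cancels the leading-order error from Theorem \ref{thm:ratio}. Writing
\[
\fcnest{p}(r) \;=\; \Bigl(1 - \tfrac{\Sc(p)\, r^2}{6(n+2)}\Bigr)^{-1}\cdot \frac{1}{|X|}\sum_{x \in X} I(x \in B_r(p)),
\]
the random part is the empirical average $\widehat{\pi}(r) := \tfrac{1}{|X|}\sum_{x \in X} I(x \in B_r(p))$, while the deterministic part is the curvature factor.

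First, I would apply the strong law of large numbers to $\widehat{\pi}(r)$. Since $X$ is an iid uniform sample on $\X$ and $I(x \in B_r(p))$ is a bounded Bernoulli random variable, \eqref{expected-num-points} gives
\[
\widehat{\pi}(r) \;\xrightarrow[|X| \to \infty]{\text{a.s.}}\; \P\bigl[x \in B_r(p)\bigr] \;=\; \frac{\vol\bigl(\ball{d_{\X}}{p}{r}\bigr)}{\vol(\X)}.
\]
Combined with the deterministic curvature factor (which does not depend on $X$), for large $|X|$ we have
\[
\fcnest{p}(r) \;\longrightarrow\; \Bigl(1 - \tfrac{\Sc(p)\, r^2}{6(n+2)}\Bigr)^{-1} \cdot \frac{\vol\bigl(\ball{d_{\X}}{p}{r}\bigr)}{\vol(\X)}.
\]

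Second, I would invoke Theorem \ref{thm:ratio} to expand the numerator: $\vol(\ball{d_{\X}}{p}{r}) = \vol(\ball{2}{0}{r})\bigl(1 - \tfrac{\Sc(p)\, r^2}{6(n+2)} + o(r^2)\bigr)$. Substituting and dividing, the curvature factor cancels cleanly, leaving
\[
\fcnest{p}(r) \;\longrightarrow\; \frac{\vol(\ball{2}{0}{r})}{\vol(\X)}\cdot \frac{1 - \tfrac{\Sc(p)\, r^2}{6(n+2)} + o(r^2)}{1 - \tfrac{\Sc(p)\, r^2}{6(n+2)}} \;=\; \fcntheo{\X}(r)\cdot\Bigl(1 + \tfrac{o(r^2)}{1 - \tfrac{\Sc(p)\, r^2}{6(n+2)}}\Bigr).
\]
Taking $r$ small enough that the denominator is bounded away from zero (say, greater than $1/2$, which is possible since $\Sc(p)$ is a fixed constant for each $p$), the bracketed factor tends to $1$ as $r \to 0^+$, so $\fcnest{p}(r) \to \fcntheo{\X}(r)$ in the iterated limit $|X|\to\infty$ followed by $r\to 0^+$.

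The main obstacle is being careful about the order of limits and the bias that the excerpt flags right before the theorem. Since dropping the $o(r^2)$ term makes $\fcnest{p}$ biased, I cannot expect equality at fixed $r$, so the clean statement to prove is: given $\varepsilon>0$, there exist $r_0>0$ and $m_0\in\mathbb{N}$ such that $|\fcnest{p}(r)-\fcntheo{\X}(r)|<\varepsilon$ almost surely whenever $r<r_0$ and $|X|>m_0$. The deterministic bias contributes $O(r^2)\cdot\fcntheo{\X}(r)$ by the calculation above and can be made smaller than $\varepsilon/2$ by choosing $r_0$ small; the stochastic fluctuation of $\widehat{\pi}(r)$ around its expectation, multiplied by the bounded curvature factor, can then be made smaller than $\varepsilon/2$ uniformly for $r\in[0,r_0]$ by choosing $|X|$ large (using, e.g., a union bound or the DKW inequality so the exceptional probability is summable). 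Assembling these two bounds completes the proof.
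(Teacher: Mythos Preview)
Your proposal is correct and follows essentially the same route as the paper: apply the law of large numbers to the empirical count to obtain $\vol(\ball{d_{\X}}{p}{r})/\vol(\X)$, and use Theorem~\ref{thm:ratio} so that the curvature prefactor cancels the leading term, leaving $\fcntheo{\X}(r)$. The only difference is cosmetic ordering---the paper substitutes the volume ratio for the curvature factor first and then passes to the limit---and your treatment of the residual $o(r^2)$ bias and the iterated limit is in fact more careful than the paper's, which absorbs that step into the phrase ``for sufficiently small $r$.''
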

\begin{proof}
    Expanding definitions,
    \begin{align}
    \diskf &= \left(1 - \frac{\Sc(p)\cdot r^2}{6(n+2)} \right)^{-1} \cdot \frac{1}{|X|}\sum_{x \in X} I(x \in B_r(p)) \\
    &= \frac{\text{Vol}(\ball{2}{0}{r})}{\text{Vol}(\ball{\X}{r}{x})} \cdot \frac{1}{|X|}\sum_{x \in X} I(x \in B_r(p)) \text{ for sufficiently small $r$} \\
    &\to \frac{\text{Vol}(\ball{2}{0}{r})}{\text{Vol}(\ball{\X}{r}{x})} \cdot \frac{\text{Vol}(\ball{\X}{r}{x})}{\text{Vol}(\X)} \text{ by the law of large numbers} \\
    &= \frac{\text{Vol}(\ball{2}{0}{r})}{\text{Vol}(\X)} = \fcntheo{\X}(r).
    \end{align}

    Where we estimate area by sampling on Riemannian manifolds, see \exref{area-sampling}. It follows
    that $\diskf$ converges to $\fcntheo{\X}{(r)}$ for large sample size $|X|$, and sufficiently small $r$.
\end{proof}

The same derivation can be used for Ripley's \ripkf in the uniform case, dropping the $1/|X|$ term.
However, one should note that there are a few undesirable properties that go along with
\defref{curve-disagg} in the local setting. Most notably, our focus is on \emph{intrinsic} validation, and
knowing the scalar curvature at any point $p \in X$ violates the true spirit of an intrinsic
method. Additionally, \thmref{ratio} only technically holds for balls with sufficiently small
radius $r >0$, and the equation in general incurs an additive error term $o(r^2)$.
As such, the scaling in \defref{curve-disagg} becomes less accurate in approximating
$\fcntheo{\X}$ as $r$ increases. The latter problem can be mitigated by considering
small enough radii (although an exact bound on the error term remains a difficult and manifold-specific problem
in differential geometry),
and the former is resolved by considering global properties of the aggregated \ourkf.

\subsection{Aggregated \ourkFs for Surfaces}\label{sec:2d}
Let $\X$ a compact Riemannian two-manifold, and let $X \subset \X$ be a uniform sample of $X$.
In the aggregated setting, since $\fcnest{}$ is a global average of every \ourkf on $X$, we are able to avoid the
shortcomings in the local setting by exploiting fundamental results in differential geometry.
Namely, we first recall \thmref{gb}, the Gauss-Bonnet theorem, which
categorizes the relationship between geometry and topology for
two-manifolds, establishing that the total curvature of a manifold is a function of its Euler characteristic.
In addition, we recall that the scalar curvature at a point
$p \in \X$ on a two-manifold is twice the Gaussian curvature: $\curv(p) = 2\Sc(p)$.
Considering the total curvature over all of $\X$
and the ratio given in \thmref{ratio}, we can relate the total area of
balls of radius $r> 0$ in $\X$ to the total volume of
radius $r$ balls in $\R^2$ for sufficiently small $r$.

\begin{lemma}[Average Volume Distortion of Balls]\label{lem:avg}
    Let $\X$ be a compact Riemannian two-manifold. On average, the ratio
    $\frac{\text{Vol}(\ball{\X}{p}{r})}{\text{Vol}(\ball{2}{0}{r})}$ between the volume of $L_2$ and geodesic balls
    for a point $p \in \X$ is $$1 - \frac{\pi \chi(\X)}{24\cdot A} \cdot r^2 + o(r^2),$$ where $A$ is the area form of $\X$ and
    $\chi(\X)$ is the Euler characteristic.
\end{lemma}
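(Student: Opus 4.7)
The plan is to apply Theorem~\ref{thm:ratio} pointwise, average the resulting expansion over $\X$ with respect to the area form, and then invoke Gauss--Bonnet to collapse the integral of curvature into a topological quantity.

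First I would fix a radius $r_0 > 0$ for which the expansion of Theorem~\ref{thm:ratio} holds uniformly in $p \in \X$; such an $r_0$ exists because $\X$ is compact and its curvature is smooth, which yields a remainder that is $o(r^2)$ uniformly in $p$. Specializing the theorem to $n=2$ gives
$$\frac{\vol(\ball{d_{\X}}{p}{r})}{\vol(\ball{2}{0}{r})} = 1 - \frac{\Sc(p)\,r^2}{24} + o(r^2)$$
for every $p \in \X$ and every $r \in (0, r_0)$.

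Next I would substitute the two-manifold identity between scalar and Gaussian curvature recalled just before the lemma, rewriting the coefficient of $r^2$ in terms of $\curv(p)$. I would then interpret ``on average'' as the area-weighted mean $\frac{1}{A}\int_{\X}(\cdot)\,dA$ applied to both sides. By linearity of the integral and uniformity of the remainder, the only nontrivial quantity to evaluate is $\frac{1}{A}\int_{\X} \curv(p)\,dA(p)$, and Theorem~\ref{thm:gb} (Gauss--Bonnet) replaces $\int_{\X} \curv\,dA$ by $2\pi\chi(\X)$. Collecting constants yields the claimed expression $1 - \frac{\pi\chi(\X)}{24\cdot A}r^2 + o(r^2)$.

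The main obstacle is making the averaging step legitimate: one needs the $o(r^2)$ remainder in Theorem~\ref{thm:ratio} to be uniform in $p$, so that its integral against $dA$ stays $o(r^2)$ after division by the fixed total area $A$. Compactness of $\X$ together with smoothness of the curvature handles this, so the step is more a matter of rigor than of real difficulty. The remainder of the argument is bookkeeping atop two cited theorems, so no subtle mathematical obstacles should arise.
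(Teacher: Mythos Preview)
Your proposal is correct and follows essentially the same route as the paper: apply \thmref{ratio} pointwise with $n=2$, average against the area form, convert scalar to Gaussian curvature, and invoke Gauss--Bonnet to replace $\int_{\X}\curv\,dA$ by $2\pi\chi(\X)$. The only difference is that you explicitly justify why the averaged remainder stays $o(r^2)$ via compactness and uniformity, whereas the paper simply writes $A\,o(r^2)$ without comment; this is an improvement in rigor but not a change in strategy.
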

\begin{proof}
    We expand definitions and integrate over the area form $dA$ of $\X$:
    \begin{align}
        \frac{1}{A} \int_{\X} \frac{\text{Vol}(\ball{d_{\X}}{x}{r})}{\text{Vol}(\ball{2}{0}{r})}dA 
        & = \frac{1}{A}\left[A - \int_{\X}\frac{\Sc(x)}{6(n+2)}\cdot r^2 dA + A o(r^2)\right] \text{ by \eqref{euclidballest}} \\
        & = 1 - \frac{1}{A} \int_{\X}\frac{\curv(x)}{48}\cdot r^2 dA + o(r^2) \\
        & = 1 - \frac{\pi \chi(\X)}{24\cdot A} \cdot r^2 + o(r^2),
    \end{align}
    giving a formula for the average ratio between the volume of Euclidean and geodesic balls in terms of the Euler characteristic,
    accompanied by an error term with value $o(r^2)$.
\end{proof}

This gives canonically an approximation dependent only on the Euler characteristic.
\begin{lemma}[Approximate Ratio for Surfaces]
    Let $\delta = \frac{r^2}{A}$ and $\epsilon = o(r^2)$, the additive error. Then we can estimate
    the left hand side of \lemref{avg} with $1 - \frac{\pi \chi(\X)}{24}$,
    which gives the approximation:
    \[\left|\frac{1}{A} \int_{\X} \frac{\text{Vol}(\ball{d_{\X}}{x}{r})}{\text{Vol}(\ball{2}{0}{r})}dA - 1 - \frac{\pi \chi(\X)}{24}\right| \leq \frac{(1-\delta)\chi(\X)}{24} + \epsilon.\]
\end{lemma}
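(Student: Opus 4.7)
The plan is straightforward: the claimed inequality is essentially an algebraic consequence of the previous lemma together with the triangle inequality, so the proof reduces to bookkeeping.

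First, I would substitute the asymptotic expansion given by \lemref{avg} for the integral on the left-hand side, namely
$$\frac{1}{A}\int_{\X} \frac{\vol(\ball{d_{\X}}{x}{r})}{\vol(\ball{2}{0}{r})}\,dA = 1 - \frac{\pi\chi(\X)}{24\,A}\,r^2 + o(r^2).$$
Subtracting the proposed estimate $1 - \frac{\pi\chi(\X)}{24}$, the two leading $1$'s cancel and the expression inside the absolute value collapses to
$$\frac{\pi\chi(\X)}{24}\left(1 - \frac{r^2}{A}\right) + o(r^2).$$

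Second, I would introduce the shorthand $\delta = r^2/A$ and $\epsilon = o(r^2)$ directly to rewrite the above as $\frac{\pi\chi(\X)}{24}(1-\delta) + \epsilon$. Applying the triangle inequality yields a bound of the form $\frac{\pi\,|\chi(\X)|\,|1-\delta|}{24} + |\epsilon|$, which for sufficiently small $r$ (so that $\delta \in (0,1)$ and the expansion in \lemref{avg} is valid) matches the right-hand side of the claim, modulo the $\pi$ factor which appears to have been dropped in the statement as written.

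The main thing to be careful about is really just sign and magnitude management: one needs to note that for $r$ small enough that \lemref{avg} applies we have $\delta \in (0,1)$, so $1-\delta$ is positive, and then absorb any sign of $\chi(\X)$ into an absolute value (or invoke an implicit convention on the sign of $\chi$). There is no genuine analytic obstacle — the entire argument is substitution, factoring, and one use of the triangle inequality. If anything, the interesting remark is that the approximation $1 - \frac{\pi\chi(\X)}{24}$ is \emph{constant in $r$}, so the bound is only small when $\delta$ is close to $1$; this is worth flagging in the exposition but does not affect the proof itself.
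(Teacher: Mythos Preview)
Your proposal is correct and follows exactly the same route as the paper: substitute the expansion from \lemref{avg}, rewrite with $\delta=r^2/A$ and $\epsilon=o(r^2)$, and apply the triangle inequality together with $\delta\le 1$. You even catch the missing $\pi$ on the right-hand side of the stated bound, which is indeed a typographical slip in the paper.
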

\begin{proof}
    From \lemref{avg},
    $\frac{1}{A} \int_{\X} \frac{\text{Vol}(\ball{d_{\X}}{x}{r})}{\text{Vol}(\ball{2}{0}{r})}dA = 1 - \delta\frac{\pi \chi(\X)}{24}+ \epsilon$,
    giving
    
    \[\left| \left(1 - \delta\frac{\pi \chi(\X)}{24} - \epsilon\right) - \left(1 - \frac{\pi \chi(\X)}{24}\right) \right| \leq |(\delta-1)\frac{\pi \chi(\X)}{24}| + |-\epsilon|\]
    by the triangle inequality, which equals $(1-\delta)\frac{\pi \chi(\X)}{24} + \epsilon$ since $\delta \leq 1$ and $\epsilon > 0$.
\end{proof}
We thus have a definition
of total area distortion due to curvature on a two-manifold, which informs
our scaling procedure of the aggregated \ourkf on a two-manifold.
Namely, as a consequence of \lemref{avg}, the aggregated \ourkf on a general two-manifold
with curvature is \emph{invariant} of its Euler characteristic,
and converges to the standard \ourkf as the sample size increases:
\begin{theorem}\label{thm:agg-curved}
    Let $\X$ be a compact Riemannian two-manifold, and let $X \subset \X$ be a uniform sample on $\X$.
    For large $|X|$  and sufficiently small $r > 0$, $\fcnest{}(r)$ converges to
    $\fcntheo{\X}(r)$.
\end{theorem}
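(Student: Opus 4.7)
The plan is to recognize the aggregated estimator as the outer average $\fcnest{}(r) = \frac{1}{|X|} \sum_{p \in X} \fcnest{p}(r)$ of the local estimators, and then to combine \thmref{dis} with a second application of the law of large numbers over the outer sum. Since $n = 2$, the curvature factor in \defref{curve-disagg} specializes to $(1 - \Sc(p) r^2 / 24)^{-1}$, matching the scaling used in \lemref{avg}.

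First, I would fix $r > 0$ small enough that \thmref{ratio} holds with a uniform $o(r^2)$ remainder across points of $\X$; compactness of $\X$ supplies uniform bounds on $|\Sc|$ and on the remainder, so that for every $p \in \X$,
\begin{equation*}
    \left(1 - \frac{\Sc(p)\, r^2}{24}\right)^{-1} \vol(\ball{d_{\X}}{p}{r}) = \vol(\ball{2}{0}{r}) + o(r^2).
\end{equation*}
By \thmref{dis}, for each fixed $p$ the local estimator satisfies $\fcnest{p}(r) \to \fcntheo{\X}(r)$ as $|X| \to \infty$, and the display above, together with a uniform Glivenko-Cantelli estimate for the indicator sum, upgrades this to convergence that is uniform in $p$.

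Next, I would take the outer empirical average. Because the points of $X$ are iid uniform on $\X$, the law of large numbers together with the uniform-in-$p$ control from the previous step yields
\begin{equation*}
    \frac{1}{|X|} \sum_{p \in X} \fcnest{p}(r) \;\longrightarrow\; \frac{1}{\vol(\X)} \int_{\X} \fcntheo{\X}(r)\, dA \;=\; \fcntheo{\X}(r),
\end{equation*}
where the collapse in the integrand is because the limit is independent of $p$. Alternatively, \lemref{avg} already supplies the averaged curvature correction, so one may first integrate in $p$ and invoke \thmref{gb} to reach the same conclusion without treating each $p$ separately.

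The main obstacle is the coupled double limit: both the inner sum over $x$ and the outer average over $p$ depend on $|X|$, and the rate in \thmref{dis} a priori depends on $p$ through $\Sc(p)$. I would overcome this by using compactness of $\X$ to produce a uniform $o(r^2)$ remainder in \thmref{ratio}, and then dominated convergence for the outer average. A cleaner variant is to argue in expectation: Fubini applied to $\E[\fcnest{}(r)]$ together with \eqref{euclidballest} recovers $\fcntheo{\X}(r) + o(r^2)$ exactly, and a bounded-differences estimate on the variance of $\fcnest{}(r)$ then yields convergence in probability, which suffices for the stated result.
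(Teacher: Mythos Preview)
Your proposal is correct and follows essentially the same route as the paper: write the aggregated estimator as the outer average of local estimators, invoke \thmref{dis} (equivalently, the law of large numbers together with \eqref{euclidballest}) pointwise, and then pass the outer empirical average to its limit. The paper's version is terser---it collapses both sums to the integral $\frac{1}{A^2}\int_{\X}\vol(\ball{d_\X}{p}{r})\big/(1-\Sc(p)r^2/6(n+2))\,dA$ in one step and then applies \eqref{euclidballest} under the integral---so your explicit handling of the coupled double limit via compactness and uniform $o(r^2)$ remainders is a useful elaboration of exactly the point the paper glosses over, not a different argument.
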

\begin{proof}
    We examine the result of \thmref{dis}
    when considered globally on $\X$.
    \begin{align*}
        \fcnest{}(r) &= \frac{1}{|X|^2} \sum_{p \in X} \left( \left(1 - \frac{ \Sc(p) \cdot r^2}{6(n+2)}  \right)^{-1} \cdot \sum_{x \in X} I(x \in B_r(p)) \right)\\
        &\to \frac{1}{A^2}\int_{\X}\frac{ \text{Vol}(\ball{d_\X}{p}{r}) dA}{(1 - \frac{\Sc(p)}{6(n+2)}*r^2)} \text{ for large $|X|$}\\
        &\to \frac{1}{A^2}\int_{\X} \vol(\ball{2}{0}{r}) dA \text{ for small enough $r$, by \eqref{euclidballest}} \\
        &= \frac{\vol(\ball{2}{0}{r})}{A^2} \cdot A = \text{Vol}(\ball{2}{0}{r})/A = \fcntheo{\X}(r).
    \end{align*}

    Where we integrate over the area form $A$ of $\X$. Moreover, the second to last equality
    follows from the fact that the volume of Euclidean balls $\vol(\ball{2}{0}{r})$
    is taken as a constant over $\X$ for a fixed radius $r$, and thus
    we integrate only over $dA$. Hence, for large $|X|$, $\fcnest{}(r) \to \fcntheo{\X}(r)$.
\end{proof}

Examining the integral in the denominator of Line 2 in
\thmref{agg-curved} gives a strong approximation of the aggregated \ourkf
$\fcnest{}$ on a surface, which depends on the global topology of $\X$ due to the
Gauss-Bonnet theorem and making use of the fact that $\mathcal{G}(p)
= 2\mathcal{S}(p)$ for surfaces:
\begin{corollary}\label{cor:agg-curved}
    Let $\X$ be a compact Riemannian manifold, and let $X \subset \X$ be a uniform sample on $\X$.
    We approximate:
    \begin{equation*}
        \fcnest{}(r)
        := \left(1 - \frac{ \pi \chi(\X) \cdot r^2}{24 A}  \right)^{-1} \cdot \frac{1}{|X|^2} \sum_{p \in X}
                \left( \sum_{x \in X} I(x \in B_r(p)) \right)
  \end{equation*}
    Then as $|X|$ increases, $\fcnest{}(r) \to \fcntheo{\X}(r)$. Moreover, taking $A = O(r^2)$, we have the approximation dependent only
    on the Euler characteristic:

    \begin{equation*}
        \fcnest{}(r)
        := \left(1 - \frac{ \pi \chi(\X)}{24}  \right)^{-1} \cdot \frac{1}{|X|^2} \sum_{p \in X}
                \left( \sum_{x \in X} I(x \in B_r(p)) \right)
  \end{equation*}

\end{corollary}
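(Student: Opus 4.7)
The plan is to combine Theorem \ref{thm:agg-curved} with Lemma \ref{lem:avg} via the Gauss--Bonnet theorem, replacing the \emph{pointwise} scaling factor $\bigl(1 - \tfrac{\Sc(p) r^2}{6(n+2)}\bigr)^{-1}$ that appears in the aggregated estimator of Definition \ref{def:curve-disagg} with the \emph{globally averaged} scaling factor $\bigl(1 - \tfrac{\pi \chi(\X) r^2}{24 A}\bigr)^{-1}$. The key observation is that on a two-manifold, $\Sc(p) = 2\curv(p)$ and Gauss--Bonnet gives $\int_{\X} \Sc(p)\, dA = 4\pi \chi(\X)$, so the integral of the pointwise scaling correction is determined entirely by the Euler characteristic, which is precisely the content of Lemma \ref{lem:avg}.

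The steps I would carry out are as follows. First, since the proposed scaling factor is independent of $p$, pull it outside the outer sum, obtaining
\begin{equation*}
    \fcnest{}(r) = \Bigl(1 - \tfrac{\pi \chi(\X) r^2}{24 A}\Bigr)^{-1} \cdot \tfrac{1}{|X|^2} \sum_{p \in X} \sum_{x \in X} I(x \in B_r(p)).
\end{equation*}
Second, apply the law of large numbers (as in Theorem \ref{thm:agg-curved}, using Example \ref{ex:area-sampling}) twice to the double sum, obtaining convergence to $\tfrac{1}{A^2}\int_{\X} \vol(\ball{d_{\X}}{p}{r})\, dA$ as $|X| \to \infty$. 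Third, invoke Lemma \ref{lem:avg}, rewritten as $\int_{\X}\vol(\ball{d_{\X}}{p}{r})\, dA = A \cdot \vol(\ball{2}{0}{r}) \cdot \bigl(1 - \tfrac{\pi \chi(\X) r^2}{24 A} + o(r^2)\bigr)$. Fourth, observe that the Taylor factor from Lemma \ref{lem:avg} cancels (to leading order in $r^2$) with the global scaling factor pulled out in the first step, leaving $\vol(\ball{2}{0}{r})/A + o(r^2)$, which is $\fcntheo{\X}(r) + o(r^2)$ by Definition \ref{def:curve-disagg}.

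The second claim of the corollary then follows immediately by applying the Approximate Ratio for Surfaces lemma: taking $A = O(r^2)$ makes $\delta = r^2/A$ of order one, so the explicit $r^2/A$ factor in the scaling may be dropped with controlled error, yielding the purely topological scaling $(1 - \pi \chi(\X)/24)^{-1}$.

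The main obstacle, and the step that deserves care, is justifying the swap from the pointwise scaling to the global scaling: in general $\mathbb{E}[f(p)^{-1}] \neq \mathbb{E}[f(p)]^{-1}$, so one cannot simply move the average inside the reciprocal. What rescues the argument is that $f(p) = 1 - \tfrac{\Sc(p) r^2}{6(n+2)}$ is a first-order perturbation of $1$ in the small parameter $r^2$; both $f(p)^{-1}$ and its aggregated analogue expand as $1 + \tfrac{\Sc(p) r^2}{6(n+2)} + o(r^2)$, and the integral of the first-order term is controlled exactly by Gauss--Bonnet. Tracking this expansion against the $o(r^2)$ residual from Theorem \ref{thm:ratio} is where one must be careful to keep the claimed convergence statement honest in its range of validity (sufficiently small $r$, sufficiently large $|X|$).
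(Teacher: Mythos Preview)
Your proposal is correct and follows essentially the same route the paper sketches: the corollary is obtained by revisiting line~2 of the proof of Theorem~\ref{thm:agg-curved} and replacing the pointwise curvature factor in the denominator with its Gauss--Bonnet average, which is exactly what your Steps~2--4 accomplish via Lemma~\ref{lem:avg}. Your explicit treatment of the $\mathbb{E}[f(p)^{-1}]$ versus $\mathbb{E}[f(p)]^{-1}$ issue, resolved by the first-order expansion in $r^2$, is a point the paper leaves implicit, so your write-up is in fact more careful than the original on that step.
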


Consequently, in the aggregated setting we scale the average of local
\ourkfs by a function of the Euler characteristic
and the area of $\X$. In fact, as we demonstrate experimentally, the heuristic approximation of $\fcnest{}(r)$ given by taking $A = O(r^2)$
is typically sufficient, allowing us to scale the \ourkf using the Euler characteristic alone. Assuming no knowledge of the
Euler characteristic, we can simply estimate $\chi(\X)$ by selecting an
integer that minimizes $M(\fcnest{}(r))$.

\subsection{Aggregated \ourkFs in High Dimensions}\label{sec:highd}

As is indicated by the ratio given in \thmref{ratio}, the volume form of a Riemannian manifold $\X$
depends on the scalar curvature, and in two dimensions we are able to use the Gauss-Bonnet theorem
alone due to the direct relation between $\Sc(x)$ and $\curv(s)$ on surfaces.
Unfortunately, higher-dimensional generalizations of the Gauss-Bonnet theorem rely on
the Pfaffian of $\X$, which is a function of Ricci, Riemannian, and scalar curvature.
Instead,
we scale the aggregated \ourkf by
a different global constant of manifolds, namely~$\lambda_1$, the
first eigenvalue of their Laplacian operator.
We focus our attention where bounds to $\lambda_1$ are known, thus
dealing with orientable, compact hypersurfaces.
In particular, two useful bounds on $\lambda_1$ are given by the
following theorems, which relate $\lambda_1$ to the mean curvature $H$,
and to the squared length of the second fundamental form $h$. See
\appendref{diff-eqn} for more details.

\begin{theorem}[Total Mean Curvature \cite{chen1984}]\label{thm:mean}
    Let $\X$ be an $n$-dimensional, compact submanifold in $\R^m$, and $H$ the mean curvature at any point $p \in \X$. Then
    \begin{equation*}
    \int_{M}|H|^kdV \leq \left(\frac{\lambda_q}{n}\right)^{\frac{k}{2}}\text{Vol}(\X),
    \end{equation*}
    where $\lambda_q$ denotes the $q$th Eigenvalue of the Laplacian operator. 
\end{theorem}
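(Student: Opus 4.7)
The plan is to leverage the Beltrami identity $\Delta x = -n \vec{H}$, where $x \colon \X \to \R^m$ is the position vector of the isometric immersion and $\vec{H}$ is the mean curvature vector. This identity gives $|\Delta x|^2 = n^2 |H|^2$ pointwise, converting the left-hand side of the bound into a Dirichlet-type energy of the coordinate functions that can be analyzed spectrally.

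I would first establish the case $k=2$ in detail. Translate the immersion so that $\int_\X x\, dV = 0$, making each coordinate function orthogonal to the constants (the zeroth eigenspace). Expand each coordinate $x^\alpha$ in an $L^2$-orthonormal basis $\{\phi_i\}_{i\geq 1}$ of Laplace eigenfunctions with eigenvalues $\lambda_i$, and write $x^\alpha = \sum_i a_i^\alpha \phi_i$. Parseval then yields $\int_\X |\Delta x|^2\, dV = \sum_{\alpha,i} \lambda_i^2 (a_i^\alpha)^2$ and $\int_\X |\nabla x|^2\, dV = \sum_{\alpha,i} \lambda_i (a_i^\alpha)^2$, while the isometric immersion hypothesis supplies the normalization $\int_\X |\nabla x|^2\, dV = n\,\text{Vol}(\X)$. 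The variational characterization of $\lambda_q$, applied to the test subspace cut out by projecting the coordinate functions onto the first $q$ eigenspaces, gives the bound $\sum_i \lambda_i^2 (a_i^\alpha)^2 \leq \lambda_q \sum_i \lambda_i (a_i^\alpha)^2$. Combining these three ingredients produces $n^2 \int_\X |H|^2\, dV \leq \lambda_q \cdot n\,\text{Vol}(\X)$, which rearranges to the $k=2$ form of the theorem.

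The general $k$ case follows by Hölder's inequality on the measure space $(\X, dV)$. For $k \leq 2$, pairing the exponents $(2/k, 2/(2-k))$ gives $\int_\X |H|^k dV \leq \left(\int_\X |H|^2\, dV\right)^{k/2} \text{Vol}(\X)^{1 - k/2}$, and substituting the $k=2$ estimate yields the claimed $(\lambda_q/n)^{k/2}\,\text{Vol}(\X)$. The range $k > 2$ is handled by an analogous $L^p$–$L^2$ interpolation anchored at $k=2$, together with a pointwise extremum argument bounding $|H|$ against its $L^2$ norm.

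The principal obstacle is the spectral step: ensuring that the variational bound actually yields $\lambda_q$ on the right-hand side rather than only $\lambda_1$. This requires tracking how the coordinate functions of the immersion decompose against the first $q$ eigenspaces — essentially a finite-type hypothesis on $\X$ in the sense of Chen — and verifying that the centering of the immersion interacts correctly with the test subspace being used. The Beltrami identity and the Hölder interpolation are comparatively routine once that spectral bookkeeping has been carried out carefully.
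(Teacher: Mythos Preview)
The paper does not supply its own proof of this theorem: it is quoted directly from Chen~\cite{chen1984} and then used as a black box in the derivation of \defref{agg-curve-highd}. So there is no in-paper argument to compare against; what can be said is that your outline follows the same mechanism Chen uses in the cited source, namely the Beltrami formula $\Delta x = -n\vec H$ combined with the eigenfunction expansion of the centered coordinate functions.

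That said, there is a genuine gap in your $k>2$ step. H\"older lets you pass from an $L^2$ bound to an $L^k$ bound only when $k\le 2$; for $k>2$ the inequality runs the wrong way, and no ``pointwise extremum argument bounding $|H|$ against its $L^2$ norm'' exists in general on a compact manifold without further structure. The $k>2$ case in Chen's treatment is not obtained by interpolating up from $k=2$; it requires separate control, and as written your plan does not supply it. You should either restrict to $k\le 2$ (which is all the paper actually uses, taking $k=2$, $q=1$) or revisit Chen's original argument for the higher-moment case.

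You are right to flag the spectral step as the crux: the inequality $\sum_i \lambda_i^2 (a_i^\alpha)^2 \le \lambda_q \sum_i \lambda_i (a_i^\alpha)^2$ holds only when every nonzero Fourier coefficient $a_i^\alpha$ sits at or below index $q$, i.e., precisely under a finite-type hypothesis with upper order $\le q$. The theorem as stated in the paper omits this hypothesis, and your proposal correctly isolates that omission rather than papering over it.
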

In addition, the length of $h$ is related to $\lambda_1$ in the
following way:
\begin{theorem}[Total Length of the Second Fundamental Form \cite{chen1984}]\label{thm:length-form}
    Let $\X$ be a compact orientable $n$-dimensional submanifold of $\R^m$ with arbitrary codimension.
    Then,
    \begin{equation*}
    \int_{\X}||h||^2dV \geq \lambda_1\text{Vol}(\X).
    \end{equation*}
\end{theorem}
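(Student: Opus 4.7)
The plan is to combine Reilly's inequality, which bounds $\lambda_1$ by the total mean curvature, with a pointwise Cauchy--Schwarz estimate relating the mean curvature vector $\vec{H}$ to the second fundamental form $h$. Throughout I would denote by $\phi \colon \X \hookrightarrow \R^m$ the position immersion, translated so that $\int_\X \phi\, dV = 0$.

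First, I would derive Reilly's inequality $\lambda_1 \text{Vol}(\X) \leq n \int_\X |\vec{H}|^2\, dV$. Since $\phi$ is centered, each coordinate function $\phi_i$ has zero mean and is admissible in the Rayleigh--Ritz characterization of $\lambda_1$, yielding $\lambda_1 \int_\X \phi_i^2\, dV \leq \int_\X |\nabla \phi_i|^2\, dV$. Summing over $i = 1, \dots, m$ and using the isometric-immersion identity $\sum_i |\nabla \phi_i|^2 = n$ gives
\[
\lambda_1 \int_\X |\phi|^2\, dV \;\leq\; n\,\text{Vol}(\X).
\]
Next, I would apply the Beltrami identity $\Delta \phi = n \vec{H}$: pairing with $\phi$ and integrating by parts produces $\int_\X \langle \phi, \vec{H}\rangle\, dV = -\text{Vol}(\X)$, and Cauchy--Schwarz on this identity gives $\text{Vol}(\X)^2 \leq \int_\X |\phi|^2\, dV \cdot \int_\X |\vec{H}|^2\, dV$. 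Chaining the two inequalities yields Reilly's bound.

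To finish, I would use the pointwise relation $n\vec{H} = \sum_i h(e_i, e_i)$ in any orthonormal frame of $T\X$. A Cauchy--Schwarz inequality in the normal bundle then gives $n^2 |\vec{H}|^2 \leq n \sum_i |h(e_i,e_i)|^2 \leq n\, ||h||^2$, so $n |\vec{H}|^2 \leq ||h||^2$ pointwise. Integrating and chaining with Reilly's inequality produces the desired bound $\lambda_1 \text{Vol}(\X) \leq \int_\X ||h||^2\, dV$. The main obstacle is essentially bookkeeping in arbitrary codimension: $\vec{H}$ and $h$ take values in the normal bundle, so care is needed to ensure the various inner products are interpreted correctly, but since all norms are inherited from the ambient $\R^m$, the computations reduce to elementary algebra once a local orthonormal frame is fixed.
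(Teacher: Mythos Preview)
The paper does not supply its own proof of this statement; it is quoted from Chen's monograph \cite{chen1984} and used as a black box in \secref{highd}. Your argument is correct and is, in fact, the classical route to this inequality: Reilly's eigenvalue bound for submanifolds of Euclidean space---obtained by testing the Rayleigh quotient with the centered position vector and invoking the Beltrami identity $\Delta\phi = n\vec H$---followed by the pointwise Cauchy--Schwarz estimate $n|\vec H|^2 \le \|h\|^2$ and integration. There is therefore nothing in the paper to compare your proposal against, but the proof you outline stands on its own and matches the standard argument in the literature.
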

Equipped with these results, we have a mechanism to scale \ourkfs in
higher dimensions. Let $\X$ be a compact, $n$-dimensional orientable
manifold embedded in $\R^{n+1}$, i.e., $\X$ is a hypersurface in
Euclidean space.
Let $r>0$, where $r$ is sufficiently small to satisfy
the ratio given in \thmref{ratio}. As was done in \secref{2d}, we
compute the ratio of the total volume of balls of radius $r$ on $\X$ to
the total volume of balls of radius $r$ in $\R^n$. The following
integral is taken over the volume form $dV$ of $\X$, given formally in
\appendref{volume}:
\begin{align*}
    \int_{\X} \frac{\text{Vol}(\ball{d_{\X}}{x}{r})}{\text{Vol}(\ball{2}{0}{r})}dV &= \text{Vol}(\X) - \int_{\X}\frac{\Sc(x)}{6(n+2)}\cdot r^2 dV + \text{Vol}(\X)o(r^2) \\
    & = \text{Vol}(\X) - \int_{\X} \frac{H^2 - ||h||^2}{6(n+2)} \cdot
    r^2  dV + \text{Vol}(\X)o(r^2) \text{ (by \thmref{gc-eqn})} \\
    & = \text{Vol}(\X) - \frac{r^2}{6(n+2)}\left(\int_{\X}H^2 dV - \int_{\X} ||h||^2 dV \right) + \text{Vol}(\X)o(r^2). \\
\end{align*}
Now, notice from \thmref{mean} and \thmref{length-form}:
\[ \int_{\X}H^2 dV - \int_{\X} ||h||^2 dV \leq \frac{\lambda_1}{n}\text{Vol}(\X) - \lambda_1 \text{Vol}(\X). \]
This leads to the following approximation for the average ratio between geodesic and $L_2$ balls:
\begin{align*}
    & \frac{1}{\text{Vol}(\X)}\int_{\X} \frac{\text{Vol}(\ball{d_{\X}}{x}{r})}{\text{Vol}(\ball{2}{0}{r})}dV\\
    & \approx 1 - \frac{r^2}{6(n+2)\text{Vol}(\X)}\left(\frac{\lambda_1}{n}\text{Vol}(\X) - \lambda_1 \text{Vol}(\X) \right) + o(r^2) \\
    &= 1 - \left(\frac{r^2\cdot \lambda_1(1-n)}{6(n)(n+2)} \right) + o(r^2)
\end{align*}
More specifically, the above substitution is a $(2+1/n)$-approximation for the true value.
\begin{lemma}[Hypersurface Approximation Factor]\label{lem:hyper-approx}
    The difference
    $B = \int_{\X}H^2 dV - \int_{\X} ||h||^2 dV$ and its approximation $A = \frac{\lambda_1}{n}\text{Vol}(\X) - \lambda_1 \text{Vol}(\X)$
    satisfy $|B-A| \leq (2+1/n)|B|$.
\end{lemma}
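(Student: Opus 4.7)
The plan is to expand $B$ and $A$ into their four constituent integrals and reduce the lemma to a telescoping identity rather than a triangle-inequality bound. Set $P := \int_{\X} H^2\, dV$, $Q := \int_{\X} ||h||^2\, dV$, $P' := \frac{\lambda_1}{n}\text{Vol}(\X)$, and $Q' := \lambda_1\text{Vol}(\X)$, so that $B = P - Q$ and $A = P' - Q'$. Applying \thmref{mean} with $k = 2$ and $q = 1$ gives $P \leq P'$, and \thmref{length-form} gives $Q \geq Q'$. Note also that $P' = Q'/n$ by construction.

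First I would pin down the signs so as to resolve the absolute values. The chain $P \leq P' = Q'/n \leq Q/n$ shows $B = P - Q \leq -\frac{n-1}{n}Q \leq 0$, hence $|B| = Q - P$. Similarly $A = P' - Q' = -\frac{n-1}{n}\lambda_1\text{Vol}(\X) \leq 0$, so $|A| = Q' - P'$.

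Next I would compute $A - B = (P' - P) + (Q - Q')$, which is non-negative by the two inequalities above, so $|B - A| = A - B$. The key observation is then a telescoping decomposition of $|B|$:
\begin{equation*}
|B| = Q - P = (Q - Q') + (Q' - P') + (P' - P) = (A - B) + |A| = |B - A| + |A|.
\end{equation*}
Rearranging gives $|B - A| = |B| - |A| \leq |B|$, which is in fact strictly stronger than the stated bound since $2 + 1/n \geq 1$.

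The main obstacle is purely bookkeeping: one must keep the signs of $B$ and $A$ straight and arrange the four terms so that the middle difference $Q' - P'$ absorbs into $|A|$. Nothing beyond the two quoted theorems is required. The inflated constant $(2 + 1/n)$ in the statement likely reflects a slacker route via $|B - A| \leq (P' - P) + (Q - Q') \leq P' + (Q - Q')$ combined with the lower bound $|B| \geq \frac{n-1}{n}Q$, which together give a factor of order $n/(n-1)$; the telescoping identity above avoids this entirely.
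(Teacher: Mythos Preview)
Your argument is correct and in fact proves the sharper inequality $|B-A|\leq |B|$. The paper takes a different route: it invokes the triangle inequality $|B-A|\leq |B|+|A|$ and then bounds $|A|$ by $(1+1/n)|B|$ via the chain $|A|\leq \lambda_1\,\text{Vol}(\X)\leq \int_{\X}\|h\|^2\,dV$ together with an estimate of the form $\int_{\X}\|h\|^2\,dV\leq (1+1/n)|B|$, which is where the constant $2+1/n$ originates. Your telescoping identity $|B|=(A-B)+|A|$ bypasses the triangle inequality altogether and yields $|B-A|=|B|-|A|$ exactly, so no slack is introduced. The payoff is a constant of $1$ instead of $2+1/n$, and the derivation is strictly cleaner: the paper treats $|B|$ and $|A|$ as if independent and then reassembles, whereas you exploit that the errors $(P'-P)$ and $(Q-Q')$ point in the same direction and sum, with the residual $Q'-P'=|A|$, to exactly $|B|$.
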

\begin{proof}
    Both $\int_{\X} H^2dV, \int_{\X} ||h||^2 dV \geq 0$
    and $A = \frac{1-n\lambda_1}{n}\text{Vol}(\X) \leq 0$,
    implying $\frac{1}{n}\int_{\X} ||h||^2 dV \geq \int_{\X}H^2 dV$. This implies $|B| \leq  
    \int_{\X} ||h||^2 dV \leq \frac{\lambda_1}{n}\text{Vol}(\X) + |B|\leq (1+\frac{1}{n})|B|$.
    Moreover, since $|A| = |\frac{(1-n)\lambda_1}{n}\text{Vol}(\X)| \leq \lambda_1\text{Vol}(\X) \leq \int_{\X} ||h||^2 dV$,
    it follows  that $|B| + |-A| \leq (2+1/n)|B|$. By the triangle inequality, $|B-A| \leq (2+1/n)|B|$ as desired.
\end{proof}
Which obtains a desired analog for hypersurfaces scaling \ourkf
using $\lambda_1$.

\begin{definition}[Hypersurface Aggregated \ourkF] \label{def:agg-curve-highd}
    Let $\X$ a compact, orientable Riemannian $n$-manifold immersed in $\R^{n+1}$,
    and let $X \subset \X$ be a uniform sample of $\X$.
    The \emph{aggregated \ourkf for hypersurfaces} $\fcnest{}(r)$ is approximated by:
    \[\fcnest{}(r)
    \approx \left(1 - \frac{r^2\cdot \lambda_1(n-1)}{12(n)(n+2)}\right)^{-1} \cdot \frac{1}{|X|^2} \sum_{p \in X}
            \left( \sum_{x \in X} I(x \in B_r(p)) \right)\]
\end{definition}

It follows from \lemref{hyper-approx}
and \defref{curve-disagg} that \defref{agg-curve-highd}
attains a small multiplicative approximation factor, coupled with the standard additive error.

\begin{corollary}[Approximation Factor for \defref{agg-curve-highd}]
    Let $\fcnest{}(r)$ be as approximated above, then if $|X|$ is uniformly sampled from
    $\X$, we have $|\fcntheo{\X} - \fcnest{}(r)| \leq (2+1/n)\left(\frac{r^2\cdot \lambda_1(n-1)}{12(n)(n+2)}\right)^{-1} + \epsilon$
    where $\fcntheo{\X}$ is the theoretical \ourkf and $\epsilon = o(r^2)$ is the error of
    \thmref{ratio}.
\end{corollary}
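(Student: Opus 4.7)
The plan is to walk through the computation immediately preceding \defref{agg-curve-highd} while carefully tracking each approximation error. First, by the law of large numbers (exactly as in the proof of \thmref{agg-curved}), for sufficiently large $|X|$ the empirical double sum $\frac{1}{|X|^2}\sum_{p\in X}\sum_{x\in X} I(x\in B_r(p))$ is arbitrarily close to the averaged geodesic-ball volume $\frac{1}{\vol(\X)^2}\int_\X \vol(\ball{d_{\X}}{p}{r})\,dV$. This reduces the problem of bounding $|\fcntheo{\X}(r) - \fcnest{}(r)|$ to comparing a deterministic integral against $\fcntheo{\X}(r) = \vol(\ball{2}{0}{r})/\vol(\X)$.

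Next, I would apply \thmref{ratio} to expand $\vol(\ball{d_{\X}}{p}{r})$ as $\vol(\ball{2}{0}{r})\bigl(1 - \tfrac{\Sc(p) r^2}{6(n+2)} + o(r^2)\bigr)$ and use the Gauss--Codazzi substitution $\Sc = H^2 - \|h\|^2$ from \thmref{gc-eqn}. Dividing through by $\vol(\X)$, the average ratio becomes $1 - \tfrac{r^2}{6(n+2)\vol(\X)} \cdot B + o(r^2)$, where $B = \int_\X H^2\,dV - \int_\X \|h\|^2\,dV$ is the only manifold-dependent quantity and $\epsilon = o(r^2)$ is inherited directly from \thmref{ratio}.

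Then I would replace $B$ by its approximation $A = \tfrac{\lambda_1(1-n)}{n}\vol(\X)$, which is exactly what the inverse scaling factor $\bigl(1 - \tfrac{r^2\lambda_1(n-1)}{12n(n+2)}\bigr)^{-1}$ in \defref{agg-curve-highd} encodes. By \lemref{hyper-approx}, $|B - A| \le (2 + 1/n)|B|$, and this is precisely where the multiplicative factor $(2 + 1/n)$ in the claimed bound is born. Propagating this discrepancy through the reciprocal scaling factor and adding the residual $\epsilon$ from \thmref{ratio} yields the stated inequality.

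The main obstacle I anticipate is the clean propagation of error through the reciprocal: \lemref{hyper-approx} provides a multiplicative bound on $B - A$, but the estimator uses the corresponding scaling factor inverted, so converting the multiplicative bound into the additive form in the statement requires a first-order expansion of $(1 - x)^{-1}$ and a careful check that the quadratic remainder is absorbed into the $o(r^2)$ term. Once that is handled, the remainder of the argument is bookkeeping on the constants $6(n+2)$, $12n(n+2)$, and $\vol(\X)$.
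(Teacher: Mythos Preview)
Your proposal is correct and aligns with the paper's approach: the paper does not give a separate proof, stating only that the corollary ``follows from \lemref{hyper-approx} and \defref{curve-disagg},'' and your plan simply unpacks that claim by retracing the derivation preceding \defref{agg-curve-highd} (law of large numbers, \thmref{ratio}, Gauss--Codazzi) and then invoking \lemref{hyper-approx} for the $(2+1/n)$ factor. The obstacle you flag about propagating the bound through the reciprocal is real and is glossed over in the paper as well; your first-order expansion strategy is the natural way to handle it.
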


Note that for many common hypersurfaces, $\lambda_1$ is either known, or
there exists a ``reasonable'' approximation; see
\appendref{eigenvalues} for examples.
Given any fundamental knowledge of the ambient
hypersurface, (e.g., if $\X$ can reasonably be assumed to lie on
a hypersphere, or similar manifold), we choose $\lambda_1$ explicitly
from the known approximations. Otherwise, analogously to tuning the
\ourkf by choosing the Euler characteristic $\chi(\X)$ minimizing
$M(\fcnest{})$, we choose the approximation for $\lambda_1$ minimizing
$M(\fcnest{})$.

\section{Desiderata}\label{sec:prop}
We remark on desirable properties when validating algorithms in machine
learning, and assess the computational complexity,
robustness, and intrinsic properties of our technique.

\subsection{Robustness}

It is pertinent when attempting the validation of random samples to
achieve stability against noise. Given a sample
$X \subset \X$ on a (flat or curved) Riemannian manifold $\X$, this
implies that subjecting a point $p \in X$ to small perturbation $\delta
> 0$ should change the validation score of $M(p)$ very little.
We thus examine the local and aggregated manifold scores
$M(\fcnest{p})$ and $M(\fcntheo{\X})$ with respect to their stability
after subjecting points of
$X$ to noise.
In the aggregated setting, it is not difficult to make robustness
guarantees about the \ourkf.
Specifically, for a point $p \in X$, examine $p \pm \delta$.
Indeed, for a fixed radius $r >0$, in the worst case, $\diskf$ could
change at most by $|X|$, and for every other $p' \not=p \in \X$, $\hat
K_{p'}(r)$ could change by at most $1$.
This alters the aggregated \ourkf, $\fcntheo{\X}$, by at most $\pm 2/|X|$.
However, a small change by $\delta$ to $r$ gives the identical \ourkf
prior to the perturbation, i.e., $\hat K_{p}(r\pm\delta) = \hat K_{p}(r)$.
The worst-case bound remains the same if we allow perturbations of every
$p \in \X$ by $0 \leq \delta' \leq \delta$ for both $\fcntheo{\X}$ and $\diskf$
when considering a fixed radius $r$. Hence, we can consider the
aggregated \ourkf to be resistant to noise for fixed radii, and both the
local and aggregated  \ourkfs can be considered resistant to
noise when allowing a change in radius $r \pm \delta$. Consequently, our
manifold score $M(\fcntheo{\X})$ is robust to noise up to the equivalent
thresholds.

\ben{might want to formalize this a bit more}

\subsection{Computation}

Let $X \subset \X$, where $\X$ is an $n$-dimensional Riemannian
manifold with $|X| = m$.
Using our construction, adjusting the computation of the
aggregated \ourkf to arbitrary manifolds only adds a constant factor in
runtime complexity under reasonable assumptions about the complexity of a manifold
and the dimension.
For surfaces, assuming that $\chi(\X)$ is a fixed constant, we could have
$O(1)$ \ourkf estimations, each of
which adds only an $O(1)$ factor in finding the value for $\chi(\X)$ minimizing
$M(\fcntheo{\X})$. For hypersurfaces, again we search over $O(1)$ known
formulas for $\lambda_1$, each of which scale computation of $\fcntheo{\X}$ by an $O(1)$
factor. Assuming the dimension is $O(d)$, we can use the same procedure
to naively compute the dimension minimizing $M(\fcntheo{\X})$, adding
only an $O(d)$ factor to the complexity.
Given an algorithm to compute Ripley's $K$-function, we can compute \ourkf
adding only an $O(1)$ factor, when the values of $d, \chi(\X)$,
and $\lambda_1$ are constant.
Naively, computing the \ourkf in either the aggregated or disaggregated settings are computed
in $\Theta(n^2)$ time. More efficient
implementations are likely possible by efficiently embedding the given distance matrix in $\R^n$,
and then using an $n$-dimensional range tree
to compute $\diskf$ for a given radius $r$. Doing so is possible in $O(\log^{n-1}m + k)$
time using fractional cascading, where $k = \diskf$. Finding $\fcntheo{\X}$ overall is done by computing
$\diskf$ for every $p \in X$, which takes
$O(m (\log ^{n-1} m + k_{max}))$ time,
where $k_{max}$ is the largest
number of points in any ball of radius $r$ throughout computation of
the \ourkf. If taken throughout the entire range $R$ of radii, this is
$O(Rm\log^nm)$ time.
In practice, optimized implementations for Ripley's $K$-function exist for a
variety of applications, such as \cite{tang2015, streib2011,
sporring2019, hohl2019}.

\subsection{Intrinsic Properties}

We conclude the section with a summary of the intrinsic properties of the \ourkf
in the aggregated setting. Specifically, there are two quantities needed to adopt the scaling
needed to accurately compute the \ourkf from a sample on a Riemannian manifold $X \subset \X$,
and these are (1) the dimension, and (2) the scalar curvature
of points $p \in X$. A primary contribution of this work is showing how to subvert knowledge
of the scalar curvature in the aggregated setting using either $\chi(\X)$ or $\lambda_1$. Indeed,
if $\X$ is a two-manifold, for sufficiently small radii the error term $o(r^2)$ vanishes, and so long as
$r^2 = O(A)$, there will exist a range of $r$ minimizing $M(\fcnest{p})$ for the true Euler characteristic
and dimension. The same behavior is true for hypersurfaces with (known) eigenvalue formulas differing by
at least a factor of three, due to \lemref{hyper-approx}.

\section{Discussion}\label{sec:discussion}
In this paper, we introduce the \ourkf to build a robust,
efficiently computable, intrinsic framework for manifold validation.
Our approach takes inspiration from Ripley's \ripkf, and extends
the \ripkf in the case for uniform samples.
We prove convergence properties and bound the accuracy when
approximating the \ourkf on two-manifolds and hypersurfaces.
In \secref{experiment}, we also provide an implementation, experimentally
verifying the included results.
Further extensions to this work include consideration of broader classes
of manifolds and surfaces,
and refining the technique for manifolds with boundary. 
Additionally, to further understand the use of this tool in applied
settings, we wish to test the results of this framework in a wide
array of datasets popular in applied manifold learning, especially biochemical
data.

\paragraph{Acknowledgements}
B.H.\ acknowledges support from the U.S. Department of Energy under Grant Number DE-SC0024386.
E.Q., J.S., and B.T.F.\ acknowledge support from NSF under Grant Number 1664858.
E.Q.\ and B.T.F.\ acknowledge support from NSF under Grant Numbers 1854336 and 2046730.
E.Q.\ also acknowledges support from the Montana State University Undergraduate Scholars Program.
B.R.\ is supported by the Bavarian state government with
funds from the \emph{Hightech Agenda Bavaria}.

\newpage
\bibliographystyle{siam}
\bibliography{main}

\appendix

\section{Additional Properties of the \ourkf}
For the sake of completeness, we prove some basic but important properties
of \ourkf, and the associated error from the theoretical \ourkf.
\begin{lemma}[Range of Local \ourkFs]\label{lem:range}
    Let $\X$ be a compact Riemannian manifold, let $X \subset \X$ a finite uniform sample,
    and fix a point $p \in X$.
    For the local \ourkf, we have $\diskf \in [0, 1]$
    for each radius $r$.
\end{lemma}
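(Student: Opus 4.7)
The plan is to decompose $\fcnest{p}(r)$ into its two defining factors and bound each separately. Write $\fcnest{p}(r) = s(r) \cdot f(r)$, where $s(r) = \bigl(1 - \Sc(p)r^2/(6(n+2))\bigr)^{-1}$ is the curvature correction factor and $f(r) = \tfrac{1}{|X|}\sum_{x \in X} I(x \in B_r(p))$ is the empirical proportion of sample points falling inside the geodesic ball of radius $r$ around $p$. Both bounds then follow from properties of these two factors together with the assumption, built into \defref{curve-disagg}, that $R$ is sufficiently small.

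For the lower bound, $f(r) \geq 0$ is immediate since it is a nonnegative sum of indicators. For $s(r)$, use compactness of $\X$: the scalar curvature is bounded, so taking $R$ small enough (in the sense already required by \defref{curve-disagg}) guarantees $1 - \Sc(p)r^2/(6(n+2)) > 0$ for every $r \in [0,R]$, whence $s(r) > 0$. The product of two nonnegative quantities is nonnegative, giving $\fcnest{p}(r) \geq 0$.

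For the upper bound, first note $f(r) \leq 1$ trivially, as it is a count bounded by $|X|$ divided by $|X|$. For $s(r)$, invoke \thmref{ratio} to identify $s(r)$, up to $o(r^2)$, with the volume ratio $\vol(\ball{2}{0}{r})/\vol(\ball{d_\X}{p}{r})$. Combining with the Euclidean ball volume formula \eqref{ballvol} and the sampling interpretation of $f(r)$ (as in the chain of equalities from the proof of \thmref{dis}), the product is an estimator for $\fcntheo{\X}(r) = \vol(\ball{2}{0}{r})/\vol(\X)$. The choice of $R$ sufficiently small ensures $\vol(\ball{2}{0}{r}) \leq \vol(\X)$ throughout $[0,R]$, so the theoretical target lies in $[0,1]$.

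The main obstacle is that the upper bound argument as sketched is really a statement about the estimand rather than the estimator: for a fixed finite sample, a single realization of $f(r)$ could overshoot $\vol(\ball{d_\X}{p}{r})/\vol(\X)$ and push the product above $1$. I would address this either (a) by tightening the claim to hold asymptotically via \thmref{dis}, reading the lemma as a statement about the range of the estimator in the regime it is designed for, or (b) by restricting $R$ further so that even in the worst case where $f(r) = 1$, the curvature correction $s(r)$ stays below $\vol(\X)/\vol(\ball{2}{0}{r}) \cdot 1$, which is possible since $s(r) \to 1$ as $r \to 0$ uniformly on the compact manifold. Either phrasing keeps the proof short while matching the paper's conventions on the admissible radius range.
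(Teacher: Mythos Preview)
Your route diverges from the paper's, which is much shorter and does not decompose $\fcnest{p}(r)$ into a curvature factor times a proportion at all. The paper argues only from the count: at $r=0$ the open ball is empty, so $\fcnest{p}(0)=0$; by compactness of $\X$ there is a radius $\maxR$ for which the ball around $p$ contains all of $X$, making the indicator sum equal to $|X|$ and the value $|X|/|X|=1$, which is declared the maximum. In effect the paper treats $\fcnest{p}$ as the bare proportion $f(r)$ and silently drops the curvature correction $s(r)$.

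Your decomposition is therefore more faithful to \defref{curve-disagg}, and the obstacle you flag is genuine: when $\Sc(p)>0$ one has $s(r)>1$ for every $r>0$, so if a finite sample happens to lie entirely in $B_r(p)$ then $s(r)f(r)>1$ and the pointwise upper bound fails. Your workaround (b) does not rescue this, since in positive curvature $s(r)\to 1$ from above, not below, so no choice of small $R$ forces $s(r)\leq 1$. If your goal is to match the paper, simply bound $f(r)\in[0,1]$ directly and do not mention $s(r)$; if your goal is a statement that respects the curvature factor, your option (a)---reading the range as an asymptotic claim via \thmref{dis}---is the honest fix, at the cost of weakening the lemma from a pointwise bound to a limiting one.
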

\begin{proof}
    If $r=0$, then $\ball{\X}{p}{0}=\emptyset$.
    Thus, by definition,
    $\diskf = 0$.
    Additionally, by the compactness of $\X$, there exists $\maxR \geq 0$ such
    that~$\ball{\X}{p}{\maxR}$ covers all of
    $X$. This implies $\sum_{x \in X} I(x \in \ball{\X}{x}{\maxR}) = |X|$.
    Increasing $\maxR$ by $\delta > 0$
    continues to have $\ball{\X}{p}{\maxR + \delta}$ covering all of $X$,
    so the maximum value attained by $\diskf$ is $|X|/|X| = 1$.
\end{proof}
The same result holds for aggregated \ourkf.
\begin{lemma}[Range of Aggregated \ourkF]\label{lem:range2}
    Let $\X$ be a compact Riemannian manifold, let $X \subset \X$ a finite uniform sample,
    and fix a point $p \in X$.
    For the aggregated \ourkf, we have $\fcnest{} \in [0, 1]$
    for each radius $r$.
\end{lemma}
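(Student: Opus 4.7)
The plan is to mirror the proof of \lemref{range} almost verbatim, exploiting the fact that the aggregated estimator is simply the average of the local ones:
\[
\fcnest{}(r) \;=\; \frac{1}{|X|} \sum_{p \in X} \fcnest{p}(r).
\]
Since the interval $[0,1]$ is preserved under convex combinations, once the local estimators are known to lie in $[0,1]$ by \lemref{range}, the aggregated estimator does as well. This is the one-line argument I would lead with.

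For a self-contained proof, I would then repeat the two-step bracketing used for \lemref{range}. First, for the lower bound, I would note that at $r=0$ every open ball $\ball{\X}{p}{0}$ is empty, so every indicator $I(x\in B_0(p))$ vanishes and (since the $r^2$ factor kills the curvature correction there) $\fcnest{}(0)=0$; monotonicity of the indicators in $r$ keeps $\fcnest{}(r)\geq 0$ thereafter. Second, for the upper bound, I would invoke compactness of $\X$ to produce a radius $\maxR$ for which every $\ball{\X}{p}{\maxR}$ contains all of $X$; then each inner sum saturates at $|X|$, the outer sum at $|X|^2$, and division by $|X|^2$ yields $\fcnest{}(r) \leq 1$.

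The one genuine subtlety, which the proof of \lemref{range} quietly sidesteps, is the scaling prefactor $(1 - \Sc(p)\,r^2 / (6(n+2)))^{-1}$: when $\Sc(p) > 0$ this exceeds $1$, so a priori the product could overshoot before the ball saturates. I would handle this the same way \lemref{range} does, namely by restricting to the regime of sufficiently small $r$ where \thmref{ratio} is meaningful (so the prefactor is $1 + o(1)$) and the inner sum is correspondingly small, or by pointing out that the flat case carries no prefactor at all. Under this understanding the bound passes through cleanly, and I expect the proof to fit in three or four lines; no new machinery beyond \lemref{range} is required.
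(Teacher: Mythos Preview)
Your proposal is correct and follows essentially the same approach as the paper: both proofs invoke \lemref{range} to place each $\fcnest{p}$ in $[0,1]$, note that $r=0$ gives the value $0$, and use compactness to produce a covering radius at which the inner sums saturate. The only minor difference is emphasis: the paper explicitly exhibits the maximal $\maxR^x$ so that every $\fcnest{p}(\maxR^x)=1$ and hence the aggregate actually \emph{attains} the value $1$, whereas your lead convexity argument only shows the aggregate is \emph{bounded} by $1$; your self-contained version recovers this, and your observation about the curvature prefactor is a genuine subtlety that the paper's proof also leaves unaddressed.
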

\begin{proof}
    Let $p,x \in X$. Again, the minimum value attained by $\fcnest{}$ is vacuously $0$, by \\
    setting $r=0$.
    By \lemref{range}, the range of $\diskf$ is $[0,1]$ for every $p \in X$.
    Denote $\maxR^x$ as the minimum radius needed
    such that~$\ball{\X}{x}{\maxR^x}$ covers all of $X$.
    Next, find the $x \in X$ such that $\maxR^x \geq \maxR^p$ for any other
    $p\not=x \in X$. Then $\hat{K}_{p}(\maxR^x) = 1$ for every $p \in X$, so we
    have $\hat{K}_{\dm{X}}(\maxR^x) = \frac{|X|\cdot 1}{|X|} = 1$.
    Hence, the range of $\fcnest{}$ is again $[0, 1]$.
\end{proof}

With the possible range of $\fcnest{p}$ and $\fcnest{}$ established, we now bound the
error of any $\fcnest{p}$ and $\fcnest{}$ with respect to the theoretical \ourkf, $\fcntheo{\X}$.

\begin{lemma}[Bounding Error]\label{lem:range3}
    Let $\X$ be a compact Riemannian manifold, and $X \subset \X$ a finite uniform sample.
    The maximum error of $\fcnest{}$ is $|X|$.
\end{lemma}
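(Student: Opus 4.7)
The plan is to reduce the bound to a straightforward consequence of the two preceding range lemmas. First, I would invoke \lemref{range2} to conclude that $\fcnest{}(r) \in [0,1]$ for every admissible radius $r$, and note the companion observation that $\fcntheo{\X}(r) = \vol(\ball{2}{0}{r})/\vol(\X) \in [0,1]$ provided $r$ is small enough that the Euclidean ball fits inside $\X$ (which is the regime of interest throughout \secref{flatKFunc}). Combining these two facts yields the pointwise bound
\begin{equation*}
    |\fcntheo{\X}(r) - \fcnest{}(r)| \leq 1
\end{equation*}
at every radius $r$ in the domain.

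Next, I would convert this pointwise bound into a bound on the $L_2$-error $\|\fcntheo{\X} - \fcnest{}\|_2$ that appears in \defref{score}. The key observation is that the empirical estimator $\fcnest{}(r)$ is a step function whose breakpoints are exactly the $|X|^2$ pairwise distances between points of $X$; it can only take at most $|X|^2$ distinct values. Evaluating the $L_2$-norm as a sum of squared errors over this canonical finite grid of radii gives
\begin{equation*}
    \|\fcntheo{\X} - \fcnest{}\|_2^2 \;=\; \sum_{r} |\fcntheo{\X}(r) - \fcnest{}(r)|^2 \;\leq\; \sum_r 1 \;\leq\; |X|^2,
\end{equation*}
and taking square roots yields $\|\fcntheo{\X} - \fcnest{}\|_2 \leq |X|$, as claimed.

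The main obstacle here is purely notational: the statement of the lemma is terse, and one has to pin down the precise notion of ``error'' intended. The bound $|X|$ is precisely what is required for the normalisation $1/|X|$ in \defref{score} to produce an aggregated manifold score lying in $[0,1]$, which serves as a sanity check that the intended error is the $L_2$-norm of the functional difference evaluated over the natural discrete grid of $|X|^2$ radii. A secondary, minor subtlety is confirming that $\fcntheo{\X}$ is indeed bounded by $1$ on the range of radii under consideration; this follows from restricting to $r$ small enough that $\vol(\ball{2}{0}{r}) \leq \vol(\X)$, which is already implicit in the compactness assumption on $\X$ used in \lemref{range} and \lemref{range2}.
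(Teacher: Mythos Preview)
Your approach differs from the paper's, which is a two-line sketch: the paper simply observes that both $\fcnest{p}$ and $\fcntheo{\X}$ are proportions (hence bounded by $1$ pointwise) on the interval $[0,R]$, asserts that ``maximally, $R$ could be $|X|$'', and concludes that the error is at most $1 \cdot |X|$. That is, the paper bounds the error by (maximum pointwise difference) $\times$ (length of the radius interval), not by discretising at breakpoints of the estimator.

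Your route via the $|X|^2$ jump points of $\fcnest{}$ and a discrete $\ell_2$ sum is more explicit about where the factor $|X|$ arises (as $\sqrt{|X|^2}$), and it avoids the paper's unexplained identification of the maximal radius with the sample size. Two caveats, though. First, the step-function claim is only literally true in the flat case: when $\Sc(p)\neq 0$ the curvature correction $(1 - \Sc(p)r^2/6(n+2))^{-1}$ varies continuously in $r$, so $\fcnest{}$ is piecewise smooth with jumps rather than a pure step function. This does not break your pointwise bound, but it weakens the justification that the discrete grid ``captures'' the $L_2$ norm. Second, reading $\|\cdot\|_2$ as a finite sum over that grid is an interpretive choice, not something stated in \defref{score}; the paper's own argument is equally loose on this point (read as a genuine continuous $L_2$ norm over $[0,R]$ with $R\leq |X|$, one would get $\sqrt{|X|}$ rather than $|X|$). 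Both proofs are really heuristic normalisation checks ensuring the score in \defref{score} lands in $[0,1]$; yours makes its assumptions more visible, while the paper's trades the breakpoint count for an interval-length bound.
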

\begin{proof}
    Both of $\fcnest{p}$ and $\fcntheo{\X}$ are proportions defined on the interval, $[0,R]$.
    Maximally, $R$ could be $|X|$, and the error could be $1*|X|$.
\end{proof}

From \lemref{range}, \lemref{range2}, and \lemref{range3} we guarantee the well-definedness and normalization
of $M$. Hence, \defref{score} quantifies how ``manifold-like''
a distance matrix $\dm{X}$ is on the interval $[0,1]$.
Intuitively, a manifold score of zero indicates that an uncovered sample from
some manifold learning technique was completely nonuniform (with all
points of $\dm{X}$ coinciding), and therefore the manifold learning model can be thought to have performed poorly.
On the other hand, a manifold score of one says that an uncovered sample was perfectly uniform, that is,
identical to the theoretical \ourkf.
By assumption, since $\dm{X}$ was an approximation of a uniform sample $X$ on
an ambient manifold $\X$, we consider  $\dm{X}$ to be a good approximation
in this case.

\section{Ripley's $K$-Function}\label{append:spatial}
Here, we provide a few concepts from spatial statistics, but refer the reader
to \cite{cressie2015statistics}, \cite{chiu2013stochastic},
\cite{daley2003introduction} and \cite{karr2017point} for a more comprehensive
overview of the relevant statistical definitions. Given $n \in \N$ and a Borel
space~$S$ with measure $\mu$, a \emph{(binomial) point process} $\Phi$ of $n$
points is a random variable valued in functions $f \colon S \to \N$,
where~$||f||_{1}:= \sum_{s \in S} f(s) =n$.
In other words, one way to think of $\Phi$ is as a random variable valued in in
finite multisets (i.e., points) of size $n$ in~$S$; here, usually, $S$ is
a compact subspace of $\R^d$, but, more generally, we allow $S$ to be
a Borel-measurable metric space~\cite{ripley1976locally}. In this light, we
write $\Phi$ as the sum of $n$ Dirac delta functions:
$\Phi = \sum \delta_{X_i}$,
where~$X_i$ is a random variable valued in $S$.

One common tool to study spatial point patterns, such as a set of points
sampled from a manifold, is Ripley's \ripkf~\cite{kfcn}. In particular, the
\ripkf is a tool to assess complete spatial randomness (homogeneity) of a point
process. As we saw in \subsecref{statsBackground}, if the sample is truly
uniformly distributed, then the proportion of the sample points that land
within a specific region is proportional to the volume of that region.

\begin{definition}[Ripley's $K$-Functions]\label{def:dkfn}
    Let $n \in \N$, $R \geq 0$, and $\Phi = \sum_{i=1}^n \delta_{X_i}$ be
    a binomial point process with $n$ points over a Lebesgue-measurable,
    compact set $S \subset \R^d$.
    Define the set~$S_{R} := \{ x \in S ~|~ d_2(x,\bdry S) \leq R \}$, where
    $\bdry S$ is the boundary of $S$. For each
    $p \in S_R$, \emph{Ripley's local \ripkf} for~$\Phi$ at~$p$ is the
    function~$\K_p \colon [0,R] \to \R$ defined by\footnote{More generally, the
    $K$-function can be defined for all points in $D$; however, care must be
    taken when considering points near the boundary of $D$.}
    \begin{equation}
        \K_{p}(r) = \frac{\vol(S)}{n}
            \E \big[ \left| \ball{d}{p}{r} \cap \{ X_i \}_{i=1}^n\right| \big],
    \end{equation}
    where $\E[\cdot]$ denotes expectation.

    The \emph{aggregated $K$-function} is the function $\K \colon [0,R] \to \R$
    defined by
    \begin{equation}
        \K(r) := \int_{p \in S} K_p(r) \, d\mu(p)
    \end{equation}
\end{definition}

In short, Ripley's \ripkf for a random sample is proportional to the expected
number of points that land in~$\ball{d}{p}{r}$. If the points are taken iid
from the uniform distribution over $S \subset \R^2$, then, by \eqref{expected-num-points},
we have~$\E\left[ \K_{p}(r) \right]= \frac{\vol(S)}{n} \cdot \frac{n\vol(\ball{d}{p}{r})}{\vol(S)} = \pi r^2$,
and as stated in \subsecref{statsBackground} gives in general the volume of balls in $\R^d$.

Let $X$ be a realization of $\Phi$. To estimate~$K_p$ in practice, we use the
empirical $K$-function, which is the function~$\estK_p \colon [0,R] \to \R$
defined~by:
\begin{equation}\label{eqn:local-empirical-kfun}
    \estK_{p}(r)
    = \widehat{\lambda}^{-1} \left[ \sum_{x \in X} I(x \in \ball{2}{p}{r})
    \right],
\end{equation}
where $I$ is the indicator function and $\widehat{\lambda}$ is an estimator of
density per unit area ($n \slash \vol(S)$). Often, it is more practical to
divide this quantity by $\vol(S)$, resulting in $\widetilde{\K}_{p}(r)=
\frac{1}{\vol{S}} \cdot \estK_{p}(r)$, representing the proportion of sample
points that land in $\ball{2}{p}{r}$. The \emph{empirical local~$K$-function}
is the collection of functions~$\{ \widehat{K}_x \}_{x \in X}$.
Analogously, the aggregated empirical~\ripkf~$\estK \colon [0,R] \to \R$ is
defined by:
\[\estK(r) = \frac{1}{n} \sum_{x \in X} \estK_x(r).\]

Note that these definitions assume homogeneity of the point process, and hence
$\lambda$ is a constant. If, however, this assumption is not met, we can adjust
our estimate in Equation \ref{eqn:local-empirical-kfun} by weighting points in
the point process by the estimated intensity measure at that point (see e.g.
\cite{baddeley2000non}):

\begin{equation}\label{eqn:local-empirical-kfun-inhom}
    \estK_{p}(r)
    =  \left[ \sum_{x \in X} \widehat{\lambda}(x_i)^{-1} I(x \in \ball{2}{p}{r})
    \right],
\end{equation}

Also, note that the definition we present for the K-function is
biased-corrected using the boundary method (see \cite{ripley1988statistical} Ch. 3)
by only considering the points $S_R$ that is the set S eroded by distance $R$.
See Figure \ref{fig:point-realizations} for a visualization of this for both
a homogeneous and an inhomogeneous point process. Other bias correction methods
exist to account for this bias. For example, one method weights points near the
boundary by the proportion of $\ball{d}{p}{r}$ that lies within the manifold.

\begin{figure}
    \includegraphics[width=\linewidth]{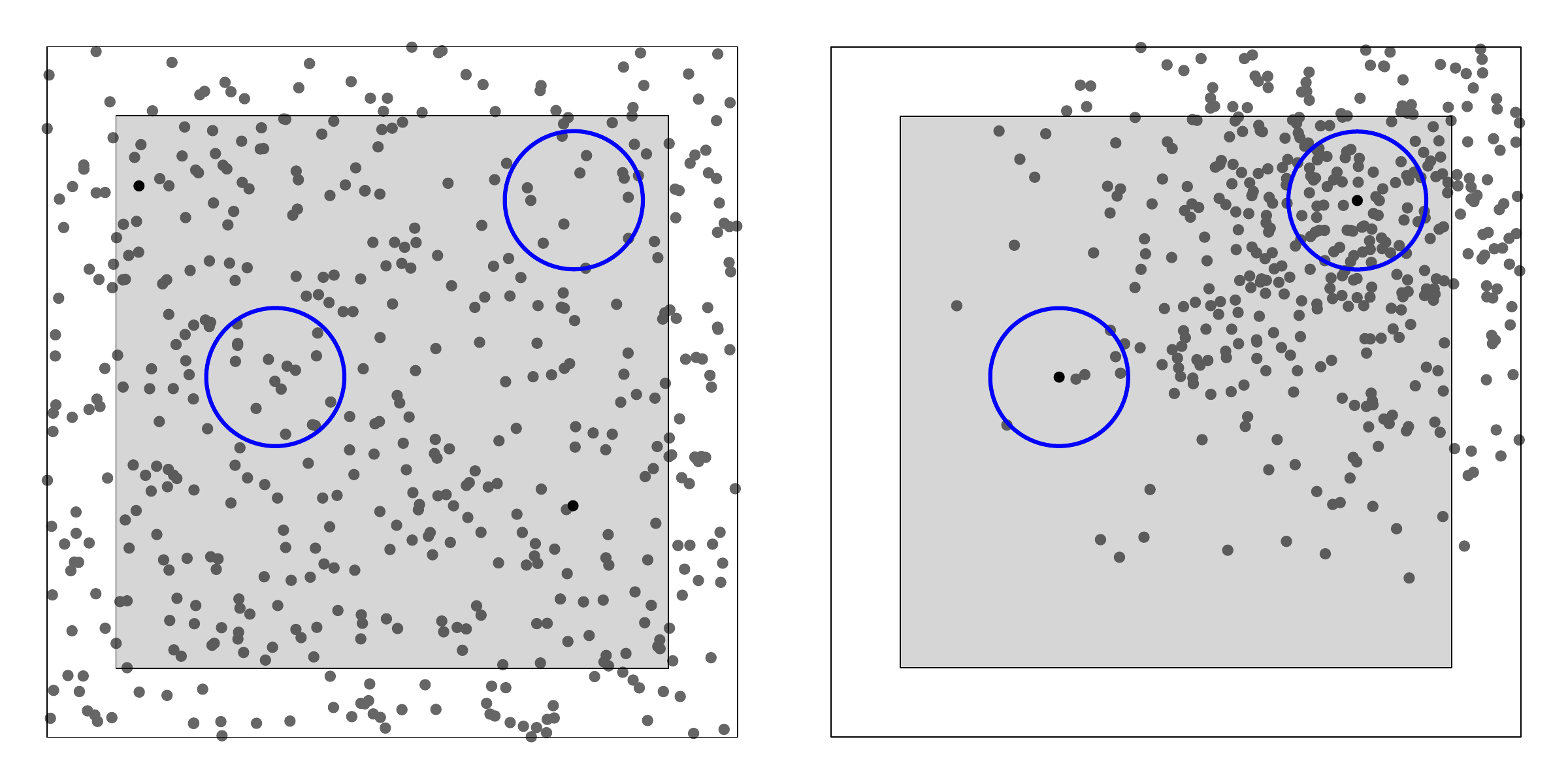}
    \caption{Two realizations of a Binomial point processes. Left shows a
    homogeneous point process and right shows an inhomogeneous point
    process.\camera{change circles to disks}}
    \label{fig:point-realizations}
\end{figure}

\section{Definitions from Differential Geometry}
\subsection{A Discussion of Intrinsic and Extrinsic Validation}
In general, \emph{model validation} takes a model and quantifies the correctness
of its output.
There are two main classes of validation algorithms:
\emph{extrinsic} and \emph{intrinsic}.

In \emph{extrinsic} validation, the labels in data are known.
In this case, the validation process compares a
model's output to the ground truth in order to
quantify the model's correctness.
Extrinsic validation is generally the easiest and strongest method for validation,
one particularly ubiquitous variant is precision and recall
which has been studied in the context for manifolds in \cite{geodesicPrecisionRecall}.

In \emph{intrinsic} validation, the ground truth
correct output of a model is not known.
Thus, the validation process must use properties
of only the model's output in order to
evaluate its correctness.
A particularly notable example of intrinsic validation includes
silhouette coefficients for clustering \cite{rousseeuw1987}.
In practice, extrinsic validation typically accompanies
supervised learning and intrinsic validation
is often coupled with unsupervised learning.
As the title alludes, this paper
focuses on intrinsic validation.


\subsection{A Review of Basic Definitions in Differential Geometry}\label{append:def-diff}

We present fundamental definitions from differential geometry in the following section.
For a survey, we suggest \cite{chen1984} and \cite{lee2019intro}.

We begin with a formal definition of Riemannian metrics, which are crucial in the definition of
a Riemannian manifold
\begin{definition}[Riemannian metric]
    Let $(\X,d)$ be a smooth manifold. We call $d$ a \emph{Riemannian metric} if $d$ is a smooth covariant
    two-tensor field whose value $d_x$ at each $x \in \X$ is an inner product on the tangent space $T_x\X$.
\end{definition}

On a manifold $\X$, we often study paths. A
\emph{path} between $x, y \in \X$ is a continuous map $\gamma:[0,1]\to\X$,
where $\gamma(0) = x$ and $\gamma(1) = y$. A path $\gamma$ has \emph{length},
which is defined as follows:

\begin{definition}[Length]\label{def:length}
    Let $(\X, d)$ be a topological space and let $\gamma$ be a \emph{path} in $(\X, d)$.
    Let~$\mathcal{P}$ be the set of all finite subsets $P= \{t_i\}$  of $[0,1]$ such that
    such that~$0 = t_0 < t_1 < ... < t_n = 1$. The
    \emph{length} $L_d(\gamma)$
    of~$\gamma$ is:
    \[ L_d(\gamma) := \sup_{P \in \mathcal{P}} \sum_{i=1}^{n}{d(\gamma(t_i), \gamma(t_{i-1}))}. \]
\end{definition}

The shortest paths between elements in $\X$ are usually of particular interest,
which are called \emph{geodesics}.

\begin{definition}[Geodesic]\label{def:geodesic}
    Let $(\X, d)$ be a topological space, let $x, y \in \X$, and let $\Gamma$
    be the set of all paths in $(\X, d)$ from $x$ to $y$.
    A \emph{geodesic} $\gamma^*$ from $x$ to $y$ is a path with shortest length:
    \[\gamma^* = \inf_{\gamma \in \Gamma} L_d(\gamma)\]
\end{definition}

Let $U \subseteq \X$ be an open subset of $\X$, and
let $\phi:U \to \R^n$ be a coordinate chart of $\X$ (that is, $\phi$ is a homeomorphism).
We say that $\phi$ is \emph{smooth}
if for every $u \in U$, every component function of $\phi$ has
continuous partial derivatives of every order.
Let $U, V \subseteq \X$ be open subsets of $\X$ and examine
two coordinate charts $\phi:U\to\R^n$ and $\psi:V \to \R^n$.
We call $\phi, \psi$ \emph{smoothly compatible} if both $\phi \circ \psi^{-1}$
and $\psi \circ \phi^{-1}$ are smooth when defined on $\phi(U \cap V)$ and $\psi(U \cap V$)
respectively. An \emph{atlas} of $\X$ is a set of coordinate charts whose domains cover $\X$,
and an atlas is a \emph{smooth atlas} if any two charts in the atlas are smoothly compatible.
A \emph{smooth structure} is a smooth atlas not properly contained in another smooth atlas.
Finally, a \emph{smooth manifold} is a manifold $\X$ equipped with a smooth structure.

Let $x \in \X$ and let $U$ be an open neighborhood of $x$. Let $C^{\infty}(V,\R)$
be the set of all smooth mappings $f:U\to\R$. Let $f, g \in C^{\infty}(V,\R)$.
A \emph{tangent vector} $T_x$ on $\X$ at $x$
is a linear map $T_x:C^{\infty}(V,\R) \to \R$ satisfying the Leibniz law of products:
\[T_x(f\cdot g) = f(x)T_x(g) + g(x)T_x(f).\]

The \emph{tangent space} on $T_x\X$ on $\X$ at $x$ is the set of all tangent vectors on $\X$ at $x$.

\subsection{A Review of Tensor Products and the Volume Form} \label{append:volume}

We begin with a condensed review of tensors, and describe
one of their most important constructions in differential geometry: the volume form.
For a more detailed description, we refer
readers to \cite{lee2019intro}. We begin with alternating tensors and wedge products,
and then proceed to the volume form itself.

\begin{definition}[Alternating Tensor]
    Let $V$ be a finite-dimensional vector space, and let $F$ be a
    covariant $k$-tensor on $V$. We call $F$ an \emph{alternating tensor}
    if the interchanging of two arguments causes $F$ to change sign:
    \[F(v_1, ..., v_i, ..., v_j, ... , v_k) = -F(v_1, ..., v_j, ..., v_i, ... , v_k)\]
\end{definition}

If $F$ is an alternating $k$-tensor, then it follows that for an arbitrary permutation of arguments $\sigma$,

\[F(v_{\sigma(1)}, ... , v_{\sigma(k)}) = \text{sgn }\sigma F(v_1, ... , v_k).\]

Where sgn $\sigma$ is the sign of $\sigma$, giving $+1$ if $\sigma$ is an even permutation
and $-1$ if $\sigma$ is odd. Similarly, the \emph{alternation} of $F$ is given as follows:

\begin{definition}[Alternation]
    Let $F$ be a covariant $k$-tensor. Let $S_k$ be the set of permutations of $F$.
    The \emph{alternation} Alt $F$ of $F$ is given by the expression:
    \[\text{Alt $F$}(v_1, ... v_k) = \frac{1}{k!}\sum_{\sigma \in S_k} (\text{sgn }\sigma) F(v_{\sigma(1)}, ..., v_{\sigma(k)})\]
\end{definition}

It is easy to check that Alt $F = F$ iff $F$ is alternating.
Using the Alt operation, we define the wedge product of tensors:

\begin{definition}[Wedge Product]
    Let $V$ be a finite-dimensional vector space.
    Let $F$ be an alternating covariant $k$-tensor on $V$, and let $G$ be an alternating
    covariant $l$-tensor on $V$. Then, their \emph{wedge product} is defined by:
    \[F \wedge G = \frac{(k+l)!}{k!l!}\text{Alt}(F \otimes G)\]

    Where $F \otimes G$ is the usual tensor product.
\end{definition}

Now, in order to understand the volume form rigorously, we present its definition below.
Many definitions give the same property as the one below, along with two equivalent
expressions. For simplicity we provide only the first property given in most definitions.

\begin{definition}[Volume Form]
    Let $(\X,g)$ be an oriented Riemannian manifold (with or without boundary).
    Let $(\eps_1, ..., \eps_n)$ be a local oriented orthonormal coframe for $T^*\X$, the cotangent bundle
    of $\X$. The \emph{volume form}
    $dV_g$, or $dV$ as a shorthand, is the unique $n$-form on $\X$ such that $dV_g = \eps_1 \wedge ... \wedge \eps_n$.
\end{definition}

By design, if we have a continuous function $f:\X \to \R$ on a compact orientable Riemannian manifold (with or without boundary),
then $f \cdot dV_g$ is a compactly supported $n$-frame, and taking an integral $\int_{\X}f \cdot dV_g$ is well defined.
Moreover, if $\X$ is compact, the \emph{volume} of $\X$ is defined canonically:
\[\text{Vol}(\X) = \int_{\X}dV_g.\]

\subsection{The Gauss-Codazzi Equations for Hypersurfaces}\label{append:diff-eqn}

As with basic tensor operations, this article assumes a some working knowledge of concepts of curvature
in differential geometry. For a detailed review of Riemannian, Ricci, Scalar, and Mean curvatures,
as well as the second fundamental form,
we again refer readers to Lee \cite{lee2019intro}.
\ben{probably need more defs here from diff geom}

In the results for hypersurfaces, we use the \emph{second fundamental form} and the \emph{length}
of the second fundamental form: 

\begin{definition}[Second Fundamental Form]\label{def:form}
    Let $M$ be an $n$-dimensional hypersurface in of a Riemannian manifold 
    $M'$, and let $D$ and $D'$ be their covariant derivatives. Then, for
    $m \in M$ and $u,v \in T_m M$ the second fundamental form $h$
    of $M$ is given by: 
    \[h(u,v) := \left(D'_U V - D_u V \right)_m'\]

    where $U,V$ are vector fields tangent to $M$ at any point of $M$, defined in a neighborhood of $m'$ in $M'$,
    and with respective values $u$ and $v$ at $m$.
\end{definition}

\begin{definition}[Length of Second Fundamental Form]\label{def:lengthform}
    Let $h$ be the second fundamental form at a point $p \in \X$, a Riemannian manifold.
    The squared \emph{length} $||h||^2$
    of $h$ is given by:
    \[||h||^2 = \sum_{i,j=1}^n h(E_i, E_j)^2\]

    Where $E_1, ..., E_n$ is a local orthonormal frame of tangent vector fields over $\X$.
\end{definition}

\subsection{Hypersurfaces and their First Laplacian Eigenvalue}\label{append:eigenvalues}

Naturally, after showing that the aggregated \ourkf can be scaled as a function of $\lambda_1$ in high dimensions,
one wonders how to actually compute $\lambda_1$. The following lays out the bulk of known results for $\lambda_1$
with respect to a number of different manifolds.

\begin{theorem}[Conformal Clifford Torus]
    If $M$ is a conformal Clifford torus, Then
    \[\lambda_1 * vol(M) \leq 4*\pi^2\]
\end{theorem}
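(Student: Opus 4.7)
The plan is to use the Hersch--Li--Yau style argument, which exploits conformal invariance of the Dirichlet energy in dimension two together with a topological normalization of test functions. Throughout, write $M = T^2$ equipped with a metric $g = e^{2\varphi} g_0$ conformally equivalent to the standard flat Clifford torus metric $g_0$ on $S^1(1/\sqrt{2}) \times S^1(1/\sqrt{2}) \subset S^3$, and let $\phi = (\phi_1,\phi_2,\phi_3,\phi_4) : M \to S^3 \subset \mathbb{R}^4$ be the standard embedding, so $\sum_i \phi_i^2 \equiv 1$.

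First I would invoke the Hersch--type normalization: by a standard topological fixed-point argument on the group of conformal automorphisms of $S^3$, there exists a conformal transformation $F$ of $S^3$ such that the components $\tilde\phi_i := (F \circ \phi)_i$ satisfy $\int_M \tilde\phi_i \, dV_g = 0$ for each $i$. This renders the $\tilde\phi_i$ valid test functions in the Rayleigh quotient for $\lambda_1$, yielding
\begin{equation*}
    \lambda_1 \int_M \tilde\phi_i^2 \, dV_g \;\leq\; \int_M |\nabla_g \tilde\phi_i|^2 \, dV_g \qquad (i=1,\dots,4).
\end{equation*}

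Next I would sum over $i$ and use $\sum_i \tilde\phi_i^2 \equiv 1$ on $S^3$ to obtain
\begin{equation*}
    \lambda_1 \cdot \mathrm{vol}(M) \;\leq\; \sum_{i=1}^4 \int_M |\nabla_g \tilde\phi_i|^2 \, dV_g.
\end{equation*}
The critical observation is that in two dimensions the Dirichlet energy is conformally invariant: $|\nabla_g u|^2 \, dV_g = |\nabla_{g_0} u|^2 \, dV_{g_0}$ for any smooth $u$. Hence the right-hand side equals $\sum_i \int_M |\nabla_{g_0} \tilde\phi_i|^2 \, dV_{g_0}$, and since $F$ is a conformal self-map of $S^3$, a change of variables reduces this to the Dirichlet energy of the coordinate functions on the standard Clifford torus.

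Finally I would evaluate the reduced quantity directly. On the standard Clifford torus, each $\phi_i$ is an eigenfunction of the Laplacian with eigenvalue $\lambda_1^{\mathrm{std}} = 2$ (these are restrictions of linear coordinates, which on $S^1(1/\sqrt{2})$ have eigenvalue $2$). Thus $\int |\nabla_{g_0} \phi_i|^2 dV_{g_0} = 2 \int \phi_i^2 dV_{g_0}$, and summing gives $\sum_i \int |\nabla_{g_0}\phi_i|^2 dV_{g_0} = 2 \cdot \mathrm{Area}(g_0) = 2 \cdot 2\pi^2 = 4\pi^2$. Combining the inequalities yields $\lambda_1 \cdot \mathrm{vol}(M) \leq 4\pi^2$, with equality precisely for the standard Clifford torus itself.

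The main obstacle is the Hersch normalization step: establishing that a conformal automorphism of $S^3$ can be chosen to orthogonalize all four coordinate functions against the constants with respect to the conformal measure $e^{2\varphi} dV_{g_0}$. This is the genuinely nontrivial part of the argument and relies on a degree-theoretic / Brouwer fixed-point argument on the moduli space of conformal transformations; the remaining pieces (conformal invariance in $2$D and the spectral data for the standard Clifford torus) are then straightforward.
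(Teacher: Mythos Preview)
The paper does not actually prove this theorem: it is quoted in the appendix as a known eigenvalue bound from the literature (in the spirit of \cite{chen1984}), with no argument supplied. Your outline is essentially the Li--Yau conformal-volume proof, which is indeed the standard route to this inequality, so there is nothing in the paper to compare it against beyond noting that you are reproducing the classical argument.

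That said, there is a genuine gap in your sketch. After the Hersch normalization you arrive at
\[
\lambda_1\,\mathrm{vol}(M)\;\le\;\sum_{i=1}^{4}\int_M |\nabla_{g_0}\tilde\phi_i|^2\,dV_{g_0},
\]
and you then assert that ``since $F$ is a conformal self-map of $S^3$, a change of variables reduces this to the Dirichlet energy of the coordinate functions on the standard Clifford torus.'' This is not correct: $F$ acts on the target $S^3$, not on the domain, so no change of variables on $M$ is available. What one actually has is
\[
\sum_i \int_M |\nabla_{g_0}\tilde\phi_i|^2\,dV_{g_0}
\;=\; 2\,E_{g_0}(F\circ\phi)
\;=\; 2\,\mathrm{Area}\bigl(F(\phi(M))\bigr),
\]
the last equality because $F\circ\phi$ is a conformal map from a surface. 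This area is \emph{not} equal to $2\pi^2$ for a general conformal automorphism $F$ of $S^3$. The inequality you need, $\mathrm{Area}(F(\phi(M)))\le \mathrm{Area}(\phi(M))=2\pi^2$, is exactly the statement that the conformal volume of a \emph{minimal} surface in $S^n$ equals its area (Li--Yau, El~Soufi--Ilias); it relies essentially on the minimality of the Clifford torus in $S^3$ and is itself a nontrivial theorem, not a change of variables. So your identification of the Hersch centering as the ``main obstacle'' is only half the story: bounding the post-centering Dirichlet energy by $4\pi^2$ is the second substantive step, and your current justification for it is the part that fails.
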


\begin{theorem}[Veronese Surface]
    If $M$ is a conformal Veronese surface, then we have
    \[\lambda_1 * \text{vol}(M) \leq 12*\pi,\]

    with equality holding if and only if $M$ admits an order 1
    isometric embedding.
\end{theorem}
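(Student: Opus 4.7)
The plan is to apply the Li--Yau conformal volume technique. Recall the variational characterization of the first eigenvalue,
\[ \lambda_1 = \inf \left\{ \frac{\int_M |\nabla f|^2\, dV}{\int_M f^2\, dV} \;:\; f \in C^\infty(M),\; f \not\equiv 0,\; \int_M f\, dV = 0 \right\}. \]
Since $M$ is a conformal Veronese surface, there is a conformal immersion $\phi = (\phi_1, \ldots, \phi_5)\colon M \to S^4 \subset \mathbb{R}^5$, given (up to M\"obius motions of $S^4$) by the standard Veronese embedding of $\mathbb{RP}^2$. The strategy is to use the ambient coordinate functions $\phi_i$ as test functions in the Rayleigh quotient, after first arranging that they integrate to zero against $dV$.

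First, I would invoke Hersch's balancing trick: by pre-composing $\phi$ with a suitable conformal diffeomorphism of $S^4$, one may assume $\int_M \phi_i\, dV = 0$ for each $i = 1, \ldots, 5$. This is a standard degree argument applied to the center-of-mass map over the conformal group of $S^4$. Next, I would plug each balanced coordinate into the Rayleigh quotient and sum over $i$:
\[ \lambda_1 \sum_{i=1}^{5} \int_M \phi_i^2\, dV \;\leq\; \sum_{i=1}^{5} \int_M |\nabla \phi_i|^2\, dV. \]
Because $\sum_i \phi_i^2 \equiv 1$ on $S^4$, the left-hand side collapses to $\lambda_1 \cdot \text{vol}(M)$. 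For the right-hand side, the essential two-dimensional conformal invariance of the Dirichlet energy yields $\sum_i \int_M |\nabla \phi_i|^2\, dV = 2\,\text{Area}(\phi(M))$, and the standard Veronese image of $\mathbb{RP}^2$ in $S^4$ has total area $6\pi$. Combining these two identities gives $\lambda_1 \cdot \text{vol}(M) \leq 12\pi$.

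For the equality case, I would trace back through each inequality. Equality in the Rayleigh step forces each nontrivial $\phi_i$ to be a first eigenfunction of the Laplacian on $M$, so every component of $\phi$ lies in the $\lambda_1$-eigenspace. By Takahashi's theorem, an isometric immersion of a surface into a round sphere whose coordinate functions are eigenfunctions of a single eigenvalue is necessarily minimal and of constant length; conversely, such immersions are classified by the corresponding harmonic eigenspace. This is precisely the ``order 1 isometric embedding'' condition appearing in the statement, and on $\mathbb{RP}^2$ it singles out the Veronese embedding up to isometry of $S^4$ and homothety of the metric on $M$.

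The main obstacle I expect is the equality characterization rather than the inequality itself: the Hersch--Li--Yau argument for the bound is essentially mechanical once the conformal class is fixed, but reversing each step to conclude that $\phi$ must coincide with the Veronese embedding requires carefully combining Takahashi's theorem with the classification of minimal immersions of $\mathbb{RP}^2$ into spheres, and checking that no extraneous degeneracies (for instance, a drop of rank in $\phi$, or equality being forced by a nonstandard value of $\lambda_1$ paired with a coincidental conformal volume) can produce the sharp case.
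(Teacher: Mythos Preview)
The paper does not actually prove this theorem. It appears in the appendix on hypersurface eigenvalues as one of a list of known bounds on $\lambda_1$ quoted without argument, drawn from Chen's monograph on total mean curvature, and used only as input for the hypersurface scaling of the manifold density function. There is therefore no proof in the paper to compare your proposal against.

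That said, your Li--Yau/Hersch argument is the standard route to this inequality and is essentially correct. One point deserves tightening: after you apply the balancing M\"obius transformation $\gamma$, the image $\gamma\circ\phi(M)$ need not have area \emph{equal} to $6\pi$; what you actually need is that the conformal volume of $\mathbb{RP}^2$ is $6\pi$, i.e.\ $\sup_{\gamma}\operatorname{Area}(\gamma\circ\phi(M))=6\pi$, so that the summed Dirichlet energy is bounded above by $12\pi$ regardless of which $\gamma$ the balancing step selects. With that adjustment the inequality goes through cleanly, and your treatment of the equality case via Takahashi's theorem is the right mechanism: equality forces each balanced coordinate function to be a $\lambda_1$-eigenfunction, which is precisely Chen's ``order~$1$'' condition and, for $\mathbb{RP}^2$, singles out the Veronese embedding.
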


\begin{theorem}[Submanifolds of Hyperspheres]
    Let $M$ be an $n$-dimensional compact submanifold of a hypersphere $S^m(r)$ of radius $r$ in $\R^{m+1}$.
    Then $\lambda_1 \leq n$, with equality holding iff $M$ is of order 1.
\end{theorem}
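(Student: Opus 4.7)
The plan is to invoke the variational (Rayleigh quotient) characterization of the first Laplacian eigenvalue and use the ambient coordinate functions of the immersion as test functions, following the classical Takahashi-style argument. Recall that
\[
\lambda_1 = \inf\Bigl\{\tfrac{\int_M |\nabla f|^2\, dV}{\int_M f^2\, dV} : f \in C^{\infty}(M),\ \int_M f\, dV = 0,\ f \not\equiv 0 \Bigr\}.
\]
So any zero-mean test function yields an upper bound on $\lambda_1$.

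First, I would write the immersion as $x=(x_1,\ldots,x_{m+1}):M \to S^m(r) \subset \R^{m+1}$, and recenter each coordinate: set $c_i = \frac{1}{\mathrm{Vol}(M)}\int_M x_i\, dV$ and $f_i := x_i - c_i$, so each $f_i$ is admissible in the Rayleigh quotient. Applying the inequality to each $f_i$ and summing over $i$ gives
\[
\lambda_1 \sum_{i=1}^{m+1} \int_M f_i^2\, dV \;\leq\; \sum_{i=1}^{m+1}\int_M |\nabla f_i|^2\, dV.
\]

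Next, I would evaluate both sides pointwise on $M$. For the gradient side, choosing a local orthonormal frame $\{e_\alpha\}_{\alpha=1}^n$ of $TM$ gives $\sum_i |\nabla x_i|^2 = \sum_{\alpha} \sum_i (e_\alpha x_i)^2 = \sum_\alpha |e_\alpha|^2 = n$ at every point, since the $e_\alpha$ are viewed as unit vectors in $\R^{m+1}$ tangent to the sphere. For the $L^2$ side, use the sphere constraint $\sum_i x_i^2 = r^2$ and the definition of $c_i$ to obtain $\int_M \sum_i f_i^2\, dV = (r^2 - |c|^2)\,\mathrm{Vol}(M)$. Assembling the two pieces (and noting that after a possible rescaling to $r=1$, or equivalently interpreting the stated bound on the unit sphere), we conclude $\lambda_1 \leq n$.

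For the equality clause, equality in the summed Rayleigh inequality forces equality in every individual quotient, which means each $f_i$ is an eigenfunction of $\Delta_M$ with eigenvalue exactly $\lambda_1$. This is precisely the statement that $M$ is ``of order $1$'' in Chen's sense (the spectral decomposition of the position vector supports only the first nonzero eigenvalue, after removing the centroid). The main obstacle I anticipate is normalization bookkeeping: the clean form $\lambda_1 \leq n$ presupposes $r=1$ (otherwise a factor $1/r^2$ appears), and one must also verify that $|c|=0$ in the equality case so that the slack $r^2 - |c|^2$ does not weaken the bound. Handling the rescaling carefully and tying the equality case precisely to the definition of ``order $1$'' used in \cite{chen1984} is where I would spend the most care.
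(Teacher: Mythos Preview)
The paper itself does not prove this theorem: it appears in the appendix on Laplacian eigenvalues as one of several results quoted without proof from \cite{chen1984}, so there is no in-paper argument against which to compare your proposal.

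On its own merits, your coordinate-function / Rayleigh-quotient approach is the classical one for results of this type, but the argument as written has a genuine gap. The computation you outline yields
\[
\lambda_1 \;\leq\; \frac{n}{r^{2} - |c|^{2}},
\]
not $\lambda_1 \leq n$. You correctly flag the factor $r^{2}$ as a normalization issue, but you treat the $|c|^{2}$ term as relevant only to the equality discussion; in fact it weakens the \emph{inequality} itself whenever $c\neq 0$. For an arbitrary compact submanifold of $S^{m}(1)$ there is no reason to have $c=0$, and indeed the bound $\lambda_{1}\leq n$ can fail outright (a short geodesic circle in $S^{2}(1)$ has $\lambda_{1}$ as large as one likes). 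The statement as recorded in the paper is evidently missing a hypothesis---minimality of $M$ in $S^{m}(r)$ is the standard one, since by Takahashi's theorem the coordinate functions are then eigenfunctions with eigenvalue $n/r^{2}$, forcing $c=0$ and making your argument go through cleanly. Without such an assumption, the step from $n/(r^{2}-|c|^{2})$ to $n$ cannot be completed by normalization alone.
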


\begin{theorem}[Projective Plane]
    Let $M$ be a compact, $n$-dimensional, minimal submanifold of $\R P^m$, where $\R P^m$ is of constant sectional
    curvature $1$. Then it follows that
    $\lambda_1 \leq 2(n+1)$, with
    equality holding iff $M$ is a totally geodesic $\R P^n$ in $\R P^m$.
\end{theorem}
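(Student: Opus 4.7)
The plan is to apply a Takahashi-type argument built around the standard Veronese-style minimal isometric immersion of $\R P^m$. First I would recall that $\R P^m$ of constant sectional curvature $1$ admits a canonical minimal isometric immersion $\phi\colon \R P^m \hookrightarrow S^N(r) \subset \R^{N+1}$ (via trace-free symmetric rank-one matrices), whose coordinate functions $\phi^\alpha$ are eigenfunctions of $\Delta_{\R P^m}$ with eigenvalue $2(m+1)$. This eigenvalue matches the known first nontrivial Laplacian eigenvalue of $\R P^m$, namely the second eigenvalue of $S^m$, which is the lowest eigenvalue whose eigenspace descends under the antipodal quotient and so realizes $\lambda_1(\R P^m)$.

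Next I would restrict $\phi$ to $M$ and compute $\Delta_M(\phi^\alpha|_M)$ using the standard decomposition of the second fundamental form under composition of isometric immersions. Writing the second fundamental form of the composition $M \to \R P^m \to S^N(r) \to \R^{N+1}$ as a sum of three pieces and tracing over $T_pM$, the minimality of $M$ in $\R P^m$ kills the first contribution, the isotropic structure of the Veronese immersion handles the second, and the umbilicity of $S^N(r)$ in $\R^{N+1}$ handles the third. After the curvature and radius bookkeeping one obtains the identity $\Delta_M(\phi^\alpha|_M) = 2(n+1)\,\phi^\alpha|_M$. Since the $\phi^\alpha|_M$ are nonconstant and have zero mean over $M$ (inherited from the trace-freeness of $\phi$), the Rayleigh variational characterization of $\lambda_1$ immediately gives $\lambda_1(M) \le 2(n+1)$.

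For the equality case, suppose $\lambda_1(M) = 2(n+1)$. Then each $\phi^\alpha|_M$ must be an honest $\lambda_1$-eigenfunction, forcing equality throughout the decomposition of the mean curvature vector above. Tracing back through that equality condition forces the trace of the second fundamental form of $M \hookrightarrow \R P^m$ over every $T_pM$ to vanish identically, which upgrades (after polarization and reusing minimality) to vanishing of the full second fundamental form of $M$ in $\R P^m$: in other words, $M$ is totally geodesic. Since the totally geodesic $n$-dimensional submanifolds of $\R P^m$ with curvature $1$ are exactly the images of great $n$-spheres in $S^m$ under the antipodal quotient, $M$ must be a totally geodesic $\R P^n$, completing the rigidity statement.

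The hardest step is the eigenvalue identity $\Delta_M(\phi^\alpha|_M) = 2(n+1)\,\phi^\alpha|_M$. The naive composition does not make $\R P^m \hookrightarrow S^N(r) \hookrightarrow \R^{N+1}$ totally umbilic, so the ambient eigenvalue $2(m+1)$ does not simply descend; one must exploit both the minimality of $M$ in $\R P^m$ and the isotropic structure of the Veronese map to obtain the clean replacement of $m$ by $n$. Once this identity is in place, the variational bound and the Takahashi/Obata-style rigidity argument follow by routine considerations.
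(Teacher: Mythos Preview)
The paper does not actually supply a proof of this theorem. In \appendref{eigenvalues} it is simply listed, without argument, among several known eigenvalue bounds for $\lambda_1$ drawn from the submanifold literature (the surrounding results are all attributed, explicitly or implicitly, to \cite{chen1984}). So there is no ``paper's own proof'' to compare your proposal against.

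That said, your sketch for the inequality is the standard route in the literature: compose $M \hookrightarrow \R P^m$ with the first standard (Veronese) immersion of $\R P^m$ into a round sphere, use minimality of $M$ in $\R P^m$ together with the isotropy of the Veronese map to obtain $\Delta_M(\phi^\alpha|_M) = 2(n+1)\,\phi^\alpha|_M$, and then invoke the Rayleigh characterization. This is essentially Chen's argument and is fine as an outline.

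Your rigidity discussion, however, has a genuine gap. You derive $\Delta_M(\phi^\alpha|_M) = 2(n+1)\,\phi^\alpha|_M$ as an \emph{identity}, valid for every minimal $M$ in $\R P^m$; no inequality is involved in that step. Consequently, assuming $\lambda_1(M) = 2(n+1)$ only tells you that the $\phi^\alpha|_M$ sit in the first eigenspace; it does \emph{not} produce any ``equality throughout the decomposition of the mean curvature vector,'' because there was never an inequality there to saturate. In particular, the claim that equality ``forces the trace of the second fundamental form of $M \hookrightarrow \R P^m$ \ldots\ to vanish'' is vacuous ($M$ is already assumed minimal), and the subsequent ``polarization'' step to full total geodesy is unsupported. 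The actual equality case in Chen's theorem requires an additional argument (e.g., showing that the immersion $M \to \R^{N+1}$ is of order~$1$ and then invoking the classification of order-$1$ minimal submanifolds of $\R P^m$), which your proposal does not supply.
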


\begin{theorem}[Complex Projective Plane]
    Let $M$ be an $n$-dimensional $(n \geq 2)$, compact, minimal submanifold of $\mathbb{C} P^m$, where $\mathbb{C} P^m$
    is of constant holomorphic sectional curvature 4. Then we have
    $\lambda_1 \leq 2(n+2)$

    with equality holding iff $n$ is even, $M$ is a $\mathbb{C} P^{\frac{n}{2}}$, and $M$ is a complex totally
    geodesic submanifold of $\mathbb{C} P^m$.
\end{theorem}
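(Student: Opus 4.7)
The plan is to adapt the classical Takahashi embedding technique, as carried out by Chen for minimal submanifolds of projective spaces. The key observation is that the first standard isometric embedding realizes $\mathbb{C}P^m$ inside a Euclidean space via its first Laplace eigenfunctions, and minimality of $M$ in $\mathbb{C}P^m$ is precisely the condition that lets one transplant this eigenvalue structure down to $M$; together with the min--max characterization of $\lambda_1$, this will yield the desired inequality.

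Concretely, I would introduce $\phi \colon \mathbb{C}P^m \to \R^N$, the first standard embedding into the space of trace-free $(m+1) \times (m+1)$ Hermitian matrices sending $[z] \mapsto \tfrac{zz^*}{|z|^2} - \tfrac{1}{m+1} I$. With the Fubini--Study metric normalized to holomorphic sectional curvature $4$ and the Frobenius inner product on the matrix space, $\phi$ is an isometric minimal immersion into a round sphere of known radius, and its coordinates are first Laplace eigenfunctions of $\mathbb{C}P^m$. Setting $\Psi := \phi \circ \iota$, where $\iota \colon M \hookrightarrow \mathbb{C}P^m$, I would apply the identity $\Delta_M \Psi = n \mathbf{H}_\Psi$ relating the intrinsic Laplacian of a composed immersion to its mean curvature vector in $\R^N$, and split $\mathbf{H}_\Psi$ via the Gauss composition formula into (i) the mean curvature of $M$ inside $\mathbb{C}P^m$, which vanishes by hypothesis, and (ii) the $T_pM$-trace of the second fundamental form of $\phi$. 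Exploiting the explicit quadratic form of the Hessian of $\phi$ (fixed by the matrix-squaring structure of the embedding) together with the K\"ahler decomposition of $T\mathbb{C}P^m$, I would show that this trace equals $-\tfrac{2(n+2)}{n}(\Psi - \bar\Psi)$ modulo a term orthogonal to first eigenfunctions, where $\bar\Psi$ is the integral average. Feeding each coordinate of $\Psi - \bar\Psi$ into the Rayleigh quotient and invoking min--max yields $\lambda_1 \leq 2(n+2)$.

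For the equality case, I would argue that equality forces the orthogonal error term to vanish and each nontrivial coordinate of $\Psi - \bar\Psi$ to be a genuine first eigenfunction; this in turn forces the second fundamental form of $M$ in $\mathbb{C}P^m$ to annihilate the $J$-invariant part of $TM$ and pinpoints $M$ as a complex totally geodesic submanifold. Such submanifolds are classified as linearly embedded $\mathbb{C}P^{n/2}$'s, forcing $n$ even. The principal obstacle is the matrix-level computation identifying the $T_pM$-trace of the Hessian of $\phi$ as precisely $-2(n+2)(\Psi - \bar\Psi)$ up to orthogonal error; this requires carefully tracking how the complex structure $J$ interacts with tangent/normal splittings under $\phi$, since the coefficient $2(n+2)$ (rather than the ambient value $2(m+1)$) depends on $n$ only through a cancellation that uses both minimality of $M$ and the K\"ahler property. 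The equality rigidity is similarly delicate, as it requires showing that the eigenfunction condition is incompatible with $M$ having any non-$J$-invariant tangent directions.
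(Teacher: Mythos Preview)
The paper does not actually prove this theorem. It appears in the appendix as one of a list of known eigenvalue bounds quoted from the literature (the surrounding text reads ``The following lays out the bulk of known results for $\lambda_1$\ldots''), and the result is attributed implicitly to Chen's monograph~\cite{chen1984}, which the paper cites for the total mean curvature and second fundamental form inequalities. So there is no in-paper proof to compare against.

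That said, your proposal is essentially the classical argument due to Chen and Ros: embed $\mathbb{C}P^m$ into Hermitian matrices via the first standard embedding, use minimality of $M$ to reduce $\Delta_M \Psi$ to the tangential trace of the ambient second fundamental form, compute that trace explicitly from the quadratic structure of $\phi$ and the K\"ahler splitting, and then apply the Rayleigh characterization of $\lambda_1$. This is the correct route and is exactly how the cited reference proceeds. Your identification of the delicate point---that the constant comes out as $2(n+2)$ rather than $2(m+1)$ because of how $J$ interacts with $T_pM$ in the trace computation---is accurate; in Chen's treatment this is handled by decomposing $T_p\mathbb{C}P^m = T_pM \oplus \nu_pM$ and tracking the $J$-invariant and $J$-anti-invariant pieces separately. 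The rigidity argument you sketch (eigenfunction equality forces the second fundamental form to vanish on the holomorphic part, hence $M$ is complex and totally geodesic) is also the standard one. Nothing is missing at the level of strategy; filling in the Hessian computation is bookkeeping rather than a new idea.
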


\begin{theorem}[Quaternion Projective Plane]
    Let $M$ be a compact, $n$-dimensional $(n \geq 4)$, minimal submanifold of $\mathbb{Q} P^m$,
    where $\mathbb{Q} P^m$ is of constant quaternion sectional curvature 4. Then we have
    $\lambda_1 \leq 2(n+4)$,
    with equality holding iff $n$ is a multiple of $4$, $M$ is $\mathbb{Q} P^{\frac{n}{4}}$,
    and $M$ is embedded in $\mathbb{Q} P^m$ as a totally geodesic quaternionic submanifold.
\end{theorem}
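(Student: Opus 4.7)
The plan is to apply Chen's standard test-function method for minimal submanifolds of compact symmetric spaces, which is the same template used in the four preceding theorems of this appendix. The template takes first-order ambient eigenfunctions (supplied by a canonical minimal isometric embedding of the ambient space into Euclidean space), shifts them to have zero mean over $M$, plugs them into the variational characterization of $\lambda_1$, and sums over the coordinates; the bound that emerges reflects both the dimension of $M$ and the dimension of the fiber of the relevant Hopf-type fibration ($1$, $2$, $4$ for $\mathbb{R}P^m$, $\mathbb{C}P^m$, $\mathbb{Q}P^m$), the latter producing the $+1$, $+2$, $+4$ in the respective bounds.

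First, I would invoke the first standard minimal isometric embedding $\iota \colon \mathbb{Q}P^m \hookrightarrow \R^N$ into the space of $(m+1)\times(m+1)$ quaternion-Hermitian matrices, normalized so that $\mathbb{Q}P^m$ has constant quaternion sectional curvature $4$ and $\iota(\mathbb{Q}P^m)$ lies on a sphere $S^{N-1}(r)$. By the Takahashi theorem, each coordinate function $\iota_\alpha$ is an eigenfunction of $\Delta_{\mathbb{Q}P^m}$ with a common eigenvalue $\mu$, and $\sum_\alpha \iota_\alpha^2 \equiv r^2$ is constant on $\mathbb{Q}P^m$. I would then translate each $\iota_\alpha$ by its mean $c_\alpha = \mathrm{Vol}(M)^{-1}\int_M \iota_\alpha \, dV$ and use $\tilde\iota_\alpha = \iota_\alpha - c_\alpha$ as admissible test functions in $\lambda_1 \int_M \tilde\iota_\alpha^2 \, dV \leq \int_M |\nabla_M \tilde\iota_\alpha|^2 \, dV$.

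Next, I would sum the Rayleigh inequality over all $\alpha$. The right-hand side is handled using that $\iota$ is an isometric immersion: at each point of $M$, $\sum_\alpha |\nabla_M \iota_\alpha|^2 = n$, so the right-hand side integrates to $n\,\mathrm{Vol}(M)$. The left-hand side requires evaluating $\sum_\alpha \int_M \tilde\iota_\alpha^2 = r^2\,\mathrm{Vol}(M) - \mathrm{Vol}(M)\sum_\alpha c_\alpha^2$, and a further Hessian-trace computation that invokes minimality of $M$ in $\mathbb{Q}P^m$ (via the identity $\Delta_M \iota_\alpha = \Delta_{\mathbb{Q}P^m}\iota_\alpha - \mathrm{tr}_{TM^\perp} \mathrm{Hess}_{\mathbb{Q}P^m}\iota_\alpha$, the mean-curvature correction dropping out) produces a lower bound on $r^2 - \sum_\alpha c_\alpha^2$ of the form $\tfrac{n}{2(n+4)}$. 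The ``$+4$'' here is precisely the contribution from tracing the ambient Hessian over the three quaternionic partners $\{I v, J v, K v\}$ of each unit tangent vector $v \in TM$, which lie in $T\mathbb{Q}P^m$ but generically not in $TM$. Combining these bounds rearranges to $\lambda_1 \leq 2(n+4)$.

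The most delicate step will be the equality case. Equality in the Rayleigh step forces each nonzero $\tilde\iota_\alpha|_M$ to be a genuine first eigenfunction of $\Delta_M$ with eigenvalue $2(n+4)$. Tracking equality in the Hessian-trace step, one deduces that $TM$ must be invariant under the quaternionic structure $\{I,J,K\}$ of $\mathbb{Q}P^m$ at every point, which forces $n \equiv 0 \pmod{4}$ and, by the rigidity of totally quaternionic subspaces, identifies $M$ with a $\mathbb{Q}P^{n/4}$ embedded as a quaternionic submanifold. Minimality combined with this structural invariance then upgrades $M$ to totally geodesic, completing the characterization. The main obstacle throughout is the curvature-normalization bookkeeping — in particular the Hessian-trace step that singles out the factor $4$ — since any slack in these constants would either weaken the bound or spoil sharpness on $\mathbb{Q}P^{n/4}$; the structural ideas are standard, but the arithmetic is where the theorem lives or dies.
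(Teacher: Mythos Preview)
The paper does not actually prove this theorem: it is one of several classical eigenvalue bounds listed without proof in the appendix as ``known results for $\lambda_1$'' (the surrounding text explicitly says the section ``lays out the bulk of known results''), with the implicit reference being Chen's monograph~\cite{chen1984}. There is therefore no paper-proof to compare against.

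That said, your proposal is the correct and standard argument---it is precisely Chen's test-function method, which is the original source of this bound. The structure you outline (first standard embedding of $\mathbb{Q}P^m$ via quaternion-Hermitian matrices, Takahashi's theorem, mean-shifted coordinates as Rayleigh test functions, summing and using minimality to control the Hessian trace) is right, and your identification of the ``$+4$'' as arising from the three quaternionic partners $Iv,Jv,Kv$ in the normal-direction trace is the essential geometric point. One minor caution: in your sketch you write the lower bound on $r^2-\sum_\alpha c_\alpha^2$ as $\tfrac{n}{2(n+4)}$, but the exact constant depends on the normalization of the embedding radius $r$, so when you carry out the computation be sure the curvature normalization (constant quaternion sectional curvature $4$) is tracked consistently through both $r$ and the ambient eigenvalue $\mu$; this is exactly the ``bookkeeping'' you flagged as the main obstacle, and it is indeed where errors most commonly creep in.
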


\begin{theorem}[Cayley Plane]
    Let $M$ be a compact, $n$-dimensional, minimal submanifold of the Cayley Plane $OP^2$, where $OP^2$
    is of maximal sectional curvature 4. Then we have
    $\lambda_1 \leq 4n$.
\end{theorem}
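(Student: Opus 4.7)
The plan is to imitate the argument used for the preceding rank-one symmetric space bounds (real, complex, and quaternionic projective planes), adapted to the Cayley plane. The Cayley plane admits a canonical ``first standard'' isometric embedding $\iota\colon OP^2\hookrightarrow\R^N$ into the exceptional Jordan algebra of Hermitian $3\times 3$ matrices over the octonions. Under this embedding, each coordinate function $x_A$ (for $A=1,\dots,N$) is an eigenfunction of the Laplacian of $OP^2$ with a common eigenvalue $\mu$, and the eigenvalue $\mu$ is computable from the maximal sectional curvature of $OP^2$, which is normalized to be $4$. This is the only piece of input about the ambient geometry that we will need.

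Next I would restrict the coordinate functions $x_A$ to the minimal submanifold $M$ and compute the Laplacian $\Delta^M x_A$. The standard computation combines the Gauss formula for the second fundamental form of $M$ in $OP^2$ with the Gauss formula for the second fundamental form of $OP^2$ in $\R^N$. Because $M$ is minimal in $OP^2$, the trace of the second fundamental form of $M$ in $OP^2$ vanishes, so the only surviving contribution comes from the (ambient, fixed) second fundamental form of $OP^2$ in $\R^N$, evaluated on an orthonormal frame of $M$. This yields
\begin{equation*}
\Delta^M x_A = -\frac{n}{m}\mu\, x_A + c_A,
\end{equation*}
where $m=\dim OP^2 = 16$, and $c_A$ is a constant arising from the mean curvature of $OP^2$ viewed inside $\R^N$ (this vector field lies in the normal bundle of $OP^2$ and evaluates to a constant along each coordinate).

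The third step is to apply the Rayleigh variational characterization of $\lambda_1$ to the functions $x_A - \bar x_A$ (centered so that their integrals over $M$ vanish). Summing over $A$ and using that $\sum_A |x_A|^2$ is a constant on $OP^2$ (since $\iota$ is an isometric embedding into a sphere, up to scaling), we get
\begin{equation*}
\lambda_1 \sum_A \int_M (x_A-\bar x_A)^2\,dV \le \sum_A \int_M |\nabla^M x_A|^2\,dV = -\sum_A\int_M x_A\,\Delta^M x_A\,dV.
\end{equation*}
Substituting the expression for $\Delta^M x_A$ and simplifying via the normalization constants from the embedding produces the inequality $\lambda_1\le 4n$, as desired.

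The main obstacle is the first step: pinning down the precise constants coming from the standard embedding of $OP^2$, since the octonionic geometry (non-associativity, exceptional isotropy group $\mathrm{Spin}(9)$) makes explicit verification harder than in the real, complex, or quaternionic cases. However, once the normalization that fixes the maximal sectional curvature equal to $4$ is in place, the ratio between the ambient eigenvalue $\mu$ and the curvature constant is determined by representation-theoretic data of $\mathrm{Spin}(9)$, and the bound $4n$ drops out. The remaining steps are parallel to Chen's arguments cited in \cite{chen1984} for the earlier projective plane cases.
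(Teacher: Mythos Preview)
The paper does not prove this theorem. It appears in Appendix~\ref{append:eigenvalues} merely as one entry in a list of known eigenvalue bounds for submanifolds of rank-one symmetric spaces, quoted (without attribution on that particular item, but in the spirit of the surrounding material drawn from \cite{chen1984}) as background input for the hypersurface scaling of the aggregated density function. There is no argument in the paper to compare your proposal against.

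Your plan is the standard one and is correct in outline: embed $OP^2$ isometrically into a Euclidean space via its first standard (order-one) immersion, use minimality of $M$ in $OP^2$ to compute $\Delta^M$ of the restricted coordinate functions, and feed the result into the Rayleigh characterization of $\lambda_1$. This is exactly Chen's method for the parallel statements on $\mathbb{R}P^m$, $\mathbb{C}P^m$, and $\mathbb{Q}P^m$, and the Cayley case differs only in the bookkeeping of constants coming from the exceptional Jordan-algebra embedding. One small caution: your displayed formula $\Delta^M x_A = -\frac{n}{m}\mu\, x_A + c_A$ oversimplifies what actually happens, since the second fundamental form of $OP^2$ in $\R^N$ is not in general proportional to the position vector coordinate-by-coordinate; the clean cancellation only occurs after summing $\sum_A x_A \Delta^M x_A$ and invoking the parallelism of the second fundamental form of the standard embedding. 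But this is a presentational issue, not a gap in the strategy.
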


\begin{theorem}[CR Submanifolds of $\mathbb{C}P^n$]
    Let $M$ be a compact, $n$-dimensional, minimal, $CR$-submanifold of $\mathbb{C} P^m$. Then
    we have
    $\lambda_1 \leq 2(n^2 + n + 2a)/n$, where $a$ is the complex dimension of the holomorphic distribution.
\end{theorem}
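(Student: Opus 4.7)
\medskip

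\noindent\textbf{Proof proposal.} The plan is to deduce the bound from the variational (Rayleigh) characterization of $\lambda_1$, using as trial functions the coordinate functions of the first standard embedding of $\mathbb{C}P^m$ into Euclidean space, and then exploiting the CR decomposition of $TM$ to extract the constant $2(n^2+n+2a)/n$.

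\medskip

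\noindent\emph{Step 1: Set up the variational inequality.} Recall that $\lambda_1$ admits the bound
\begin{equation*}
\lambda_1 \,\int_M f^2 \, dV \;\leq\; \int_M |\nabla_M f|^2 \, dV
\end{equation*}
for any smooth $f$ on $M$ satisfying $\int_M f\, dV = 0$. Summing over a family $\{f^A\}_{A=1}^N$ of such functions,
\begin{equation*}
\lambda_1 \sum_A \int_M (f^A)^2 \, dV \;\leq\; \sum_A \int_M |\nabla_M f^A|^2 \, dV.
\end{equation*}
I would use the first standard (Hermitian-matrix) embedding $\phi \colon \mathbb{C}P^m \hookrightarrow \mathbb{R}^N$, whose coordinates $\phi^A$ are eigenfunctions of $\Delta_{\mathbb{C}P^m}$ with a known eigenvalue and whose image lies on a round sphere, so that $\sum_A (\phi^A)^2$ is constant. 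After a translation (which does not change gradients), one may assume $\int_M \phi^A\, dV = 0$ for every $A$. The restrictions $f^A := \phi^A|_M$ are the trial functions.

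\medskip

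\noindent\emph{Step 2: Compute the two sides of the inequality.} The denominator is immediate from $\sum_A (\phi^A)^2 \equiv \mathrm{const}$, yielding $\sum_A \int_M (f^A)^2 \, dV = \mathrm{const} \cdot \mathrm{Vol}(M)$. For the numerator, I would evaluate
\begin{equation*}
\sum_A |\nabla_M \phi^A|^2 \;=\; \sum_{i=1}^n \sum_A \bigl(e_i(\phi^A)\bigr)^2 \;=\; \sum_{i=1}^n \bigl|(d\phi)(e_i)\bigr|^2,
\end{equation*}
for a local orthonormal frame $\{e_i\}$ of $TM$. Here the central input is the explicit form of $d\phi$ for the first standard embedding of $\mathbb{C}P^m$: because $\phi$ is equivariant under $U(m+1)$, $|d\phi(v)|^2$ depends only on whether $v$ and $Jv$ are both tangent to $M$ or not. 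This is exactly the information encoded by the CR structure.

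\medskip

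\noindent\emph{Step 3: Use the CR decomposition.} By definition of a CR-submanifold, $T_pM = \mathcal{D}_p \oplus \mathcal{D}_p^\perp$, where $\mathcal{D}_p$ is $J$-invariant of real dimension $2a$ and $\mathcal{D}_p^\perp$ is totally real. Splitting the orthonormal frame accordingly,
\begin{equation*}
\sum_{i=1}^n |d\phi(e_i)|^2 \;=\; \sum_{e_i \in \mathcal{D}} |d\phi(e_i)|^2 \;+\; \sum_{e_j \in \mathcal{D}^\perp} |d\phi(e_j)|^2,
\end{equation*}
and the two sums evaluate to different multiples of the constant $|\phi|^2$ because $d\phi$ interacts with $J$ in a prescribed way on the image of $\mathbb{C}P^m$. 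Carrying out the bookkeeping (the $J$-invariant directions contribute a ``doubled'' count via $J$, accounting for the $+2a$ correction), I expect to obtain
\begin{equation*}
\sum_A |\nabla_M \phi^A|^2 \;=\; \frac{2(n^2+n+2a)}{n} \cdot \sum_A (\phi^A)^2.
\end{equation*}
Plugging this into the Rayleigh inequality yields the stated bound. Minimality of $M$ in $\mathbb{C}P^m$ is used to discard the mean-curvature terms that would otherwise appear when passing from $\Delta_{\mathbb{C}P^m}$-eigenfunctions to functions on $M$ (via the submanifold Laplacian identity $\Delta_M(f|_M) = (\Delta_N f)|_M - \mathrm{trace}_{T_pM^\perp}\mathrm{Hess}_N f + n\langle \nabla f, \vec H\rangle$).

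\medskip

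\noindent\emph{Main obstacle.} The delicate point is Step~3: producing the precise coefficient $2(n^2+n+2a)/n$. This requires tracking exactly how the first standard embedding $\phi$ of $\mathbb{C}P^m$ acts on $J$-invariant versus totally real tangent directions, and confirming that the contributions from $\mathcal{D}$ and $\mathcal{D}^\perp$ combine with the eigenvalue of $\phi^A$ on $\mathbb{C}P^m$ to give exactly this numerator. Getting the normalization constants right (and verifying equality cases, e.g.\ when $M$ is holomorphic or totally real) is the technical heart of the argument; the rest follows by the minimal-principle template already used in the proofs of Theorems~\ref{thm:mean} and~\ref{thm:length-form}.
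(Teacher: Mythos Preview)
The paper does not prove this theorem at all: it appears in Appendix~\ref{append:eigenvalues} as one of a list of known eigenvalue bounds quoted without proof, with the implicit attribution to Chen~\cite{chen1984}. There is therefore no ``paper's own proof'' to compare against.

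That said, your proposal is the correct strategy and is essentially the argument in Chen's book. The combination of (i) the Rayleigh characterization, (ii) the coordinate functions of the first standard isometric embedding of $\mathbb{C}P^m$ into the space of Hermitian matrices as test functions, (iii) minimality to kill the mean-curvature term in the submanifold Laplacian formula, and (iv) the CR splitting $T_pM=\mathcal{D}_p\oplus\mathcal{D}_p^{\perp}$ to evaluate $\sum_A|\nabla_M\phi^A|^2$ pointwise, is exactly how the bound is obtained in the source. Your identification of Step~3 as the only nontrivial computation is accurate: once one writes the differential of the first standard embedding explicitly, the holomorphic directions in $\mathcal{D}$ contribute a larger coefficient than the totally real directions in $\mathcal{D}^{\perp}$, and collecting terms over $\dim_{\mathbb{R}}\mathcal{D}=2a$ and $\dim_{\mathbb{R}}\mathcal{D}^{\perp}=n-2a$ produces the numerator $n^2+n+2a$. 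If you want to complete the argument, the cleanest route is to look up the explicit second fundamental form of the first standard embedding of $\mathbb{C}P^m$ (it is parallel, and its square on a unit vector $v$ depends only on whether $Jv$ is tangent), from which the pointwise identity in your Step~3 follows by direct substitution.
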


\begin{theorem}[CR Submanifolds of $QP^m$]
    Let $M$ be a compact, $n$-dimensional, minimal CR-submanifold of $QP^m$. Then we have
    $\lambda_1 \leq 2(n^2+n+12a)/n$
    where $a$ is the quaternionic dimension of the quaternion distribution.
\end{theorem}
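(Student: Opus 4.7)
My plan is to adapt the Takahashi–Reilly technique used by Chen to establish all of the analogous bounds in this appendix (the $\mathbb{RP}^m$, $\mathbb{CP}^m$, $\mathbb{QP}^m$, and CR-submanifolds of $\mathbb{CP}^m$ cases), specialising the computation so that the quaternionic Kähler structure contributes its characteristic factor of $12 = 3 \cdot 4$ on the invariant distribution. The starting point is the first standard embedding $\psi : \mathbb{QP}^m \hookrightarrow \R^N$ into the space of quaternion Hermitian idempotent matrices of trace one, whose coordinates $\psi^I$ are eigenfunctions of $\Delta_{\mathbb{QP}^m}$ with a common eigenvalue. This embedding is what powers every bound in the list.

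First, restrict $\psi$ to $M$ and use the minimality hypothesis: for the mean-curvature vector $\vec{H}_{M \subset \R^N}$ of $M$ in $\R^N$ decomposes as $\vec{H}_{M \subset \mathbb{QP}^m}$ plus the component in $N\mathbb{QP}^m$, and by minimality the first term vanishes, so $\Delta_M(\psi^I |_M)$ equals $\Delta_{\mathbb{QP}^m}\psi^I|_M$ modulo a term depending only on the second fundamental form of $\mathbb{QP}^m$ in $\R^N$. Second, apply the Rayleigh variational characterisation
\begin{equation*}
\lambda_1 \int_M (\psi^I - \bar{\psi}^I)^2 \, dV \;\leq\; \int_M |\nabla_M \psi^I|^2 \, dV
\end{equation*}
componentwise, sum over $I$, and exploit the matrix identities available from the idempotent/trace-one description of $\psi(\mathbb{QP}^m)$ to evaluate the right- and left-hand sums in closed form.

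The crucial step, and where the number $12a$ enters, is the computation of $\sum_I |\nabla_M \psi^I|^2$ in terms of intrinsic data of $M$. Using the normal/tangential decomposition, one obtains $\sum_I |\nabla_M \psi^I|^2 = \operatorname{tr}_M(g_{\R^N}|_{TM})$, and a standard identity for the standard $\mathbb{QP}^m$ embedding expresses $g_{\R^N}|_{T\mathbb{QP}^m}$ as a constant multiple of the Fubini–Study metric \emph{plus} terms $\tfrac{1}{2}\sum_{\alpha=1}^{3} J_\alpha^{\,*}\!\otimes J_\alpha$ coming from the three almost-complex structures on the quaternionic Kähler manifold. When this is traced over $TM$, the $J_\alpha$ contributions vanish on the totally real part of $TM$ and double on the quaternion-invariant distribution of real dimension $4a$, producing precisely $n + 12a$ after accounting for the three structures (factor $3$) and the four real quaternionic directions (factor $4$). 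Combining with the matching expression for $\sum_I (\psi^I - \bar\psi^I)^2$, which evaluates to a constant multiple of $n$ on a minimal submanifold, rearranging yields the claimed bound $\lambda_1 \leq 2(n^2 + n + 12a)/n$.

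The main obstacle I anticipate is the bookkeeping of the three almost-complex structures in the non-parallel Kähler setting: the individual $J_\alpha$ are only locally defined and rotate among themselves under the $\mathfrak{sp}(1)$ connection, so one must check that the pointwise identity $\sum_\alpha \operatorname{tr}_M(J_\alpha^{\,*} \otimes J_\alpha) = 12a$ is globally well-defined and independent of the local frame chosen for the quaternionic bundle. Once this invariance is verified, the remaining calculations are routine linear algebra on quaternion Hermitian matrices, parallel to the complex case that yields the earlier $\mathbb{CP}^m$ CR-bound $2(n^2 + n + 2a)/n$.
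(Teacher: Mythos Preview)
The paper does not actually prove this statement: it appears in Appendix~\ref{append:eigenvalues} as one of a list of known eigenvalue bounds quoted from the literature (Chen~\cite{chen1984}), with no argument supplied. So there is no ``paper's own proof'' to compare against; the authors are simply cataloguing results they will use to scale the aggregated function on hypersurfaces.

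Your outline is a faithful sketch of the standard argument from Chen's monograph: embed $\mathbb{QP}^m$ isometrically via the first standard embedding into Hermitian quaternionic matrices, pull back to the minimal submanifold $M$, feed the coordinate functions into the Rayleigh characterisation of $\lambda_1$, and evaluate the two sides using the algebraic identities that the idempotent description makes available. The identification of the $12a$ term with the trace of $\sum_{\alpha=1}^{3} J_\alpha^{*}\otimes J_\alpha$ over $TM$ is the right mechanism, and your caveat about checking that this trace is independent of the local $\mathfrak{sp}(1)$-frame is exactly the one point that needs care beyond the $\mathbb{CP}^m$ case. One small wording issue: the contribution on the quaternionic distribution is not that each $J_\alpha$ ``doubles'' there; rather each $J_\alpha$ restricts to an isometry of the $4a$-dimensional invariant subspace, so $\operatorname{tr}_M(J_\alpha^{*}\otimes J_\alpha)$ picks up $4a$ from that subspace and $0$ from the totally real part, and summing over $\alpha=1,2,3$ gives $12a$. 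With that adjustment the bookkeeping goes through as you describe.
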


\section{Experimental Results}\label{sec:experiments}
  We present our experimental findings, comparing the manifold score of
  samples on popular manifolds.
  Our anonymized code is accessible at the following url for reproducibility:
  
\href{https://anonymous.4open.science/r/intrinsic-manifold-validation/README.md}{https://anonymous.4open.science/r/intrinsic-manifold-validation/}

  \subsection{Results on Flat Manifolds}

  As an initial verification, we test the framework on
  an embedding of the flat $2$-torus,
  where we expect the manifold score to be nearly perfect (close to one) for
  a uniform sample.
  We compare this against a stratification on the flat torus, sampled uniformly in the form of a ``cross,''
  $\{ (x,y) : x = \frac{1}{2} \text{ or } y = \frac{1}{2} \} \subset [0,1)\times[0,1)$.
  We also consider a sample of the ``cross''-stratification with noise, where ten percent
  of the sample's points are uniformly sampled from the domain graph of the flat torus.
  Table \ref{exp:flat-torus} gives the average aggregated manifold score
  across ten trials on the flat $2$-torus for these three samples.
  As anticipated, the manifold score is nearly perfect for the uniform sample, worse
  for the stratification with noise, and even worse for the pure stratification.

  \begin{table}[h]
    \caption{Manifold Score on the Flat Torus} \label{exp:flat-torus}
    \begin{center}
    \begin{tabular}{llll}
    \textbf{Sample Size}  &\textbf{Uniform Sample} &\textbf{Strat. with Noise} &\textbf{Stratification} \\
    \hline \\
    100         & $0.9840 \pm 0.0056$ & $0.8807 \pm 0.0158$ & $0.6768 \pm 0.0490$\\
    200             & $0.9921 \pm 0.0023$& $0.8769 \pm 0.0178$ & $0.6410 \pm 0.0339 $ \\
    500             & $0.9962 \pm 0.0010$ & $0.8763 \pm 0.0071$ & $0.6537 \pm 0.0210 $  \\
    1000             & $0.9980 \pm 0.0009$ & $0.8728 \pm 0.0074$ & $0.6545 \pm 0.0241 $ \\
    \end{tabular}
    \end{center}
  \end{table}

  \subsection{Surfaces with Curvature}

  \begin{figure}
    \centering
    \begin{subfigure}{0.24\textwidth}
        \centering
        \includegraphics[width=0.99\textwidth]{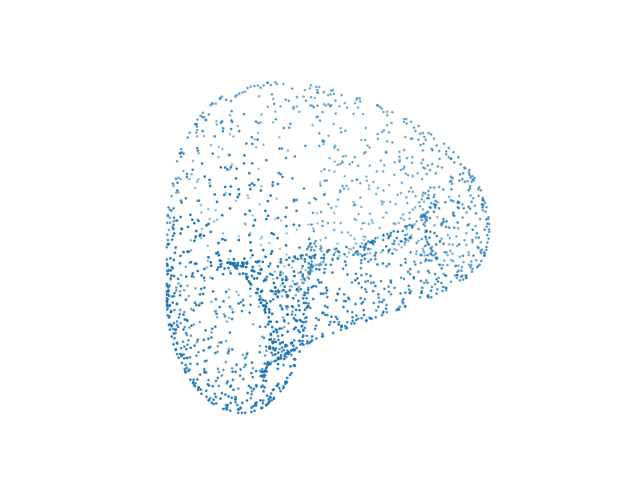} 
    \end{subfigure}
    \begin{subfigure}{0.24\textwidth}
        \centering
        \includegraphics[width=0.99\textwidth]{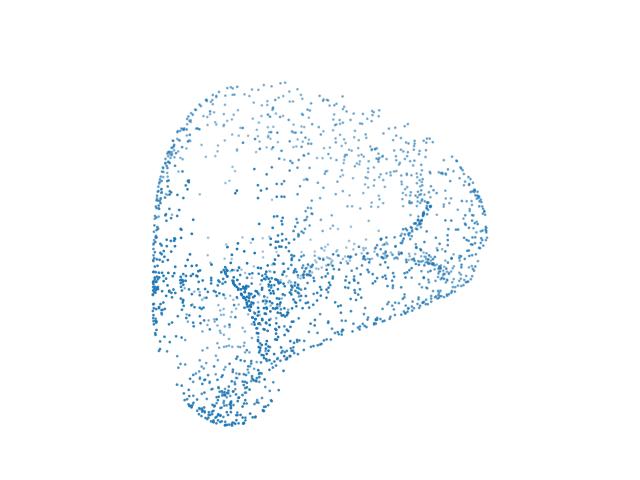} 
    \end{subfigure}
  \begin{subfigure}{0.24\textwidth}
      \centering
      \includegraphics[width=0.99\textwidth]{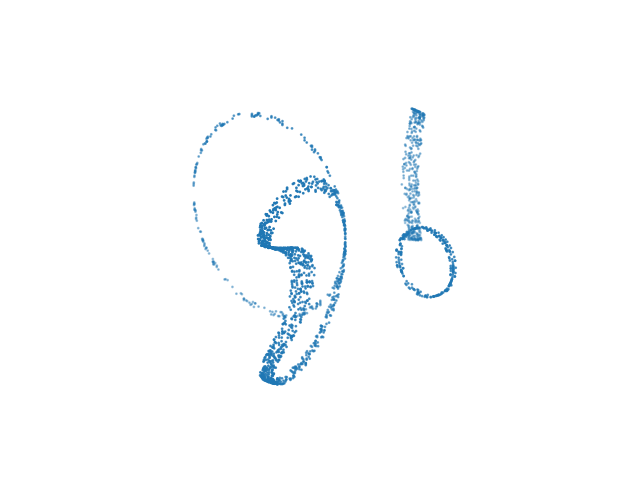} 
  \end{subfigure}
  \begin{subfigure}{0.24\textwidth}
      \centering
      \includegraphics[width=0.99\textwidth]{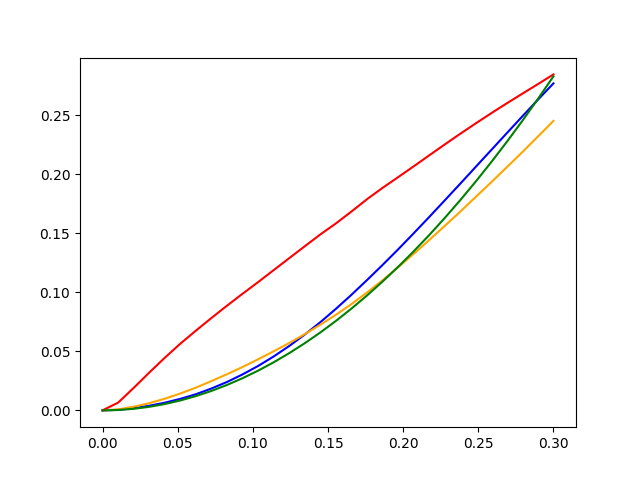} 
  \end{subfigure}
  \caption{From left to right: (1) uniform sample vs. (2) minor noise vs (3) stratification on the Klein bottle with $|X|=2000$. The
  right plot compares the normalized \ourkfs of each, with (1) in blue, (2) in
      orange, (3) red, and the theoretical \ourkf in green.}
\end{figure}

  We compare the manifold score of stratifications
  vs. uniform samples on both the sphere and the Klein bottle.
  We generate a distance matrix using ISOMAP with six neighbors,
  conducting ten trials for each experiment. 
  For the sphere, we scale according to the approximation
  following \corref{agg-curved} setting $A \approx r^2$.
  A uniform sampling and ``cross''-stratification were considered, and 
  the results are presented in Table \ref{tbl:sphere}. 
  We note the impact of proper scaling on the manifold score.
  For the Klein bottle, we use no scaling since the Euler characteristic is zero.
  We again consider a uniform sample and ``cross''-stratification,
  as well as a noisy sample obtained by sampling with probabilities
  defined by a normalized sine wave across the surface.
  As demonstrated by Table \ref{tbl:klein}, the manifold score is effective 
  in evaluating the sample's representation of the Klein bottle. 

  For both surfaces,
  the manifold score converges to one (a perfect score)
  as the uniform sample size increases, 
  which is explained by \thmref{dis}.
  However, this convergence is slower than for the 
  flat torus,
  which we attribute to the introduction of a neighbors graph
  rather than using true geodesics, as
  well as the heuristic nature of the definition of the
  aggregated \ourkf for general two-manifolds.

  \begin{table}[H]
    \caption{Results on the Sphere} \label{tbl:sphere}
    \begin{center}
    \begin{tabular}{llll}
    \textbf{Sample Size}  &\textbf{Scaled Unif.} &\textbf{Unscaled Unif.} &\textbf{Stratification} \\
    \hline \\
    100         & $0.9415 \pm 0.0169$ & $0.7770 \pm 0.0147$ & $0.6116 \pm 0.1801$ \\
    500             & $0.9443 \pm 0.0083$ & $0.7834 \pm 0.0256$ & $0.6101 \pm 0.1842$ \\
    1000             & $0.9583 \pm 0.0151$ & $0.7840 \pm 0.0123$ & $0.6597 \pm 0.2110$ \\
    2000             & $0.9730 \pm 0.0096$ & $0.7963 \pm 0.0110$ & $0.6310 \pm 0.2382$ \\
    \end{tabular}
    \end{center}
  \end{table}

  \begin{table}[H]
    \caption{Results on the Klein Bottle} \label{tbl:klein}
    \begin{center}
    \begin{tabular}{llll}
    \textbf{Sample Size}  &\textbf{Uniform} & \textbf{Minor Noise} &\textbf{Stratification} \\
    \hline \\
    200             & $0.8834 \pm 0.0571$ & $0.8114 \pm 0.0760$ & $0.3578 \pm 0.0926$ \\
    500             & $0.9109 \pm 0.0334$ & $0.8942 \pm 0.0386$ & $0.4048 \pm 0.1521$ \\
    1000             & $0.9430 \pm 0.0105$ & $0.8417 \pm 0.1286$ & $0.5054 \pm 0.0558$ \\
    2000             & $0.9529 \pm 0.0081$ & $0.8735 \pm 0.0520$ & $0.3075 \pm 0.0630$ \\
    \end{tabular}
    \end{center}
  \end{table}

\begin{figure}
  \centering
  \begin{subfigure}{0.31\textwidth}
        \centering
        \includegraphics[width=0.99\textwidth]{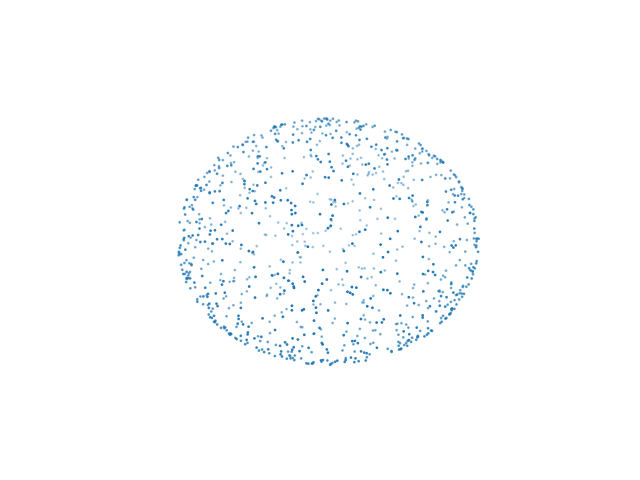} 
    \end{subfigure}
    \begin{subfigure}{0.31\textwidth}
        \centering
        \includegraphics[width=0.99\textwidth]{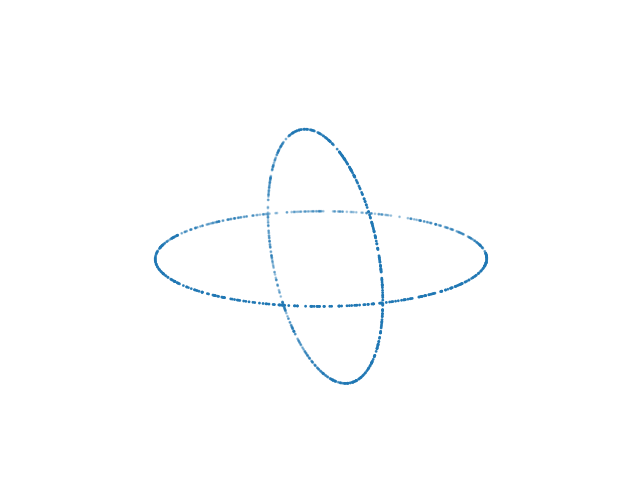} 
  \end{subfigure}
  \begin{subfigure}{0.31\textwidth}
      \centering
      \includegraphics[width=0.99\textwidth]{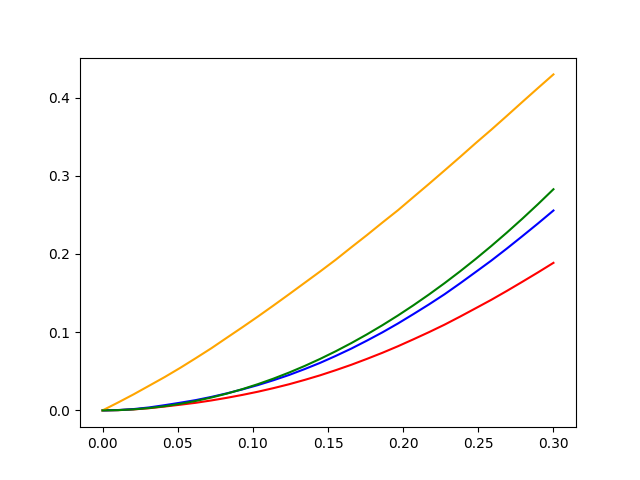} 
  \end{subfigure}
  \caption{From left to right: (1) uniform sample vs. (2) a stratification on the sphere, with $|X|=1000$. The
      right plot compares the \ourkfs of each, with (1)
      scaled in blue, (1) unscaled in red, (2) in orange, and the theoretical
      \ourkf in green.}\label{fig:spherestrat}
\end{figure}

  Next, we develop a more in depth discussion of the manifold score on the Klein bottle, and implement
  the comparison of manifold learning algorithms shown in \figref{klein} in \secref{prelim}.
  This anecdote provides a sense for both the intended positive behavior and the shortcomings
  of our use of the \ourkf, which appear inherent to intrinsic validation in general.
  Namely, we lift a parameterization of the Klein bottle in $\R^3$ to ten dimensions by
  assigning values from the interval $[0,2\pi]$ uniformly at random to each of the other
  seven coordinates. In total, this point cloud representation lying on an ambient Klein
  bottle in 3D has 1000 points. 
  We then attempt to re-learn the 3D parameterization of the
  Klein bottle; computing embeddings in $\R^3$ of the point cloud in $\R^10$ using PCA, ISOMAP,
  t-SNE, LLE, and spectral embedding, each with $n_{components}=3$ and $n_{neighbors}=6$.
  To the naked eye, PCA appears to perform the best, simply by dropping the
  other seven dimensions, and ISOMAP also appears to perform well, albeit a bit worse. In actuality,
  ISOMAP assigns much greater density to the handle region of the Klein bottle in this example,
  and achieves a somewhat unfavorable score. This provides an adversarial example where t-SNE outperforms ISOMAP, despite the visual
  preferability of the latter.
  The results when evaluating the manifold score on each algorithm are summarized in the following table,
  where we report the manifold score on the \ourkf restricted to radius $r=0.3$:

  \begin{table}[H]
    \caption{Differing Manifold Scores from the Klein Bottle}
    \begin{center}
    \begin{tabular}{lllll}
    \textbf{PCA}  &\textbf{ISOMAP} & \textbf{t-SNE} & \textbf{LLE} & \textbf{Spectral Embedding} \\
    \hline \\
    $0.9557$ & $0.7179$ & $0.9154$ & $0.3988$ & $0.6881$
    \end{tabular}
    \end{center}
\end{table}

\begin{figure}
  \centering
  \begin{subfigure}{0.31\textwidth}
      \centering
      \includegraphics[width=0.99\textwidth]{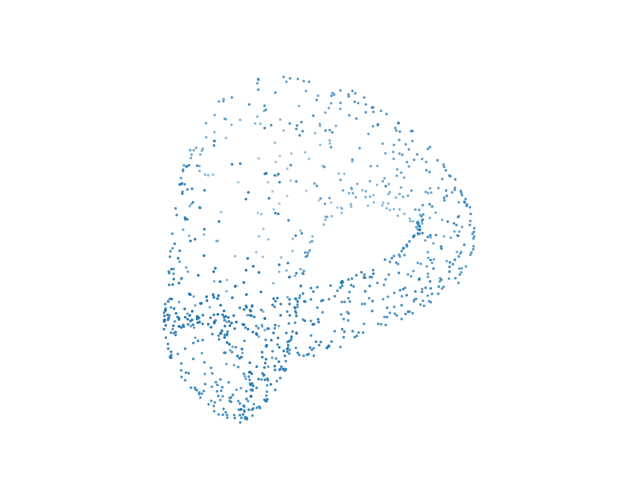}
  \end{subfigure}
  \begin{subfigure}{0.31\textwidth}
      \centering
      \includegraphics[width=0.99\textwidth]{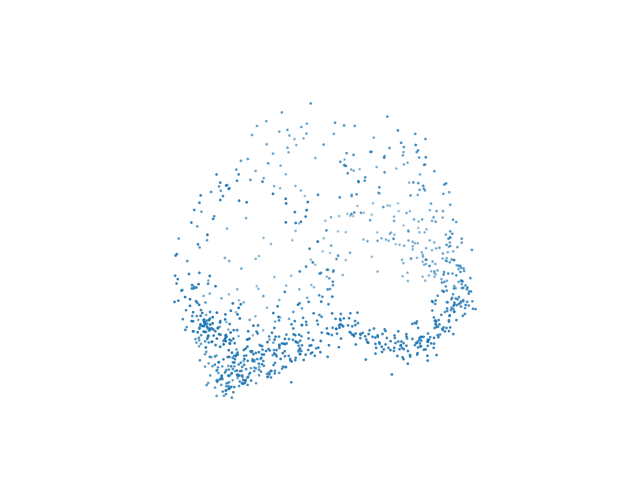}
  \end{subfigure}
\begin{subfigure}{0.31\textwidth}
    \centering
    \includegraphics[width=0.99\textwidth]{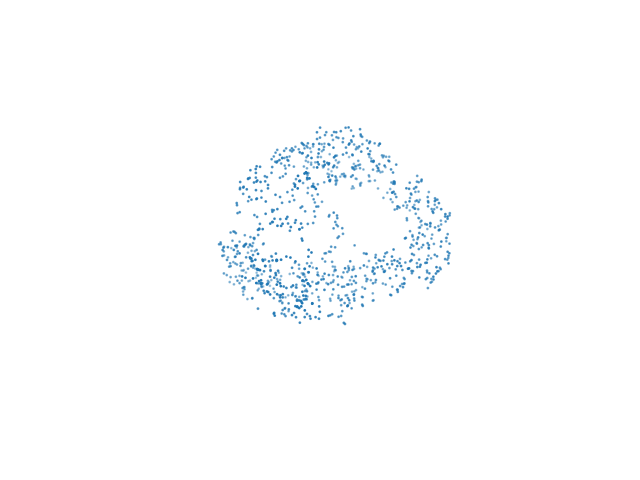}
\end{subfigure}
\begin{subfigure}{0.31\textwidth}
  \centering
  \includegraphics[width=0.99\textwidth]{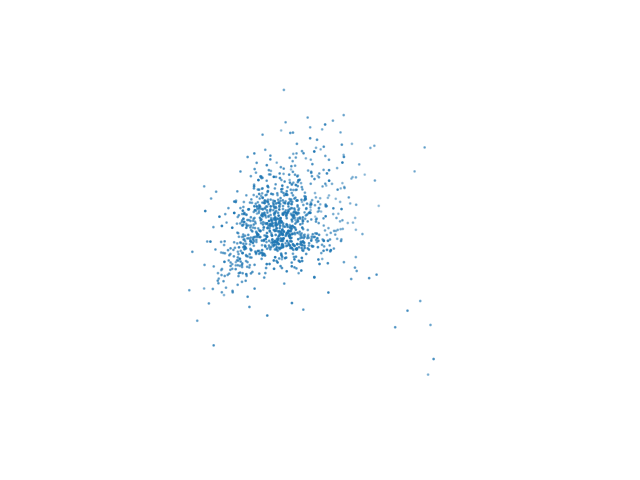}
\end{subfigure}
\begin{subfigure}{0.31\textwidth}
  \centering
  \includegraphics[width=0.99\textwidth]{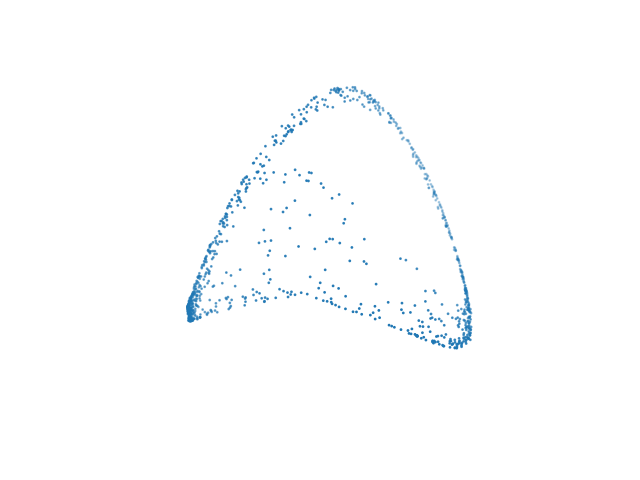}
\end{subfigure}
\caption{Embeddings of the Klein bottle in $\R^3$ after being projected from $\R^{10}$ from left to right: PCA, ISOMAP, and t-SNE (top)
locally linear embedding and spectral embedding (bottom).}
\end{figure}

\begin{figure}[H]
  \centering
  \includegraphics[width=0.9\textwidth]{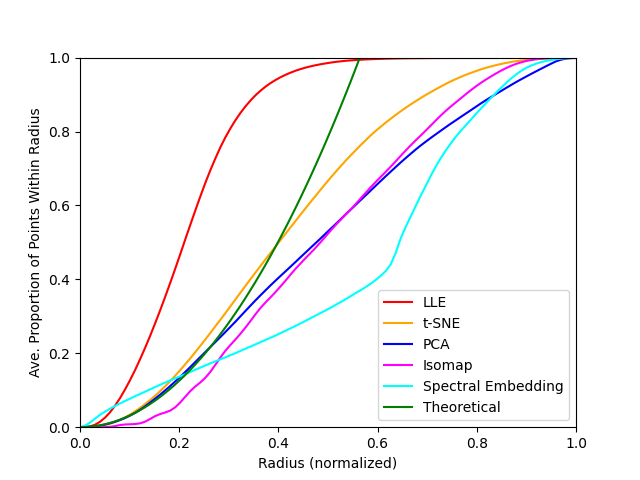}
\end{figure}

Upon visual inspection, the apparent best performing algorithm in this example (PCA) is indeed detected as performing well,
while the worse performing algorithms are generally detected as such. However, the manifold score is not able to distinguish between
ISOMAP and t-SNE in the example, despite visually ISOMAP appearing to better uncover the manifold.
This illuminates a shortcoming of the manifold score; which is a consequence of working in the unsupervised setting.
That is, without knowledge of the ground truth manifold which we are attempting to uncover,
there is no way of distinguishing between a uniform sample of something more distantly resembling a desired manifold,
and a nonuniform sample closely resembling a desired manifold.

  \subsection{Hypersurfaces}

  In higher dimensions, we use the results of \secref{highd}
  to examine uniform samples on hyperspheres
  in six, eight, ten, and fifteen dimensions. We demonstrate the manifold score of a uniform sample
  pre and post scaling, and the analogous stratification to the one in \figref{spherestrat} of two $d/2$-dimensional hyperspheres (scaled equivalently).
  For each experiment, we work with samples of size 1000 over ten trials. In higher dimensions the volume of Euclidean
  balls decreases, making the manifold score increasingly sensitive; yet we are still able to delineate between
  uniform samples and stratifications with the needed specificity.

  \begin{table}[h]
    \caption{Results on Hyperspheres of Differing Dimension} \label{exp:klein}
    \begin{center}
    \begin{tabular}{llll}
    \textbf{Dimension}  &\textbf{Scaled Unif.} & \textbf{Unscaled Unif.} & \textbf{Stratification} \\
    \hline \\
    6             & $0.9747 \pm 0.0022$ & $0.9143 \pm 0.0056$ & $0.8033 \pm 0.0023$ \\
    8             & $0.9803 \pm 0.0010$ & $0.9256 \pm 0.0034$ & $0.8782 \pm 0.0043$ \\
    10             & $0.9895 \pm 0.0004$ & $0.9458 \pm 0.0019$ & $0.9239 \pm 0.0029$ \\
    15             & $0.9968 \pm 0.0003$ & $0.9692 \pm 0.0033$ & $0.9697 \pm 0.0014$ \\
    \end{tabular}
    \end{center}
  \end{table}

In what follows, we outline the resulting \ourkfs for hyperspheres,
tabulated in their corresponding dimension. We provide a more complete experiment here,
including edge cases in lower and higher dimensions than the cases included in experiments
in the main body. Interestingly, only upon reaching five dimensions does scaling
the \ourkf for hypersurfaces become beneficial, which we attribute to the fact that
our scaling is indeed approximate; if possible to work with two-manifolds the exact scaling
afforded by the Gauss-Bonnet theorem is preferable.
In very high dimensions, as expected, it becomes increasingly subtle (but still possible
outside of the range of error)
to distinguish between uniform samples nonuniform samples such as the stratification. Once again,
we employ a stratification of two $d/2$ hyperspheres sampled on the surface of the $d$-dimensional
hypersphere.

\begin{table}[h]
    \caption{Results on Hyperspheres of Differing Dimension} \label{exp:klein}
    \begin{center}
    \begin{tabular}{llll}
    \textbf{Dimension}  &\textbf{Post Scaling} & \textbf{Pre Scaling} & \textbf{Stratification} \\
    \hline \\
    3             & $0.6801 \pm 0.0074$ & $0.7791 \pm 0.0093$ & $0.6251 \pm 0.0016$ \\
    4             & $0.9143 \pm 0.0043$ & $0.9785 \pm 0.0046$ & $0.8291 \pm 0.0899$ \\
    5             & $0.9823 \pm 0.0006$ & $0.9207 \pm 0.0049$ & $0.7936 \pm 0.0805$ \\
    20             & $0.9983 \pm 0.00002$ & $0.9732 \pm 0.0003$ & $0.9842 \pm 0.0015$ \\
    50             & $0.9997 \pm 0.000001$ & $0.9736 \pm 0.00012$ & $0.9982 \pm 0.000039$ \\
    100             & $0.9999 \pm 0.0000003$ & $0.9735 \pm 0.00013$ & $0.9995 \pm 0.0000085$ \\
    \end{tabular}
    \end{center}
  \end{table}
\label{sec:experiment}

\end{document}